\def\Id{{\rm Id}}
\newtheorem{theorem}{Theorem}[section]
\newtheorem{assumption}{Assumption}
\newtheorem{lemma}[theorem]{Lemma}
\newtheorem{proposition}[theorem]{Proposition}
\newtheorem{remark}[theorem]{Remark}
\numberwithin{equation}{section}
\begin{document}

\title[A Geometric Framework for Stochastic Shape Analysis]{A Geometric Framework for \\ Stochastic Shape Analysis}  

\author[A. Arnaudon]{Alexis Arnaudon}
\author[D. Holm]{Darryl D Holm}
\author[S. Sommer]{Stefan Sommer}

\address{AA, DH: Department of Mathematics, Imperial College, London SW7 2AZ, UK}
\address{SS: Department of Computer Science (DIKU), University of Copenhagen,
  DK-2100 Copenhagen E, Denmark}

\subjclass[2010]{60G99,70H99,65C30}
\maketitle

\begin{abstract}
We introduce a stochastic model of diffeomorphisms, whose action on a variety of data types descends to stochastic evolution of shapes, images and landmarks.
The stochasticity is introduced in the vector field which transports the data in the Large Deformation Diffeomorphic Metric Mapping (LDDMM) framework for shape analysis and image registration. 
The stochasticity thereby models errors or uncertainties of the flow in following the prescribed deformation velocity.
The approach is illustrated in the example of finite dimensional landmark manifolds, whose stochastic evolution is studied both via the Fokker-Planck equation and by numerical simulations. 
We derive two approaches for inferring parameters of the stochastic model from landmark configurations observed at discrete time points.  
The first of the two approaches matches moments of the Fokker-Planck equation to sample moments of the data, while the second approach employs an Expectation-Maximisation based algorithm using a Monte Carlo bridge sampling scheme to optimise the data likelihood.
We derive and numerically test the ability of the two approaches to infer the spatial correlation length of the underlying noise.

\end{abstract}

\setcounter{tocdepth}{1}


\section{Introduction}

In this work, we aim at modelling variability of shapes using a theory of stochastic perturbations consistent with the action of the diffeomorphism group underlying the Large Deformation Diffeomorphic Metric Mapping framework (LDDMM, see \cite{younes_shapes_2010}).  
In applications, such variability arise and can be observed, for example, when human organs are influenced by disease processes, as analysed in computational anatomy \cite{younes_evolutions_2009}.
Spatially independent white noise contains insufficient information to describe these large-scale variabilities of shapes. In addition, the coupling of the spatial correlations of the noise must be adapted to a variety of transformation properties of the shape spaces. 
The theory developed here addresses this problem by introducing spatially correlated transport noise which respects the geometric structure of the data. 
This method provides a new way of characterising stochastic variability of shapes using spatially correlated noise in the context of the standard LDDMM framework.

We will show that this specific type of noise can be used for all the data structures to which the LDDMM framework applies. 
The LDDMM theory was initiated by \cite{trouve_infinite_1995,christensen_deformable_1996,dupuis_variational_1998,MiTrYo2002,beg2005computing} based on the pattern theory of \cite{grenander_general_1994}. LDDMM models the dynamics of shapes by the action of diffeomorphisms (smooth invertible transformations) on shape spaces. 
It gives a unified approach to shape modelling and shape analysis that is valid for a range of structures such as landmarks, curves, surfaces, images, densities or even tensor-valued images. 
For any such data structure, the optimal shape deformations are described via the Euler-Poincar\'e equation of the diffeomorphism group, usually referred to as the EPDiff equation \cite{holm1998euler,HoMa2004,younes_evolutions_2009}. 
In this work, we will show how to obtain a stochastic
EPDiff equation valid for any data structure, and in particular for the finite dimensional spaces of landmarks.
For this, we will follow the LDDMM derivation in \cite{bruveris2011momentum} based on geometric mechanics \cite{marsden_introduction_1999,holm_geometric_2011}. This view is based on the existence of momentum maps, which are characterized by the transformation properties of the data structures for images and shapes.
These momentum maps persist in the process of introducing noise into the EPDiff equation, and they thereby preserve most of the technology developed for shape analysis in the deterministic context and in computational anatomy.

This work is not the first to consider stochastic evolutions in LDDMM.
Indeed, \cite{TrVi2012,vialard2013extension} and more recently \cite{marsland2016langevin} have already investigated the possibility of stochastic perturbations of landmark dynamics. 
In these works, the noise is introduced into the momentum equation, as though it was an external random force acting on each landmark independently. 
In \cite{marsland2016langevin}, an extra dissipative force was added to balance the energy input from the noise and to make the dynamics correspond to a certain type of heat bath used in statistical physics.
\cite{staneva_learning_2017,sommer_bridge_2017} considered evolutions
on the landmark manifold with stochastic parts being Brownian motion with respect to a Riemannian
metric and estimated parameters of the models from observed data.
Here, we will introduce Eulerian noise directly into the reconstruction relation used to find the deformation flows from the velocity fields, which are solutions of the EPDiff equation \cite{HoMa2004,younes_shapes_2010}. 
As we will see, this derivation of stochastic models is compatible with variational principles, preserves the momentum map structure and yields a stochastic EPDiff equation with a novel type of multiplicative noise, depending on the gradient of the solution, as well as its magnitude. 
This model is based on the previous works \cite{holm2015variational, arnaudon2016noise}, where, respectively, stochastic perturbations of infinite and finite dimensional mechanical systems were considered. 
The Eulerian nature of the noise discussed here implies that the noise correlation depends on the image position and not, as for example in \cite{TrVi2012,marsland2016langevin}, on the landmarks themselves.  
Consequently, the present method for the introduction of noise is compatible with any data structure, for any choice of its spatial correlation.  
We also mention the conference paper \cite{arnaudon2016stochastic2} in which the basic theory underlying the present work was applied to shape transformations of the corpus callosum. We discuss possibilities for including Lagrangian noise advected with the flow in contrast to the present Eulerian case, and possibilities for including non-stationary correlation statistics that responds to the evolution of advected quantities, in the conclusion of the paper.
\begin{figure}[htpb]
    \centering
    \subfigure[Low resolution and large noise correlation ($100$ landmarks, $6\times 6$ noise fields)]{\includegraphics[scale=0.6]{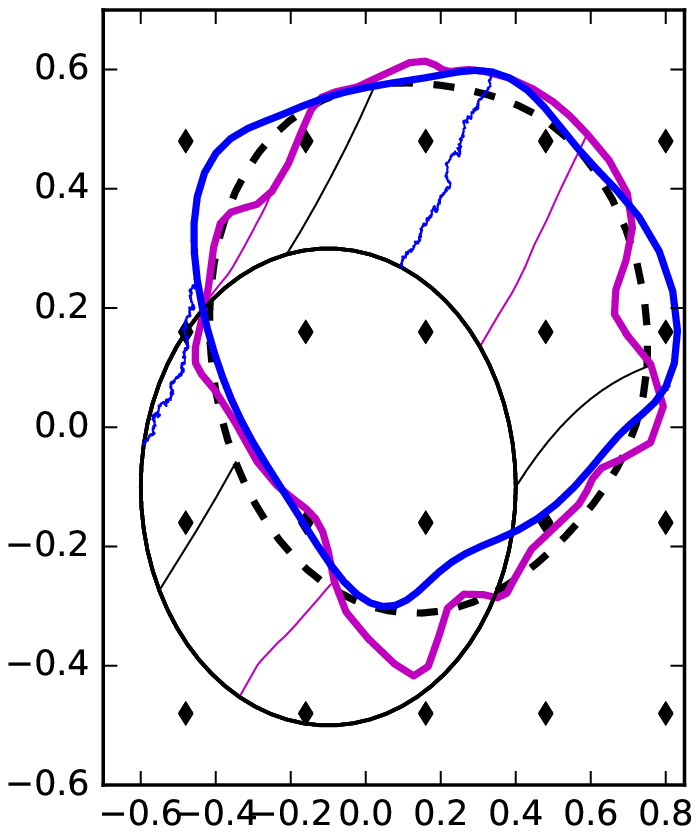}}
    \subfigure[High resolution and large noise correlation ($200$ landmarks, $6\times 6$ noise fields)]{\includegraphics[scale=0.6]{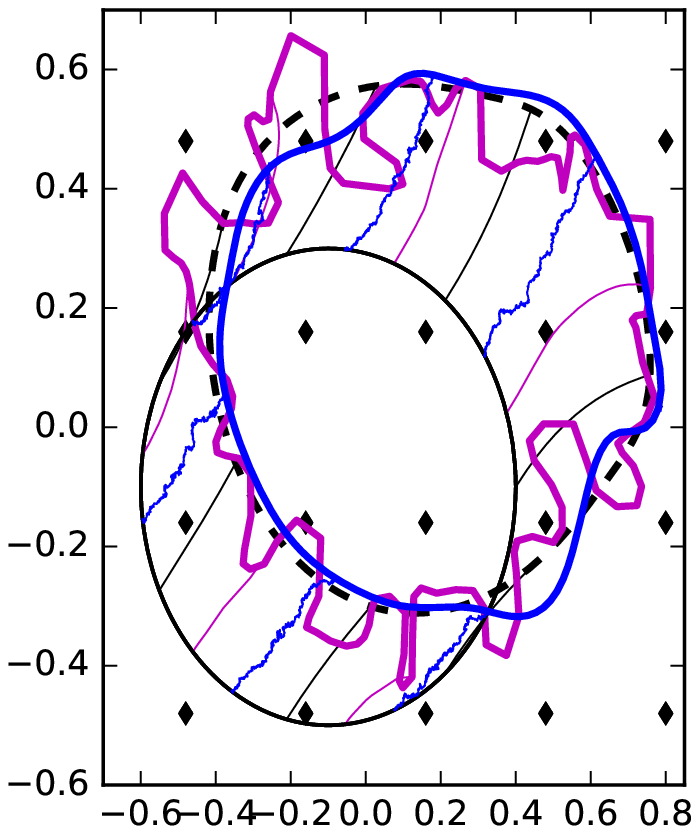}}
    \subfigure[High resolution and  small noise correlation ($200$ landmarks, $12\times 12$ noise fields) ]{\includegraphics[scale=0.6]{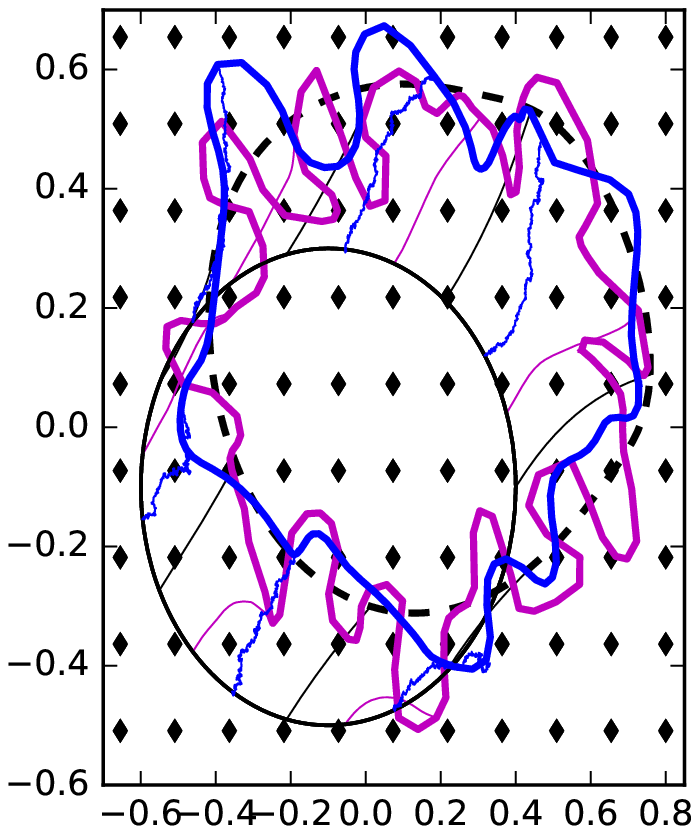}}
    \caption{\small In this figure, we compare the deterministic evolution of landmarks arranged in an ellipse (black line) with a translated ellipse as final position (black dashed line), to two different stochastically perturbed evolutions.
    The radius for the landmark kernel is twice their average initial distances. 
    In blue is the stochastic perturbation developed in this paper. The black dots represent the $J$ Eulerian noise fields arranged in a grid configuration.
    In magenta is the evolution resulting from additive noise in the momentum equation, different for each landmark but with the same amplitude as the Eulerian noise. 
    We run three initial value simulations to compare the limit of a large number of landmarks and small noise correlation. 
    The Eulerian noise model (blue) is robust to the continuum limit and can reproduce the general behaviour of the additive noise model. 
    Furthermore, the choice of the noise fields provides an additional freedom in parameterisation which will be studied and exploited in this work. 
    }
    \label{fig:comparison}
\end{figure}

To illustrate this framework and give an immediate demonstration of stochastic landmark dynamics, we display in Figure~\ref{fig:comparison} three experiments which compare the proposed model with a stochastic forcing model, of the type studied in \cite{TrVi2012}. 
The proposed model introduces the following stochastic Hamiltonian system for the positions of the landmarks, $\mathbf q_i$,  and their canonically conjugate momenta, $\mathbf p_i$, 
\begin{align}
    \begin{split}
    d \mathbf q_i &= \frac{\partial h}{\partial \mathbf p_i} dt + \sum_l\sigma_l(\mathbf q_i) \circ d W_t^l
    \,,\\
    d \mathbf p_i &= -\frac{\partial h}{\partial \mathbf q_i} dt 
- \sum_l \frac{\partial}{\partial \mathbf q_i}\left (\mathbf p_i \cdot\sigma_l(\mathbf q_i)\right )  \circ  d W_t^l \,.
    \end{split}
    \label{sto-Ham-intro}
\end{align}
In \eqref{sto-Ham-intro}, the $\sigma_i$ are prescribed functions of space which represent the spatial correlations of the noise. 
In Figure~\ref{fig:comparison}, the $\sigma_i$ fields are Gaussians whose variance is equal to twice their separation distance and locations are indicated by black dots. 
We compare this model with the system, 
\begin{align}
    d \mathbf q_i^\alpha &= \frac{\partial h}{\partial \mathbf p_i^\alpha} dt \qquad \mathrm{and} \qquad d \mathbf p_i^\alpha = -\frac{\partial h}{\partial \mathbf q_i^\alpha} dt + \sigma dW_t^i\, , 
    \label{sto-TrVi}
\end{align}
where $\sigma$ is a constant.  
In this case, the noise corresponds to a stochastic force acting on the landmarks, whose corresponding Brownian motion is different for each landmark. 
We show on the first panel of Figure~\ref{fig:comparison} that for a small number of landmarks and a large range of spatial correlations of the noise, both types of stochastic deformations in \eqref{sto-Ham-intro} and \eqref{sto-TrVi}  visually coincide. 
This is shown for a simple experiment in translating a circle (from the black circle to the black dashed circle). 
By doubling the number of landmarks (middle panel), the dynamics of \eqref{sto-TrVi} results in small-scale noise correlation (magenta), whereas the proposed model (blue) remains equivalent to the first experiment.
This figure illustrates shape evolution when the noise is Eulerian and independent of the data structure. 
Indeed, the limit of a large number of landmarks corresponds to a certain continuum limit, in this case corresponding to curve dynamics. 
Finally, in the right-most panel, we reduce the range of the spatial correlation of the noise by adding more noise fields. 
This arrangement allows us to qualitatively reproduce the dynamics of the equation \eqref{sto-TrVi} with the same number of landmarks as the amount of noise and its spatial correlation is similar in both cases. 
Indeed, the spatial correlations are dictated by the Eulerian functions $\sigma_l$ defined in fixed space for our model, and by the density of landmarks in the stochastically forced landmark model.

Modelling large-scale shape variability with noise is of interest for applications in computational anatomy, in which sources of variability include natural ageing, the influence of diseases such as Alzheimer's disease, and intra-subject population scale variations.
In the LDDMM context, these effects are sometimes modelled using the random orbit model \cite{miller_statistical_1997}. The random orbit approach models variability in the observed data by using an ensemble of initial velocities in matching a template to a set of observations via geodesic flows, see \cite{vaillant_statistics_2004}. 
The randomness is confined to the initial
velocity as opposed to the evolving stochastic processes used in the present work. A prior can be defined by assuming a
distribution of the initial velocities, and 
Bayesian approaches can then be used for inference of the template shape as well as 
additional unknown parameters
\cite{allassonniere_towards_2007,ma_bayesian_2008,zhang_bayesian_2013}.
The stochastic model developed here can also be applied to model random warps and to generate distributions used in Bayesian shape modelling, and for coupling warps and functional variations such as those in \cite{raket2014nonlinear,kuhnel_most_2017}.
Indeed, because the proposed probabilistic approach assigns a likelihood to random deformations, the model can be used for general likelihood-based inference tasks.

In the present model, the observed shape variability indicates the required spatial correlation of the noise, which must be specified or inferred for each application. As this correlation is generally unknown, estimating the parameters of the correlation structure becomes an important part of the framework.
We will address the problem of inferring the noise parameters by considering two different methods in the context of the representation of shapes by landmarks:
The first method is based on estimating the time evolution of the probability distribution of each landmark. 
We will derive a set of differential equations approximating the time evolution of the complete distribution via its first moments. 
We can then solve the inverse problem of estimating the noise correlation from known initial and final distribution of landmarks by minimization of a certain cost function, solved using a genetic algorithm. 
The second method is based on an Expectation-Maximisation (EM) algorithm which can
infer unknown parameters for a parametric statistical model from observed data.
In this context, since only initial and final landmarks positions are observed, the full stochastic trajectories are regarded as missing information. 
For this algorithm, we need to estimate the likelihood of stochastic paths connecting sets of observed landmarks. 
We achieve this by adapting the theory of diffusion bridges to the stochastic landmark equation. 
As discussed in the concluding remarks, inference methods for other data structures, in particular for infinite dimensional shape representations, are not treated in this paper and left as outstanding problems for future work.

Finally, we wish to mention that multiple additional approaches for shapes analysis exist outside the LDDMM context, particularly exemplified by the Kendall shape spaces \cite{kendall1984shape}, see also \cite{dryden2016statistical}. We focus this paper on the LDDMM framework leaving possibilities for extending the presented methods to include stochastic dynamics and noise inference in other shape analysis approaches to future work.

\subsection*{Plan of this work}

We begin by developing a general theory of stochastic perturbations for inexact
matching in section~\ref{theory}. 
We then focus on exact landmark matching in section~\ref{landmark-matching}, which is the simplest example of this theory. 
In particular, we derive the Fokker-Planck equation in section~\ref{FP-section}
and diffusion bridge simulation in section~\ref{bridges-section}.
In section~\ref{estimate-correlation}, we describe the two methods we use for estimating parameters of the noise from observations. 
The Fokker-Planck based method is discussed in section~\ref{moment-method} and the
Expectation-Maximisation algorithm is treated in section~\ref{EM}. 
We end the paper with numerical examples in section~\ref{landmark-example}, in which
we investigate the effect of the noise on landmark dynamics and compare the two methods for estimating the noise amplitude.

\section{Stochastic Large Deformation Matching}\label{theory}

In this section, we will first review the geometrical framework of LDDMM, following \cite{bruveris2011momentum}, and then introduce noise following \cite{holm2015variational} to preserve the geometrical structure of LDDMM. 
The key ingredient for both topics is the momentum map, which we will use as the main tool for reducing the infinite dimensional equation on the diffeomorphism group to equations on shape spaces.

\subsection{The Deterministic LDDMM Model}

Here, we will briefly review the theory of reduction by symmetry, as applied to the LDDMM context, following the presentation of \cite{bruveris2011momentum}.
We detail the proof of the formulas below in the next section when we include noise. 
Define an energy functional $E$ by
\begin{align}
    E(u_t) = \int_0^1 l( u_t) dt + \frac{1}{2\lambda^2} \|g_1.I_0 - I_1\|^2\, ,
    \label{E-functional}
\end{align}
where $I_0,I_1\in V$ are shapes represented in a vector space $V$ on which the diffeomorphism group $\mathrm{Diff}(\mathbb R^d)$
acts, $u_t$ is a time-dependent vector field, and
$\lambda$ is a weight, or tolerance, which allows the matching to be inexact.
The flow $g_t\in \mathrm{Diff}(\mathbb R^d)$ corresponding to $u_t$  is found by solving the reconstruction relation
\begin{align}
    \partial_t g_t = u_t g_t\, ,    
    \label{rec-det}
\end{align}
and $I_0$ is matched against $I_1$ through the action $g_1.I_0$ of $g_1$ on $s_0$.
The vector field $u_t$ can be considered an element of the Lie algebra $\mathfrak X(\mathbb R^d)$. 
In the case of $I_0,I_1$ being images $I: \mathbb R^d\to \mathbb R$, the action is by push-forward, $g.I=I\circ g^{-1}$, and when $I$ represents $N$ landmarks with positions $\mathbf q_i\in \mathbb R^d$, the action is by evaluation $g.\mathbf q = (g(\mathbf q_1),\ldots, g(\mathbf q_N))$ (see \cite{bruveris2011momentum} for more details).
The group elements can act on various additional shape structures such as tensor fields.

\begin{remark}[Nonlinear shape structures]
    This framework can be extended to structures that are not represented by a vector space $V$, such as curves or surfaces. We leave this extension for future work.
\end{remark}

Using the calculus of variations for the functional \eqref{E-functional} results in the equation of motion for $u_t$ of the form
\begin{align}
    \frac{d}{dt}\frac{\delta l}{\delta u} + \mathrm{ad}^*_{u_t} \frac{\delta l}{\delta u}=0\, ,  
    \label{EP}
\end{align}
which is called the Euler-Poincar\'e equation. 
The operation $\mathrm{ad}^*$ is the coadjoint action of the Lie algebra of vector fields associated to the diffeomorphism group. The operation $\mathrm{ad}^*$ acts on the variations ${\delta l}/{\delta u}$, which are 1-form densities, in the dual of the Lie algebra of vector fields, under the $L^2$ pairing. 
When $l(u)$ is a norm, this equation is the geodesic equation for that norm, in the case that $\lambda=\infty$; that is, with exact matching. 
We will focus on this case later in section~\ref{landmark-matching} when discussing landmark dynamics.
Here, the inexact matching term constrains the form of the momentum $m= \frac{\partial l}{\partial u}$ to depend on the geodesic path. 
Following the notation of \cite{bruveris2011momentum}, the momentum map is defined as
\begin{align}
    m(t)= - \frac{1}{\lambda^2} J_t^0\diamond ( g_{t,1}(J_1^0-J_1^1)^\flat )\, ,
    \label{momentum}
\end{align}
where $g_{t,s}$ is the solution of \eqref{rec-det} at time $t$ with initial conditions at time $s$, while $J_t^0 = g_{t,0} I_0$ and $J_t^1= g_{t,1} I_1$. The value
$J_1^0$ corresponds to the initial shape, pushed forward to time $t=1$, and $J_1^1= I_1$ is the target shape. 

The operations $\diamond$ and $\flat$ in the momentum map formula  \eqref{momentum} are defined, as follows.
The Lagrangian $l$ in \eqref{E-functional} may be taken as kinetic energy, which defines a scalar product and norm $l(u) = \langle u,L u\rangle_{L^2}= \|u\|^2_{L^2}$ on the space of vector fields $\mathfrak X(\mathbb R^d)$. 
The quantity $Lu={\delta l}/{\delta u}$ may then be regarded as the momentum conjugate to the velocity $u$.
Similarly, for the image data space $V$, we define the dual space $V^*$  with the $L^2$ pairing $\langle f,I\rangle = \int_\Omega f(x)I(x) dx$, where $f\in V^*$ and $\Omega$ is the image domain $\Omega \in \mathbb R^d$. 
This identification defines the $\flat$ operator as $\flat: V\to V^*$.
When an element $g_t$ of the diffeomorphism group acts on $V$ by push-forward, $I_t=g_t.I_0 = (g_t)_*I_0$, the corresponding infinitesimal action of the velocity $u$ in the Lie algebra of vector fields $u\in\mathfrak X(\mathbb R^d)$ is given by  $u.I:= [g_t^*\frac{d}{dt}(g_t)_*I_0]_{t=0}$. 
In terms of this infinitesimal action, we can then define the operation $\diamond:V\times V^*\to \mathfrak g^*$ as
\begin{align}
    \langle I\diamond f, u\rangle_{\mathfrak g \times \mathfrak g^*}:= \langle f, u.I\rangle_{V\times V^*}\, . 
\end{align}
A detailed derivation of this formula for the momentum map can be found in \cite{bruveris2011momentum}. 

\begin{remark}[Solving this equation]
We will just add here the important remark that the relation \eqref{momentum} introduces nonlocality into the problem, as the momentum implicitly depends on the value of the group at later times. 
This is exactly what is needed in order to solve the boundary value problem coming from the matching of images $I_1$ and $I_0$. 
The optimal vector field can be found with a shooting method or a gradient descent algorithm on the energy functional \eqref{E-functional}, see \cite{beg2005computing}.
For more information about the relation of the momentum map approach of \cite{bruveris2011momentum} to the LDDMM approach of \cite{beg2005computing}, see \cite{bruveris2015geometry}.
\end{remark}

\subsection{Stochastic Reduction Theory}

The aim here is to introduce noise in the Euler-Poincar\'e equation \eqref{EP} while preserving the momentum map \eqref{momentum}; so that the noise descends to the shape spaces.
Following \cite{holm2015variational}, we introduce noise in the reconstruction relation \eqref{rec-det} and proceed with the theory of reduction by symmetry.  
We will focus on a noise described by a set of $J$ real-valued independent Wiener processes $W^i_t$ together with $J$ associated vector fields $\sigma_i\in\mathfrak X(\mathbb R^d)$ on the data domain.
We will later discuss particular forms of these fields and methods for estimating unknown parameters of the fields in the context of landmark matching.  

\begin{remark}[Dimension of the noise]
  \label{rem:dimension_noise}
We proceed here with a finite number of $J$ associated vector fields and finite dimensional noise while leaving possible extension to infinite dimensional noise such as done by \cite{vialard2013extension} for later works.
\end{remark}

We replace the reconstruction relation \eqref{rec-det} by the following stochastic process
\begin{align}
  dg_t = u_t g_t dt + \sum_{l=1}^J \sigma_l g_t \circ dW^l_t\, ,    
    \label{rec-sto}
\end{align}
where $\circ$ denotes Stratonovich integration. 
That is, the Lie group trajectory $g_t$ is now a stochastic process.
With this noise construction, the previous derivations of \eqref{EP} and \eqref{momentum} in \cite{bruveris2011momentum} still apply and we obtain the following result for the stochastic vector field, $u_t$. 

\begin{proposition}
    Under stochastic perturbations of the form \eqref{rec-sto}, the momentum map \eqref{momentum} persists, and the Euler-Poincar\'e equation takes the form 
\begin{align}
  d \frac{\delta l}{\delta u} + \mathrm{ad}^*_{u_t} \frac{\delta l}{\delta u}dt+ \sum_{l=1}^J \mathrm{ad}^*_{\sigma_l} \frac{\delta l}{\delta u}\circ dW^l_t=0\, .  
    \label{EP-sto}
\end{align}
\end{proposition}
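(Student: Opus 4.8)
The plan is to repeat the constrained variational derivation of \cite{bruveris2011momentum} essentially verbatim, the only new ingredient being that every time differential is now a Stratonovich differential. The fact that makes this work is that Stratonovich calculus obeys the ordinary Leibniz and chain rules, so the formal manipulations of the deterministic case survive unchanged once one checks that the variation $\delta$ commutes with the stochastic differential $d$. Accordingly I would introduce the stochastic reduced velocity $\mathrm{d}\xi_t := dg_t\, g_t^{-1} = u_t\, dt + \sum_l \sigma_l\circ dW^l_t$ and consider variations $\delta g_t = \eta_t g_t$ with $\eta_t\in\mathfrak g$ that fix the prescribed fields $\sigma_l$ and vanish at the fixed endpoint $t=0$ (where $g_0=\mathrm{id}$).

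The first computation is the constrained variation formula. Equating the mixed differentials $d(\delta g_t) = \delta(dg_t)$ — legitimate precisely because the Stratonovich product rule holds — and cancelling $g_t$ on the right yields $\delta u_t\, dt = d\eta_t - \mathrm{ad}_{u_t}\eta_t\, dt - \sum_l \mathrm{ad}_{\sigma_l}\eta_t\circ dW^l_t$. Note that the noise fields enter through $\mathrm{ad}_{\sigma_l}$ exactly because they sit inside the constrained velocity $\mathrm{d}\xi_t$, even though they are not themselves varied; this is the mechanism by which the $dW^l_t$ terms will appear in the final equation.

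Next I would impose $\delta E = 0$. Pairing $\delta l/\delta u$ against the variation above and using the definition of $\mathrm{ad}^*$ on the $\mathrm{ad}_{u_t}$ and $\mathrm{ad}_{\sigma_l}$ terms converts them into $\mathrm{ad}^*_{u_t}\frac{\delta l}{\delta u}\, dt$ and $\sum_l\mathrm{ad}^*_{\sigma_l}\frac{\delta l}{\delta u}\circ dW^l_t$. The $d\eta_t$ term is handled by Stratonovich integration by parts, $\int_0^1\langle \tfrac{\delta l}{\delta u}, d\eta\rangle = [\langle\tfrac{\delta l}{\delta u},\eta\rangle]_0^1 - \int_0^1\langle d\tfrac{\delta l}{\delta u},\eta\rangle$, which moves the differential onto the momentum. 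Collecting the interior contributions and invoking the arbitrariness of $\eta_t$ on $(0,1)$ produces exactly the stochastic Euler-Poincar\'e equation \eqref{EP-sto}.

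For the persistence of the momentum map I would argue structurally rather than recompute. The endpoint term $[\langle\tfrac{\delta l}{\delta u},\eta\rangle]_0^1$ reduces to its value at $t=1$ (since $\eta_0=0$), and the variation of the matching term $\frac{1}{2\lambda^2}\|g_1.I_0-I_1\|^2$ is identical to the deterministic case because that term is untouched by the noise; setting their sum to zero fixes $m(1)$ through the $\diamond$ pairing exactly as before. Equivalently, \eqref{EP-sto} is the coadjoint-transport equation $dm + \mathrm{ad}^*_{\mathrm{d}\xi_t}m = 0$ for the stochastic velocity, whose solutions keep $m$ in the image of the equivariant momentum map; since $\diamond$ and $\flat$ are geometric operations unaffected by the noise, substituting the stochastic flow $g_{t,s}$ reproduces \eqref{momentum}. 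I expect the main obstacle to be the rigorous justification of the two formal steps in the Stratonovich setting — the commutation of $\delta$ with $d$ and the stochastic integration by parts — together with a careful specification of the admissible class of (adapted) variations $\eta_t$; once the Stratonovich chain rule is secured, everything else is the deterministic bookkeeping of \cite{bruveris2011momentum}.
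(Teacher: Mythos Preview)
Your argument is correct but the route differs from the paper's. You run a standard Euler--Poincar\'e derivation: with the right-translated variation $\eta_t=\delta g_t\,g_t^{-1}$ you obtain the stochastic Lin constraint $\delta u_t\,dt = d\eta_t - \mathrm{ad}_{u_t}\eta_t\,dt - \sum_l\mathrm{ad}_{\sigma_l}\eta_t\circ dW^l_t$, then pair with $\delta l/\delta u$, integrate by parts, and read off \eqref{EP-sto}; persistence of the momentum map you infer from the untouched endpoint term. The paper instead works with the \emph{left}-translated variation $g^{-1}\delta g$ and proves directly that $d(g^{-1}\delta g)=\mathrm{Ad}_g\,\delta u\,dt$, i.e.\ that the noise contributions \emph{cancel} in this specific quantity. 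This is exactly Lemma~2.5 of \cite{bruveris2011momentum}, and its survival is what immediately forces the momentum map formula \eqref{momentum} to be identical to the deterministic one. The EP equation is then obtained not by a variational argument but by differentiating the explicit representation $\frac{\delta l}{\delta u}=\mathrm{Ad}^*_{g^{-1}}\big(I_0\diamond(g_1^{-1}\pi)\big)$ in time via $d\,\mathrm{Ad}^*_{g^{-1}}=-\,\mathrm{ad}^*_{dg\,g^{-1}}\mathrm{Ad}^*_{g^{-1}}$ and substituting $dg\,g^{-1}=u\,dt+\sum_l\sigma_l\circ dW^l_t$. Your approach makes the mechanism by which the noise enters \eqref{EP-sto} transparent (it comes straight from the Lin constraint), while the paper's approach makes the persistence of the momentum map transparent (the key variational identity is literally noise-free). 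Both are sound; the paper's version sidesteps any need to discuss stochastic integration by parts or the admissible class of adapted variations, since the EP equation follows from time-differentiating a known expression rather than from stationarity.
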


\begin{proof}
    We first show that the momentum map formula \eqref{momentum} persists in the presence of noise. 
    The key step in its computation is to prove the formula in the lemma $2.5$ of \cite{bruveris2011momentum} which is given by $\partial_t (g^{-1} \delta g ) = \mathrm{Ad}_g\delta u$, where $\mathrm{Ad}$ is the adjoint action on the diffeomorphism group on its Lie algebra. 
    We first compute the variations of \eqref{rec-sto}
    \begin{align*}
      \delta d g_t = \delta u g dt + u \delta g dt + \sum_{l=1}^J \sigma_l \delta g \circ dW_t^l\, ,
    \end{align*}
    and then prove this formula by a direct computation
    \begin{align*}
        d ( g^{-1} \delta g) &= - g^{-1} dg g^{-1}\delta g  + g^{-1} d \delta g     \\
	&= - g^{-1}( u dt + \sum_{l=1}^J \sigma_l\circ dW_t^l ) \delta g  + g^{-1}( \delta u g dt + u \delta g dt + \sum_{l=1}^J \sigma_l \delta g \circ dW_t^l) \\
        &=  g^{-1} \delta u g\,  dt\\
        &:= \mathrm{Ad}_g \delta u\,  dt \, .
    \end{align*}
    This key formula is the same as in \cite{beg2005computing} and \cite{bruveris2011momentum} for the deterministic case. In particular, it does not explicitly depend on the Wiener processes $W_t^l$. This ensures that the momentum map formula \eqref{momentum} remains the same as in the deterministic case. 
    The last step of the proof is to derive the stochastic Euler-Poincar\'e equation \eqref{EP-sto}. 
    This is done by computing the stochastic evolution of the momentum, given by
    \begin{align*}
        \frac{\delta l}{\delta u} =\mathrm{Ad}^*_{g^{-1}} ( I_0\diamond (g_1^{-1} \pi)), \quad \mathrm{where} \quad \pi=  \frac{1}{\lambda^2} (g_iI_0 - I_1)^\flat\, .  
    \end{align*}
    The only time dependence is in the coadjoint action, and, by the standard formula 
    \begin{align*}
        d \mathrm{Ad}^*_{g^{-1}} \eta = - \mathrm{ad}^*_{dg g^{-1}} \mathrm{Ad}^*_{g^{-1}} \eta\, ,
    \end{align*}
    we obtain the result
    \begin{align*}
        d\frac{\delta l }{\delta u} &= - d \mathrm{Ad}^*_{g^{-1}} ( I_0\diamond (g_1^{-1} \pi))\\
         &=  \mathrm{ad}^*_{dg g^{-1}}\mathrm{Ad}^*_{g^{-1}} ( I_0\diamond (g_1^{-1} \pi))\\
         &=  \mathrm{ad}^*_{dg g^{-1}}\frac{\delta l }{\delta u}\, ,
    \end{align*}
    where we have used the stochastic reconstruction relation \eqref{rec-sto} in the form \[dg g^{-1}= u dt + \sum_{l=1}^N \sigma_l \circ dW_l^t\,.\]
\end{proof}

In summary, this stochastic perturbation of the LDDMM framework preserves the form of momentum map \eqref{momentum}, although it does affect the reconstruction relation \eqref{rec-sto} and the Euler-Poincar\'e equation \eqref{EP-sto}. 
As shown in \cite{bruveris2011momentum}, various data structures fit into this framework including landmarks, images, shapes, and tensor fields. 
In practice, for inexact matching, a gradient descent algorithm can be used to minimise the energy functional \eqref{E-functional}. 
The noise will only appear in the evaluation of the matching cost via the reconstruction relation. 
The algorithm of \cite{beg2005computing} then directly applies, provided the stochastic reconstruction relation can be integrated with enough accuracy. 
We will not treat the full inexact matching problem here. Instead, we will study the simpler case of exact matching, where the energy functional consists only of the kinetic term.

The exact matching problem in computational anatomy possesses many parallels with the geometric approach to classical mechanics and ideal fluid dynamics. 
In particular, Poincar\'e's fundamental paper in 1901, which started the field of geometric mechanics in finite dimensions, has recently been generalised to the stochastic case \cite{cruzeiro2017momentum}. In addition, the fundamental analytical properties of Euler's fluid equations have been shown to extend to the stochastic case in \cite{holm2017euler}.  

We expect that these advances in the analysis of SPDEs occurring in fluid dynamics and other parallel fields will inform computational anatomy, and eventually will apply to infinite-dimensional representations of shape. One reason for our optimism is that the fundamental analytical properties of incompressible Euler fluid dynamics in three dimensions have already been found in \cite{holm2017euler} to persist under the introduction of the present type of stochasticity. Namely, the properties of local-in-time existence and uniqueness, as well as the Beal-Kato-Majda criterion for blow-up for the deterministic 3D Euler fluid motion equations, all persist in detail for stochastic Euler fluid motion, under the introduction of the type of stochastic Lie transport by cylindrical Stratonovich noise that we have proposed here for stochastic shape analysis. 

The persistence of deterministic analytical properties in passing to the SPDEs governing stochastic 3D incompressible continuum fluid dynamics is a type of infinite-dimensional result that has not yet been proven for the evolution of shapes. The corresponding results in the analysis of SPDEs for embeddings, immersions and curves representing data structures for shape evolution, for example, have not yet been discovered, and they remain now as outstanding open problems. However, we believe that the prospects for successfully performing the necessary analysis are hopeful because the type of noise we propose here preserves the fundamental properties of diffeomorphic flow for both continuum fluids and shapes. For example, the momentum maps for the deterministic and stochastic evolution of shapes of any data structure are identical. Thus, the only difference in the present approach from the deterministic case is that the diffeomorphic time evolution of the various shape momentum maps proceeds by the action of Lie derivative by a stochastic vector field, instead of a deterministic one. Since the stochastic part of the vector field is as smooth as we wish, we are hopeful that the analytical properties for the deterministic evolution of a large class of infinite-dimensional representations of shape (such as smooth embeddings) will also persist under the introduction of the type of stochastic transport proposed here. For the remainder of the paper, we restrict ourselves to the treatment of stochastic landmark dynamics.

\section{Exact stochastic Landmark Matching}\label{landmark-matching}

In this section, we apply the previous ideas of stochastic deformation of LDDMM to exact matching with landmark dynamics. 
This is the simplest data structure in the LDDMM framework, and it will serve to give interesting insights into the effect of the noise in this context. 
Since exact matching means that the energy functional contains only a kinetic energy, the optimal vector field is found from a boundary value problem with the Euler-Poincar\'e equation \eqref{EP}. 
For exact matching, the singular momentum map for landmarks takes the simple familiar form for the reduction of the EPDiff equation (see \cite{CaHo1993,HoMa2004})
\begin{align}
    \mathbf m(x,t)= \sum_{i=0}^N \mathbf p_i(t) \delta (\mathbf x-\mathbf q_i(t))\, , 
    \label{momap}
\end{align}
for $N$ landmarks with momenta $\mathbf p_i$ and positions $\mathbf q_i$, with $i=1,2,\dots,N$. 
A direct substitution of $u= K*m$ into the stochastic Euler-Poincar\'e equation \eqref{EP-sto} gives the stochastic landmark equations in \eqref{sto-Ham}. 
Here, $K$ is a given kernel corresponding to the Green's function of the differential operator $L$ used to construct the Lagrangian. 
Below, we take a different approach and proceed from a variational principle in which the stochastic landmark dynamics is constrained. 
We refer the interested reader to for example \cite{jacobs2014higher} for a detailed exposition of this derivation in the deterministic context. 

\subsection{Stochastic Landmarks dynamics}

Recall that for $N$ landmarks in $\mathbb R^d$, the diffeomorphism group elements $g$ act on the landmarks by evaluation of their position $g.\mathbf q= (g(q_1),\ldots, g(q_N))$, and the associated momentum map is \eqref{momap}.
The original action functional \eqref{E-functional} can be equivalently written as a constrained variational principle where the $\mathbf p_i$ play the role of Lagrange multipliers enforcing the stochastic reconstruction relation \eqref{rec-sto}. 
This procedure is based on the Clebsch action principle, which for landmark dynamics has been studied for one dimensional motion of landmarks on the real line in \cite{HoTy2016}
\begin{align}
    S(\mathbf u, \mathbf q,\mathbf p) = \iint l(\mathbf u)\, d\mathbf x\, dt + \sum_i \int  \mathbf p_i\cdot\left (  \circ d\mathbf q_i - \mathbf u( \mathbf q_i)\,  dt + \sum_l \sigma_l(\mathbf q_i)\circ dW_t^l\right )\,  .
    \label{S-functional}
\end{align}
Notice that only the Lagrangian depends on the spatial (Eulerian) variable $\mathbf x$ on the image domain. 
We now use the singular momentum map \eqref{momap} which provides us with the relation 
\begin{align*}
    2\,l(\mathbf u)= \int \mathbf m(\mathbf q,\mathbf p)(\mathbf x)\cdot  \mathbf u(\mathbf x) d\mathbf x = \sum_i \mathbf p_i \cdot \mathbf u(\mathbf q_i)\, .
\end{align*}
This relation reduces the action functional \eqref{S-functional} to the finite dimensional space of landmarks. 
We arrive at the action integral
\begin{align}
    S(\mathbf q,\mathbf p) &=
    \int h(\mathbf q,\mathbf p) \,dt + \sum_i \int \mathbf p_i\cdot\left (   \circ \,d\mathbf q_i+ \sum_{l}  \sigma_l(\mathbf q_i) \circ dW_t^l\right )\,  ,
    \label{S-ham}
\end{align}
where the Hamiltonian only depends on the landmark variables, as  
\begin{align}
    h(\mathbf q,\mathbf p) = \frac12 \sum_{i,j=1}^N (\mathbf p_i\cdot \mathbf p_j) K(\mathbf q_i-\mathbf q_j)\, .
    \label{h-landmarks}
\end{align}
The action integral in \eqref{S-ham} involves the phase space Lagrangian \eqref{h-landmarks} and the  stochastic potential, given by
\begin{align}
    \phi_l(\mathbf q,\mathbf p):= \sum_i \mathbf p_i\cdot \sigma_l(\mathbf q_i)\, . 
    \label{stoch-pot}
\end{align}
Taking free variations of \eqref{S-ham} gives the stochastic Hamilton equations in the form
\begin{align}
    \begin{split}
    d \mathbf q_i &= \frac{\partial h}{\partial \mathbf p_i} dt + \sum_l \frac{\partial \phi_l}{\partial \mathbf p_i} \circ d W_t^l\,,\\
    d \mathbf p_i &= -\frac{\partial h}{\partial \mathbf q_i} dt - \sum_l\frac{\partial \phi_l}{\partial \mathbf q_i} \circ  d W_t^l\, . 
    \end{split}
    \label{sto-Ham}
\end{align}
Explicitly, we have
\begin{align}
    \begin{split}
        d \mathbf q_i &= \sum_j \mathbf p_jK(\mathbf q_i-\mathbf q_j) dt + \sum_l\sigma_l(\mathbf q_i) \circ d W_t^l
        \,,\\
        d \mathbf p_i &= -\sum_j \mathbf p_i\cdot \mathbf p_j\frac{\partial }{\partial \mathbf q_i}K(\mathbf q_i-\mathbf q_j)\ dt - \sum_l \frac{\partial}{\partial \mathbf q_i}\left (\mathbf p_i \cdot\sigma_l(\mathbf q_i)\right )  \circ  d W_t^l \,.
    \end{split}
    \label{sto-Ham-explicit}
\end{align}
In coordinates, the stochastic equations \eqref{sto-Ham} become
\begin{align}
    \begin{split}
    d q_i^\alpha &= \frac{\partial h}{\partial p_i^\alpha} dt + \sum_l \sigma_l^\alpha(\mathbf q_i) \circ d W_t^l   \,, \\
    d p_i^\alpha &= -\frac{\partial h}{\partial q_i^\alpha} dt - 
        \sum_{l,\beta}\frac{\partial\sigma_l^\beta(\mathbf q_i)}{\partial q_i^\alpha } p_i^\beta \circ  d W_t^l \, ,
    \end{split}
    \label{sto-Ham-coor}
\end{align}
where $\alpha, \beta$ run through the domain directions, $\alpha,\beta=1,\ldots,d$. 

In order to have a unique strong solution of this stochastic differential equation, we need the drift and volatility to be Lipschitz functions with a linear growth condition after converting to It\^o form, and for the volatility to be uniformly bounded, see \cite{karatzas1991brownian}. 
This requirement is achieved when the functions $\sigma_l$ are twice continuously differentiable and uniformly bounded in the position variable. 
The latter property will hold with these functions being $C^2$ kernel functions.
The particular form of the stochastic potential in \eqref{stoch-pot} arises from the Legendre transformation of \eqref{S-functional}. The solutions of \eqref{sto-Ham-coor} represent the singular solutions of the stochastic EPDiff equation, corresponding to a stochastic path in the diffeomorphism group. 
In previous works such as \cite{TrVi2012,vialard2013extension,marsland2016langevin}, noise has been introduced additively and only in the momentum equation, corresponding to a stochastic force.
Also, the noise has typically been taken to be different for each landmark, and one can interpret it having been attached to each landmark. 
In the present case, the noise is not additive and the Wiener processes are not related to the landmarks, but to the domain of the image.  
Nearby landmarks will thus be affected by a similar noise, controlled by the spatial correlations of the noise. 
We refer to Figure~\ref{fig:comparison} in the Introduction for a numerical experiment demonstrating this effect. 

\begin{remark}[Geometric noise]
The geometric origin of the Hamiltonian stochastic equations in \eqref{sto-Ham} deserves a bit more explanation. 
    In the position equation \eqref{sto-Ham}, the noise arises as the infinitesimal transformation by the action of the stochastic vector field in \eqref{rec-sto}, namely $dg g^{-1}= u dt + \sum_l  \sigma_l \circ dW_l^t$, on the manifold of positions of the landmarks, which is generated by the $J$ stochastic potentials, $\Phi_l(\mathbf q_i,\mathbf p_i):= \mathbf p_i \cdot\sigma_l(\mathbf q_i) )$. Since this stochastic Hamiltonian is linear in the canonical momenta, the noise perturbing the evolution of the landmark positions is independent of the landmark momenta. On the other hand, the noise in the momentum equations arises as the \emph{cotangent lift} of the action of the stochastic vector field $dg g^{-1}$ on the positions of the landmarks. This cotangent lift determines the action on the momentum fibres attached to the perturbed position of each of the landmarks in phase space. The cotangent lift transformation is given explicitly by the product of the momentum and the gradient of the spatial fields $\sigma_l$ with respect to the position $\mathbf q_i$ of the $i$-th landmark. 
This difference increases the effect of the noise in regions where the $\sigma_l$ fields have large spatial gradients, provided the landmarks are moving rapidly enough for their momenta to be non-negligible. 
We will see in the example that in certain cases this balance in the product of the momentum and the spatial gradient of the noise parameters can significantly affect the dynamics of the landmarks. 
\end{remark}

\subsection{The Fokker-Planck Equation}\label{FP-section}

In this section, we study the evolution of the probability density function of the stochastic landmarks by using the Fokker-Planck equation. 
This study is possible in the case of landmarks because the associated phase space is finite dimensional. 

We will denote the probability density by $\mathbb P(\mathbf q,\mathbf p,t)$, on the phase space $\mathbb R^{2dN}$ at time $t$. 
The Fokker-Planck equation can be computed using standard procedures and is given in the following proposition. 

\begin{proposition}
    The Fokker-Planck equation associated to the stochastic process \eqref{sto-Ham} for the probability distribution $\mathbb P:\mathbb R^{2dN}\times \mathbb R\to \mathbb R$ is given by 
\begin{align}
    \frac{d}{dt}\mathbb P(\mathbf q,\mathbf p,t )  = \{h,\mathbb P\}_\mathrm{can} + \frac12 \sum_l \{\phi_l,\{\phi_l,\mathbb P\}_\mathrm{can}\}_\mathrm{can}:= \mathscr L^* \mathbb P\, ,
    \label{FP}
\end{align}
where $\{F,G\}_\mathrm{can} = \nabla F^T \mathbb J \nabla G$ is the canonical Poisson bracket with $\mathbb J=
    \begin{psmallmatrix}
        0 &1\\
        -1 &0 
    \end{psmallmatrix}$
    and $\phi_l(\mathbf q,\mathbf p)= \sum_i \mathbf p_i\cdot \sigma_l(\mathbf q_i)$ are the stochastic potentials.
    This formula also defines the forward Kolmogorov operator, $\mathscr L^*$. 
\end{proposition}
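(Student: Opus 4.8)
The plan is to read off the backward generator of the Stratonovich diffusion \eqref{sto-Ham} directly from its geometric structure and then take the formal $L^2$ adjoint to obtain the forward Kolmogorov operator. Writing $z=(\mathbf q,\mathbf p)$ for the canonical coordinate on $\mathbb R^{2dN}$, I first observe that the drift in \eqref{sto-Ham} is the Hamiltonian vector field $X_h$ generated by $h$ and that each noise coefficient is the Hamiltonian vector field $X_{\phi_l}$ generated by the stochastic potential $\phi_l$; concretely $X_h[f]=\{f,h\}_{\mathrm{can}}$ and $X_{\phi_l}[f]=\{f,\phi_l\}_{\mathrm{can}}$, since $\mathbb J\nabla h$ and $\mathbb J\nabla\phi_l$ are exactly the vectors appearing on the right-hand sides of the position and momentum equations. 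Thus \eqref{sto-Ham} takes the compact form $dz = X_h\,dt + \sum_l X_{\phi_l}\circ dW_t^l$.

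For a Stratonovich equation of this type the backward generator is simply $\mathcal L = X_h + \tfrac12\sum_l X_{\phi_l}^2$, with no Itô--Stratonovich correction drift appearing at the level of the generator, because Stratonovich calculus obeys the ordinary chain rule. Hence I would write $\mathcal L f = \{f,h\}_{\mathrm{can}} + \tfrac12\sum_l \{\{f,\phi_l\}_{\mathrm{can}},\phi_l\}_{\mathrm{can}}$ and define the Fokker--Planck evolution as $\partial_t \mathbb P = \mathcal L^*\mathbb P$, the formal adjoint with respect to the Lebesgue $L^2$ pairing on phase space, with the existence and regularity conditions on $\sigma_l$ already guaranteeing a well-posed diffusion.

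The crux is the adjoint computation, where the symplectic structure does all the work. Because $\mathbb J$ is a constant antisymmetric matrix, every Hamiltonian vector field is divergence free: $\nabla\cdot(\mathbb J\nabla F) = \mathbb J_{ab}\partial_a\partial_b F = 0$. This is Liouville's theorem, and it implies that $X_h$ and each $X_{\phi_l}$ are skew-adjoint, $X_h^*=-X_h$ and $X_{\phi_l}^*=-X_{\phi_l}$, under integration by parts with the usual decay of $\mathbb P$ at infinity. Consequently the diffusion part is self-adjoint, $(X_{\phi_l}^2)^*=(X_{\phi_l}^*)^2=X_{\phi_l}^2$, while the drift changes sign. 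Assembling these gives $\mathcal L^*\mathbb P = -\{\mathbb P,h\}_{\mathrm{can}} + \tfrac12\sum_l \{\{\mathbb P,\phi_l\}_{\mathrm{can}},\phi_l\}_{\mathrm{can}}$, and a single use of the antisymmetry of the Poisson bracket rewrites this as $\{h,\mathbb P\}_{\mathrm{can}} + \tfrac12\sum_l \{\phi_l,\{\phi_l,\mathbb P\}_{\mathrm{can}}\}_{\mathrm{can}}$, which is \eqref{FP}.

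The step I expect to require the most care is precisely the verification that no residual first-order drift survives in the adjoint of the diffusion term. Had the fields $X_{\phi_l}$ failed to be divergence free, the adjoint would read $X_{\phi_l}^*=-X_{\phi_l}-\nabla\cdot X_{\phi_l}$ and $(X_{\phi_l}^2)^*$ would generate extra transport terms spoiling the clean double-bracket form; equivalently, working in Itô form one would have to track the Stratonovich correction $\tfrac12\sum_l (X_{\phi_l}\cdot\nabla)X_{\phi_l}$ and show it cancels. The same Liouville property underlying skew-adjointness is exactly what guarantees this cancellation, so I would foreground the divergence-free identity as the key lemma and treat the remaining integrations by parts as routine.
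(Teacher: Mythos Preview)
Your proposal is correct and follows essentially the same route as the paper: identify the backward generator from the Stratonovich Hamiltonian structure as $\mathscr L f=\{f,h\}_{\mathrm{can}}+\tfrac12\sum_l\{\{f,\phi_l\}_{\mathrm{can}},\phi_l\}_{\mathrm{can}}$, then pass to the adjoint under the $L^2$ pairing using that Hamiltonian vector fields are divergence free, so the advection term is anti-self-adjoint and the double bracket is self-adjoint. The only cosmetic difference is that the paper arrives at the double bracket by explicitly naming it the It\^o correction of $\{f,\phi_l\}_{\mathrm{can}}\circ dW_t^l$, whereas you invoke the Stratonovich generator $X_h+\tfrac12\sum_l X_{\phi_l}^2$ directly; these are the same computation.
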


\begin{proof}
    The proof follows the standard derivation of the Fokker-Planck equation, by taking into account the geometrical structure of the stochastic process \eqref{sto-Ham}. 
  The time evolution of an arbitrary function $f:\mathbb R^{2dN}\to \mathbb R$ can be written as
    \begin{align*}
        df(\mathbf p,\mathbf q)= \{f,h\}_\mathrm{can} dt + \sum_l \{f,\phi_l\}_\mathrm{can}\circ dW_t^l\, ,
    \end{align*}
    as both drift and volatility have the same Hamiltonian form in the Stratonovich formulation. 
    We then compute the It\^o correction of this stochastic process, which is can be written as a double Poisson bracket form; namely,  $\frac12 \sum_l \{\{ f,\phi_l\}_\mathrm{can},\phi_l\}_\mathrm{can}dt$.
    The It\^o correction is the quadratic variation of the Stratonovich term in the stochastic differential equation, which equals the non-stochastic part of one half of the time derivative of the volatility (where a square Brownian motion becomes $dt$). 
    We refer to \cite{arnaudon2016noise, cruzeiro2017momentum} for a more detailed derivation of this formula in a general setting.
    Taking the expectation of the It\^o process then removes the noise term and defines the forward Kolmogorov operator such that $\dot f = \mathscr L f$.
    By pairing this formula with the density function $\mathbb P(\mathbf q,\mathbf p,t)$ over the phase space $(\mathbf q,\mathbf p)$ by using the usual $L^2$ pairing, as 
    \begin{align*}
        \int \mathbb P(\mathbf q ,\mathbf p,t) \mathscr Lf(\mathbf q ,\mathbf p) d\mathbf q d\mathbf p = \int \mathscr L^* \mathbb P(\mathbf q ,\mathbf p,t) f(\mathbf q ,\mathbf p) d\mathbf q d\mathbf p\, ,
    \end{align*}
    we obtain the Fokker-Planck equation $\dot{ \mathbb P}= \mathscr L^* \mathbb P$, which is explicitly given by \eqref{FP} as the double bracket term is self-adjoint and the advection term anti-self-adjoint. 
    Notice that here we have used a special property of the Poisson bracket; namely, that the Poisson bracket is also a symplectic $2$-form, which is exact and whose integral over the whole phase space vanishes, provided we choose suitable boundary conditions. 
    We again refer to \cite{arnaudon2016noise, cruzeiro2017momentum}  for more details about this derivation. 
\end{proof}

Of course, the direct study of this equation is not possible, even numerically, because of its high dimensionality. 
The main use here of the Fokker-Planck equation will be to understand the time evolution of uncertainties around each landmark. 
Indeed, for each landmark $\mathbf q_i$, the corresponding marginal distribution (integrating $\mathbb P$ over all the other variables) will represent the time evolution of the error on the mean trajectory of this landmark. 
We will show in the next section how to approximate the Fokker-Planck equation with a finite set of ordinary differential equations which describe the dynamics of the first moments of the distribution $\mathbb P$. 
This will then be used to estimate parameters of the noise fields $\sigma_l$ for given sets of initial and final landmarks. 

\begin{remark}[On ergodicity]
    The question of ergodicity of the process \eqref{sto-Ham} is not relevant here, as we will only consider this process for finite times, usually between $t=0$ and $t=1$. 
    The existence of stationary measures of the Fokker-Planck equation via H\"ormander's theorem is thus not needed. 
    Nevertheless, we will rely on a notion of reachability in the landmark position in the next section, where we will show how to sample diffusion bridges for landmarks with fixed initial and final positions. 
    This ensures that there exists a noise realisation which can bring any set of landmarks to any other set of landmarks. 
    This property is weaker than the H\"ormander condition and was introduced in \cite{sussmann1973orbits}.
\end{remark}

\subsection{Diffusion Bridges}\label{bridges-section} \label{sec:bridges}

The transition probability and solution to the Fokker-Planck equation $\mathbb P(\mathbf q,\mathbf p,t)$ can also be estimated by Monte Carlo sampling of diffusion bridges. This approach will, in particular, be natural for maximum likelihood estimation of parameters of landmark processes using the Expectation-Maximisation (EM) algorithm that will involve expectation over unobserved landmark trajectories, or for direct optimization of the data likelihood.
The EM estimation approach will be used in section~\ref{EM}. Here, we develop a theory of conditioned bridge processes for landmark dynamics which we will employ in the estimation. 
The approach is based on the method of \cite{delyon2006simulation} with two main modifications. 
The scheme and its modifications will be detailed after a short description of the general situation.
Alternative methods for simulating conditioned diffusion bridges can be found in e.g. \cite{papaspiliopoulos_importance_2012,bladt_simulation_2016,schauer_guided_2017}.

In \cite{delyon2006simulation}, a Girsanov formula \cite{girsanov_transforming_1960}, generalized to account for unbounded drifts, is used to show that when the diffusion field $\Sigma(\mathbf x,t)$ of an $\mathbb R^d$-valued diffusion process
\begin{align}
  d\mathbf x = b(\mathbf x,t)dt + \Sigma(\mathbf x,t)dW \ ,\  \mathbf x_0=\mathbf u
  \label{eq:orgdiff}
\end{align}
is uniformly invertible, the corresponding process
conditioned on hitting a point $\mathbf v\in\mathbb R^d$ at time $T>0$ is absolutely continuous with respect to an explicitly constructed unconditioned process $\hat{\mathbf x}$ that will hit
$\mathbf v$ at time $T$ a.s..  
The modified process $\hat{\mathbf x}$ is constructed by adding an additional drift term that forces the process towards the target $\mathbf v$.
In \cite{delyon2006simulation}, this process is constructed as a modification of \eqref{eq:orgdiff} 
\begin{align}
  d\hat{\mathbf x} = b(\hat{\mathbf x},t)dt - \frac{\hat{\mathbf x}-\mathbf v}{T-t} dt + \Sigma(\hat{\mathbf x},t)dW \, .
  \label{y-process}
\end{align}
Letting $P_{\mathbf x|\mathbf v}$ denote the law of $\mathbf x$ conditioned on hitting $\mathbf v$ with corresponding expectation $\mathbb E_{\mathbf x|\mathbf v}$,
the Cameron-Martin-Girsanov theorem implies that 
$P_{\mathbf x|\mathbf v}$ is absolutely continuous with respect to $P_{\hat{\mathbf x}}$, see for example \cite{oksendal_stochastic_2003} and the discussion in \cite{papaspiliopoulos_importance_2012}.
An explicit expression for the Radon-Nikodym derivative 
$dP_{\mathbf x|\mathbf v}/dP_{\hat{\mathbf x}}$ can be computed,
and this derivative is central for using simulations of the process
$\hat{\mathbf x}$ to compute expectations over the conditioned process $\mathbf x|\mathbf v$.
In particular, as shown in \cite{delyon2006simulation}, the conditioned process $\mathbf x|\mathbf v$ and the modified process $\hat{\mathbf x}$ are related by
\begin{align}
  \mathbb E_{\mathbf x|\mathbf v}(f(\mathbf x))
  =
  \frac{\mathbb E_{\hat{\mathbf x}}\Big(
  f(\hat{\mathbf x})
  \varphi(\hat{\mathbf x}) \Big)}{ \mathbb E_{\hat{\mathbf x}}(\varphi(\hat{\mathbf x})) }\, , 
  \label{DH-correction}
\end{align}
where $\varphi(\hat{\mathbf x})$ is a correction factor applied to each stochastic bridge $\hat{\mathbf x}$. 
Notice here that $f$ is a real-valued function of the stochastic path from $t=0$ to $t=T$.

Returning to landmark evolutions in the phase space $\mathbb R^{2dN}$, the process \eqref{sto-Ham} has two vector variables
$(\mathbf q,\mathbf p)$ that typically will be conditioned on hitting a fixed set of landmark positions $\mathbf v$ at time $T$. The conditioning thus happens only in the $\mathbf q$
variables by requiring $\mathbf q_T=\mathbf v$. 
To construct bridges with an approach similar to the scheme of
\cite{delyon2006simulation}, we need to find an appropriate extra drift term and handle the fact that the diffusion field may not be invertible in general.
Recall first that the landmark process \eqref{sto-Ham} has diffusion field
\begin{align}
  \Sigma(\mathbf q,\mathbf p)
  =
  \begin{pmatrix}
    \Sigma_{\mathbf q}(\mathbf q)\\
    \Sigma_{\mathbf p}(\mathbf q,\mathbf p)
  \end{pmatrix}
  :=
  \begin{pmatrix}
    \sigma_1(\mathbf q),
    \ldots,
    \sigma_J(\mathbf q) \\
    -\nabla_{\mathbf q}(\mathbf p\cdot \sigma_1(\mathbf q)),
    \ldots,
    -\nabla_{\mathbf q}(\mathbf p\cdot \sigma_J(\mathbf q))
  \end{pmatrix},
  \label{Sigma}
\end{align}
where $\sigma_j(\mathbf q)$ denotes the vector $(\sigma_j(q_1),\ldots,\sigma_j(q_N))^T$. 
Notice that this matrix is not square and has dimension $2dN\times J$ so that $\Sigma(\mathbf q,\mathbf p) \circ dW_t$ with $dW_t$ a $J$-vector corresponds to the stochastic term of \eqref{sto-Ham}.
Though $\Sigma(\mathbf q,\mathbf p)$ couples the $\mathbf q$ and $\mathbf p$ equation, when the number of noise fields $J$ is sufficiently large, the $\mathbf q$ part 
$\Sigma_{\mathbf q}(\mathbf q)$ will often be surjective as
a linear map $\mathbb R^J\rightarrow\mathbb R^{dN}$. 
In this situation, by letting $\Sigma_{\mathbf q}(\mathbf q)^\dagger$ denote the Moore-Penrose pseudo-inverse of
$\Sigma_{\mathbf q}(\mathbf q)$, we can construct a guiding drift term as
\begin{align}
    G(\mathbf q, \mathbf p): = -\frac{
        \Sigma(\mathbf q,\mathbf p)\Sigma_{\mathbf q}(\mathbf q)^\dagger
  (\mathbf q-\mathbf v)}{T-t}\, . 
  \label{guidance-drift}
\end{align}
This term, when added to the process \eqref{sto-Ham} and when measures are taken to control the unbounded drift of \eqref{sto-Ham}, will ensure that the modified process hits $\mathbf q_T$ a.s. at time $T$. 
The drift term \eqref{guidance-drift} is a direct generalization of the term added in \eqref{y-process}. 
If $\Sigma$ had been invertible then $\Sigma \Sigma^\dagger=\Id$ resulting in the guiding term  of \cite{delyon2006simulation} used in equation \eqref{y-process}. 
In the current non-invertible case, $\Sigma \Sigma_\mathbf{q}^\dagger(\mathbf q-\mathbf v)$ uses the difference $\mathbf q-\mathbf v$ which only involves the landmark position but affects both the position and the momentum equations.
We stress here the fact that the introduction of noise in
the $\mathbf q$ equation in \eqref{sto-Ham} is essential in our present approach.
When conditioning on the $\mathbf q$ variable, a guided process could not directly be constructed in this way, if the noise had been introduced only in the $\mathbf p$ equation, as in \cite{TrVi2012,vialard2013extension,marsland2016langevin}.
The fact that this term is weighted by $\Sigma \Sigma^\dagger$ is also important
as it allows the guiding term to be more efficient in the noisy regions of the
image, where there is more freedom to deviate from the deterministic path. The guiding term can be interpreted as originating from a time-rescaled gradient flow, and with the guiding term added, the diffusion process can be seen as a stochastically perturbed gradient flow, see \cite{arnaudon2016stochastic2}.

The guiding term \eqref{guidance-drift} is, in practice, not always appropriate for landmarks.
Because the correction is dependent only on the difference to the target in the position equation, a phenomenon of over-shooting is often observed. 
In such cases, the landmarks travel too fast initially due to a large momentum, strengthened by the guiding term that forces the landmarks towards $\mathbf v$. 
The high initial speed is only corrected when the time approaches
$T$ and the guiding term brings the landmark back to their final position. 
This effect is illustrated in Figure~\ref{fig:stoch_bridge_vis} in
section~\ref{sec:bridgeex} and results in low values of the correction factor $\varphi(\mathbf q,\mathbf p)$ used to compute the expectation in \eqref{DH-correction}. 
This effect tends to produce inefficient samples when approximating \eqref{DH-correction} by Monte Carlo sampling.
As an alternative, upon letting $b(\mathbf q,\mathbf p)$ denote the drift term of
\eqref{sto-Ham}, we employ a guided diffusion process of the form
\begin{align}
  \begin{pmatrix}
    d\hat{\mathbf q} \\
    d\hat{\mathbf p}
  \end{pmatrix}
  = b(\hat{\mathbf q},\hat{\mathbf p})dt 
    - \frac{\Sigma(\hat{\mathbf q},\hat{\mathbf p})\Sigma_{\mathbf q}(\hat{\mathbf q})^\dagger(\phi_{t,T}(\hat{\mathbf q},\hat{\mathbf p})-\mathbf v)}
  {T-t}dt 
  + \Sigma(\hat{\mathbf q},\hat{\mathbf p})
  \circ
  dW\, , 
  \label{eq:guiddiff}
\end{align}
for some appropriately chosen function $\phi_{t,T}:\mathbb R^{2dN}\rightarrow\mathbb R^{dN}$ that gives an estimate of the value of $\hat{\mathbf q}_T$ using the value of the modified stochastic process $(\hat{\mathbf q}_t,\hat{\mathbf p}_t)$ at time $t$. 
The hat denotes the solution of the process \eqref{eq:guiddiff}, which is different from the original dynamics of the process \eqref{sto-Ham} written without the hats.
The choice $\phi_{t,T}(\hat{\mathbf q},\hat{\mathbf p}):=\hat{\mathbf q}$ recovers the guiding term \eqref{guidance-drift}.
It would be natural to define $\phi_{t,T}(\hat{ \mathbf q},\hat {\mathbf p}):=\mathbb E_{( \mathbf q,\mathbf p)}(\mathbf q_T| (\mathbf q_t, \mathbf p_t)=(\hat{ \mathbf q},\hat{ \mathbf p}))$.
The resulting guiding term will only be driven by the expected amount needed at the endpoint, not from the value at time $t$. 
A similar choice but easier to handle is to let $\phi_{t,T}(\hat{\mathbf q},\hat{\mathbf p})$ be the solution at time $T$ of the original deterministic landmark dynamics \eqref{EP}, obtained from the initial conditions $(\hat{\mathbf q}_t,\hat{\mathbf p}_t)=(\hat{\mathbf q},\hat{\mathbf p})$. 
We will use this latter choice with a modification to ensure its time derivative is bounded. The approach is visualised in Figure~\ref{fig:stoch_bridge_vis}. 
To ensure convergence of $\hat{\mathbf q}_t\to\mathbf v$ for $t\rightarrow T$, a bounded approximation $\tilde{b}$ will be chosen to replace the original unbounded drift $b$ in \eqref{eq:guiddiff}. As it turns out, this choice has little influence in practice.

The matrix $\Sigma(\hat{\mathbf q},\hat{\mathbf p})\Sigma_{\mathbf q}(\hat{\mathbf q})^\dagger$
in \eqref{eq:guiddiff} only accounts for the $\mathbf q$ dynamics in the pseudo-inverse $\Sigma_{\mathbf q}(\hat{\mathbf q})^\dagger$. 
When the momentum is high and the noise fields $\sigma_j$ have high gradients, this fact can again lead to improbable
sample paths.
In such cases, the scheme can be further generalised by
using a guiding term of the form
\begin{align}
    \frac{1}{T-t}
  \Sigma(\hat{\mathbf q},\hat{\mathbf p})
  \Big(
  D_h\big(
  \phi_{t,T}(\Sigma(\hat{\mathbf q},\hat{\mathbf p})\mathbf h)
  \big)|_{\mathbf h=\mathbf 0}
    \Big)^{\dagger}(\phi_{t,T}(\hat{\mathbf q},\hat{\mathbf p})-\mathbf v)
  \, .
  \label{eq:differential}
\end{align}
The matrix 
$
  D_h\big(
  \phi_{t,T}(\Sigma(\hat{\mathbf q},\hat{\mathbf p})\mathbf h)
  \big)|_{\mathbf h=\mathbf 0}
$
is a linear approximation of the expected endpoint dynamics as a function of the
noise vector $\mathbf h\in\mathbb R^J$.
Again, with $\phi_{t,T}(\hat{\mathbf q},\hat{\mathbf p}):=\hat{\mathbf q}$, the
original guiding term \eqref{guidance-drift} is recovered,
and the term is close to the guiding term of \eqref{eq:guiddiff}
when the momentum or gradients of $\sigma_j$ are low.
We use this term for the experiments in section~\ref{sec:bridgeex} involving
high momentum dynamics, e.g. Figure~\ref{fig:sigma_single_mle}.

The following result is an extension of \cite[Theorem 5]{delyon2006simulation}
and \cite[Theorem 3]{marchand_conditioning_2011} 
to the modified guided SDE \eqref{eq:guiddiff}. 
It is the basis for the EM approach for estimating the parameters of the landmark processes developed in section~\ref{EM}.
Please note that the Girsanov theorem \cite[Thm. 1]{delyon2006simulation} which relates
the modified and original process, does not assume that $\Sigma$ is invertible.
The main analytic consequence of the non-invertibility is that the process is
semi-elliptic and the transition density, therefore, cannot be bounded by
Aronson's estimation \cite{aronson_bounds_1967}. 
Instead, we here assume
continuity and boundedness of the density of $\mathbf q$ in small intervals of $(0,T]$ 
in the sense of the assumption below. We write 
  $\mathbb P(\mathbf q_0, \mathbf p_0; \mathbf q, \mathbf p, t)$
  for the transition density at time $t$ of a solution 
$(\mathbf q,\mathbf p)$ to \eqref{sto-Ham}
  started at $(\mathbf q_0, \mathbf p_0)$. Similarly, when conditioning only on 
  $\mathbf q$, we write
  $\mathbb P(\mathbf q_0, \mathbf p_0;\mathbf q,t) = 
  \int_{\mathbb R^{dN}}
    \mathbb P(\mathbf q_0, \mathbf p_0; \mathbf q,\mathbf p,t)d\mathbf p$.

\begin{assumption}
\label{bridge-assumption}
  For any $(\mathbf q_0,\mathbf p_0)$ and $(\mathbf q,\mathbf p)$, the process $(\mathbf q_t,\mathbf p_t)$ has a density $\mathbb P(\mathbf q_0, \mathbf p_0; \mathbf q,\mathbf p,t)$ and the map $(\mathbf q,t)\mapsto\int_{\mathbb R^{dN}}g_0(\mathbf q_0,\mathbf p_0)\mathbb P(\mathbf q_0,\mathbf p_0; \mathbf q,t)d(\mathbf q_0,\mathbf p_0)$ is continuous in $t$ and $\mathbf q$ and bounded on sets $\{(\mathbf q,t)|s-\epsilon\le t\le s\}$ for $s\in (0,T]$, sufficiently small $\epsilon>0$, and any integrable function $g_0$.
\end{assumption}
The interpretation of Assumption \ref{bridge-assumption} is that, given any
distribution of initial conditions $(\mathbf q_0,\mathbf p_0)$ with density $g_0$, the
resulting $\mathbf q$-transition density of the process is continuous and bounded in $\mathbf q$ and $t$. As shown in Lemma~\ref{lem:density_lemma}, Assumption \ref{bridge-assumption} can be slightly weakened if Theorem~\ref{thm:endpoint} is only used to approximate the transition density at time $T$ as opposed to expectations $\mathbb E[f(\mathbf q,\mathbf p)|\mathbf q_T=\mathbf v]$ for arbitrary measurable functions $f$.

We let $W(\mathbb R^{2dN})$ denote the Wiener space of continuous paths
$[0,T]\rightarrow \mathbb R^{2dN}$.
\begin{theorem}
  Assume $\Sigma_{\mathbf q}(\mathbf q):\mathbb R^J\rightarrow\mathbb R^{dN}$ is surjective for all $\mathbf q$ with 
  $\Sigma_{\mathbf q}(\mathbf q)^\dagger$ bounded, and that $\Sigma$ is $C^{1,2}$, bounded, and with bounded derivatives.
  Let $\tilde{b}_\mathbf q$ be a bounded approximation of the $\mathbf q$-part of the drift $b$, and set $\tilde{b}=b+\Sigma(\mathbf q,\mathbf p)\Sigma_\mathbf q(\mathbf q)^\dagger(\tilde{b}_\mathbf q-b_\mathbf q)$.
  Let $v\in\mathbb R^{dN}$ be a point with $\mathbb P(\mathbf q_0,\mathbf p_0; \mathbf v,t)$ positive, and let $P_{(\mathbf q,\mathbf p)|\mathbf v}$ be the law of 
    $(\mathbf q,\mathbf p)\,|\,\mathbf q_T=\mathbf v$. Let 
  $(\hat{\mathbf q},\hat{\mathbf p})$ be solution to
  \eqref{eq:guiddiff}, $(\hat{\mathbf q}_0,\hat{\mathbf p}_0)=(\mathbf q_0,\mathbf p_0)$ with $\varphi_{t,T}:\mathbb R^{2dN}\rightarrow\mathbb R^{dN}$ a map with
  $\frac{\varphi_{t,T}(\mathbf q,\mathbf p)-\mathbf q}{T-t}$ bounded on $[0,T)$.
Then, for positive measurable $f:W(\mathbb R^{2dN})\rightarrow\mathbb R$,
\begin{align}
    \mathbb E_{(\mathbf q,\mathbf p)|\mathbf v}[f(\mathbf q,\mathbf p)]
    =
    \lim_{t\rightarrow T}
    \frac{
    \mathbb E_{(\hat{\mathbf q},\hat{\mathbf p})}\left [ f(\hat{\mathbf q},\hat{\mathbf p}) 
    \varphi(\hat{\mathbf q},\hat{\mathbf p},t) \right ]
  }{\mathbb E_{(\hat{\mathbf q},\hat{\mathbf p})}[\varphi(\hat{\mathbf q},\hat{\mathbf p},t)]}\, ,
    \label{Ebridges}
  \end{align}
  with
  \begin{align*}
    &\log\varphi({\mathbf q},{\mathbf p},t) =
    -\int_0^t\frac{({\mathbf q}-\mathbf v)^TA({\mathbf q})\tilde{b}({\mathbf q},{\mathbf p})ds }{T-s}
  \\
    &\qquad
    -\int_0^t\frac{
      ({\mathbf q}-\mathbf v)^T\big(dA({\mathbf q})\big)({\mathbf q}-\mathbf v)
    }{2(T-s)}
    -\sum_{i,j}\int_0^t\frac{d[A_{ij}({\mathbf q}),({\mathbf q}-\mathbf v)_i({\mathbf q}-\mathbf v)_j)]}{T-s}
  \\
    &\qquad
    +\int_0^t(b_\mathbf q({\mathbf q},{\mathbf p})-\tilde{b}_\mathbf q({\mathbf q},{\mathbf p}))^T\Sigma_\mathbf q({\mathbf q})^{\dagger,T}dW
    -\frac{1}{2}
    \int_0^t
    \|\Sigma_\mathbf q({\mathbf q})^{\dagger}
    (b_\mathbf q({\mathbf q},{\mathbf p})-\tilde{b}_\mathbf q({\mathbf q},{\mathbf p}))
    \|^2 ds 
    \\&\qquad
      +\int_0^t\frac{(\varphi_{t,T}(\mathbf q,\mathbf p)-\mathbf q)^T\Sigma_\mathbf q(\mathbf q)^{\dagger,T}dW}{T-t}
      -\frac12\int_0^t\left\|\frac{
    \Sigma_\mathbf q(\mathbf q)^{\dagger}
    (\varphi_{t,T}(\mathbf q,\mathbf p)-\mathbf q)}{T-t}\right\|^2ds
    \, ,
  \end{align*}
    where
      $A({\mathbf q})=\big(\Sigma_{\mathbf q}({\mathbf q})\Sigma_{\mathbf q}({\mathbf q})^T\big)^{-1}$.
  \label{thm:endpoint}
  In addition,
  \begin{equation}
    \mathbb P(\mathbf q_0,\mathbf p_0; \mathbf q,T)
    =
    \left(\frac{\left|A(\mathbf v)\right|}{2\pi T}\right)^{\frac d2}
    e^{-\frac{\|\Sigma_\mathbf q(\mathbf q_0)^\dagger(\mathbf q_0-v)\|^2}{2T}}
    \lim_{t\to T}
    \mathbb E_{(\hat{\mathbf q},\hat{\mathbf p})}
    [\varphi(\hat{\mathbf q},\hat{\mathbf p},t)]
    \ .
    \label{eq:density_lim_E}
  \end{equation}
\end{theorem}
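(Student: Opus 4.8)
The plan is to follow the Girsanov change-of-measure strategy of \cite{delyon2006simulation,marchand_conditioning_2011}, adapting it in two directions dictated by the structure of \eqref{sto-Ham}: the non-invertibility of the diffusion matrix, and the replacement of the naive guiding term of \eqref{guidance-drift} by the bounded-drift, general-endpoint version in \eqref{eq:guiddiff}. The key structural observation that makes Girsanov applicable despite $\Sigma$ being a non-square $2dN\times J$ matrix is that \emph{every} extra drift appearing in \eqref{eq:guiddiff} relative to \eqref{sto-Ham} lies in the range of $\Sigma$: the drift replacement is $\Sigma\Sigma_{\mathbf{q}}^\dagger(\tilde{b}_{\mathbf{q}}-b_{\mathbf{q}})$ by the very definition of $\tilde b$, and the guiding term is $-\frac{\Sigma\Sigma_{\mathbf{q}}^\dagger(\varphi_{t,T}-\mathbf{v})}{T-t}$. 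Both are of the form $\Sigma\,u$ for an adapted $\mathbb{R}^J$-valued process $u=\Sigma_{\mathbf{q}}^\dagger(\tilde{b}_{\mathbf{q}}-b_{\mathbf{q}})-\frac{\Sigma_{\mathbf{q}}^\dagger(\varphi_{t,T}-\mathbf{v})}{T-t}$, so the change of measure is driven by the same $J$-dimensional Wiener process, with Radon--Nikodym density $\exp(\int u^T dW-\frac12\int\|u\|^2 ds)$; this is precisely why all $dW$ integrals in the stated $\log\varphi$ carry the factor $\Sigma_{\mathbf{q}}^{\dagger,T}$. Boundedness of $\Sigma_{\mathbf{q}}^\dagger$, of $\tilde{b}_{\mathbf{q}}$, and of $(\varphi_{t,T}(\mathbf{q},\mathbf{p})-\mathbf{q})/(T-t)$ on $[0,T)$ is what guarantees the corresponding exponential local martingales are true martingales on each $[0,t]$ with $t<T$, and the $C^{1,2}$, boundedness and bounded-derivative hypotheses on $\Sigma$ give well-posedness of \eqref{eq:guiddiff} and smoothness of $A(\mathbf{q})=(\Sigma_{\mathbf{q}}\Sigma_{\mathbf{q}}^T)^{-1}$.

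Next I would decompose the guiding drift along $\varphi_{t,T}-\mathbf{v}=(\mathbf{q}-\mathbf{v})+(\varphi_{t,T}-\mathbf{q})$, which splits the Girsanov exponent into the three blocks of the stated $\log\varphi$. The drift replacement produces the standard Girsanov pair $\int(b_{\mathbf{q}}-\tilde{b}_{\mathbf{q}})^T\Sigma_{\mathbf{q}}^{\dagger,T}dW-\frac12\int\|\Sigma_{\mathbf{q}}^\dagger(b_{\mathbf{q}}-\tilde{b}_{\mathbf{q}})\|^2 ds$ (third line), and the generalized guiding correction $(\varphi_{t,T}-\mathbf{q})/(T-t)$ produces the analogous pair (fourth line); both are non-singular and follow directly from the martingale density above. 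The remaining piece is the basic Delyon--Hu guiding drift $-\frac{\mathbf{q}-\mathbf{v}}{T-t}$, which carries the $1/(T-t)$ singularity and supplies the first two lines. As in \cite{delyon2006simulation,marchand_conditioning_2011}, I would tame it by applying Itô's formula to $H(\mathbf{q},t)=\frac{(\mathbf{q}-\mathbf{v})^T A(\mathbf{q})(\mathbf{q}-\mathbf{v})}{2(T-t)}$; this converts the otherwise divergent $\int\|\Sigma_{\mathbf{q}}^\dagger(\mathbf{q}-\mathbf{v})\|^2/(T-s)^2\,ds$ into the drift term $-\int\frac{(\mathbf{q}-\mathbf{v})^TA\tilde b}{T-s}ds$, the $dA$ term, and the quadratic-variation brackets $d[A_{ij},(\mathbf{q}-\mathbf{v})_i(\mathbf{q}-\mathbf{v})_j]$, together with a boundary contribution at $s=0$ equal to $\frac{(\mathbf{q}_0-\mathbf{v})^TA(\mathbf{q}_0)(\mathbf{q}_0-\mathbf{v})}{2T}=\frac{\|\Sigma_{\mathbf{q}}(\mathbf{q}_0)^\dagger(\mathbf{q}_0-\mathbf{v})\|^2}{2T}$ (using $\Sigma_{\mathbf{q}}^\dagger=\Sigma_{\mathbf{q}}^T A$ for surjective $\Sigma_{\mathbf{q}}$). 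Exponentiated, this boundary term is exactly the Gaussian exponent appearing in \eqref{eq:density_lim_E}.

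The final and most delicate step is the limit $t\to T$. In the uniformly elliptic setting one controls this using Aronson's Gaussian bounds on the transition density, but these are unavailable here because noise enters the full phase space only through $\Sigma_{\mathbf{q}}$, making the process merely semi-elliptic; this is the reason Assumption~\ref{bridge-assumption} is imposed. I would use it in place of Aronson's estimate to obtain the continuity and boundedness of the $\mathbf{q}$-marginal transition density on a left-neighbourhood of each $s\in(0,T]$, which provides the uniform integrability needed to pass the limit through the expectation and to identify the limiting endpoint density. Writing the conditioning as $\mathbb{E}_{(\mathbf{q},\mathbf{p})|\mathbf{v}}[f]=\lim_{t\to T}\mathbb{E}[f\,|\,\mathbf{q}_t=\mathbf{v}]$, expressing the conditional law through the transition density, and inserting the Girsanov density rewrites everything as an expectation over the guided process $(\hat{\mathbf{q}},\hat{\mathbf{p}})$; the Gaussian prefactor $(|A(\mathbf{v})|/(2\pi T))^{d/2}$ emerges from the local Gaussian behaviour of the guided diffusion at the endpoint. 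Since the unknown normalization $\mathbb{P}(\mathbf{q}_0,\mathbf{p}_0;\mathbf{v},T)$ enters both numerator and denominator it cancels, giving the ratio \eqref{Ebridges}, while choosing $f\equiv 1$ and reading off the normalization yields the density formula \eqref{eq:density_lim_E}. I expect this limit to be the main obstacle: verifying that the singular contributions combine into a convergent Gaussian kernel while the correction factor $\varphi(\hat{\mathbf{q}},\hat{\mathbf{p}},t)$ remains uniformly integrable, using only the weak density control of Assumption~\ref{bridge-assumption} rather than full ellipticity, is the technical heart of the argument (cf.\ the Girsanov step \cite[Thm.~1]{delyon2006simulation}, which already does not require $\Sigma$ invertible, and \cite{oksendal_stochastic_2003} for the underlying change of measure).
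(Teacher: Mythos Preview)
Your proposal is correct and follows essentially the same approach as the paper's proof in Appendix~\ref{sec:bridgesamplingscheme}. The paper implements your outline via an explicit intermediate process $(\tilde{\mathbf q},\tilde{\mathbf p})$ solving \eqref{eq:boundedguid_no_phi} (bounded drift $\tilde b$ plus the basic guiding term $\Sigma h$, $h=-\Sigma_{\mathbf q}^\dagger(\mathbf q-\mathbf v)/(T-t)$), applies the bilateral Girsanov identity of \cite[Thm.~1]{delyon2006simulation} to split the correction into $\varphi^b$ and $\tilde\varphi$, then applies Girsanov a second time to pass from $(\tilde{\mathbf q},\tilde{\mathbf p})$ to $(\hat{\mathbf q},\hat{\mathbf p})$, producing $\varphi^\varphi$; this sequential application is what cleanly separates the three blocks without cross terms, whereas a single Girsanov with $u=u_1+u_2+u_3$ would introduce $\int u_i^Tu_j\,ds$ terms you would then need to track. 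Your It\^o step on $H=g/2$ is exactly the paper's computation of $\frac12\int dg$, and the limit $t\to T$ is handled by the paper's Lemmas~\ref{lem:conditioning} and~\ref{lem:density_lemma}, which replace Aronson's bounds by Assumption~\ref{bridge-assumption} via a change of variables $\mathbf q=\mathbf v+(T-t)^{1/2}\mathbf q'$ and dominated convergence---precisely the mechanism you anticipated.
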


In the Theorem, $[\cdot,\cdot]$ denotes the quadratic variation of semimartingales.
As mentioned above, a bounded approximation $\tilde{b}$ must be used to replace the original drift term $b$ in \eqref{eq:guiddiff}. The last integrals in the expression for $\log\varphi(\mathbf q,\mathbf p,t)$ are results of this approximation, and the use of the map $\varphi_{t,T}$.

The result is proved in Appendix~\ref{sec:bridgesamplingscheme}. If $\Sigma$ had been invertible and if the guidance scheme \eqref{y-process} was used, the result of \cite{delyon2006simulation} would imply that the right hand side limit of \eqref{Ebridges} would equal
\begin{align*}
    \frac{\mathbb E_{(\hat{\mathbf q},\hat{\mathbf p})}\left [ f(\hat{\mathbf q},\hat{\mathbf p}) 
    \varphi(\hat{\mathbf q},\hat{\mathbf p},T) \right ]
  }{\mathbb E_{(\hat{\mathbf q},\hat{\mathbf p})}[\varphi(\hat{\mathbf q},\hat{\mathbf p},T)]}\, . 
\end{align*}
Extending the convergence argument to the present non-invertible case is non-trivial, and we postpone investigating this to future work.
For numerical computations, $\varphi(\hat{\mathbf q},\hat{\mathbf p},t)$ can be approximated by finite differences. As described later in the paper, we do this using a framework that allows symbolic evaluation of gradients and thus subsequent optimization for parameters of the processes.

\section{Estimating the Spatial Correlation of the Noise} \label{estimate-correlation}

We now assume a set of $n$ observed landmark configurations $\mathbf q^1,\ldots,\mathbf q^n$ at time $T$, i.e. the observations are considered realisations of the stochastic process at some positive time $T$. 
From this data, we aim at inferring parameters of the model. This can be both parameters of the noise fields $\sigma_l$ and parameters for the initial configuration $(\mathbf q(0),\mathbf p(0))$. 
The initial configuration can be deterministic with fixed known or unknown parameters, or it can be randomly chosen from a distribution with known or unknown parameters.
We develop two different strategies for performing the inference. 
The first inference method in section~\ref{moment-method} is a shooting method based on solving the evolution of the first moments of the probability distribution of the landmark positions while the second method in section \ref{EM} is based on the Expectation-Maximisation (EM) algorithm.
The discussion is here in the context of landmarks, although these ideas may also apply in the more general context of section~\ref{theory}.

\subsection{The Noise Fields}

We start by discussing the form of the unknown $J$ noise fields $\sigma_l$. 
To estimate them from a finite amount of observed data, we are forced to require the fields to be specified by a finite number of parameters.
A possible choice for a family of noise fields is to select $J$ linearly independent elements $\sigma_1,\ldots,\sigma_J$ from a dense subset of $C^1(\mathbb R^d,\mathbb R^d)$. 
We here use a kernel $k$ with length-scale $r_l$ and a noise amplitude $\lambda_l\in \mathbb R^d$, that is
\begin{align}
    \sigma_l^\alpha (\mathbf q_i)
    = \lambda_l^\alpha  k_{r_l}(\|\mathbf q_i-\delta_l\|)\,,
    \label{kernels}
\end{align}
where $\delta_l$ denotes the kernel positions. 
Possible choices for the kernel include Gaussians 
$k_{r_l}(x)=e^{-x^2/(2r_l^2)}$, or cubic B-splines $k_{r_l}(x)=S_3(x/r_l)$.
The Gaussian kernel has the advantage of simplifying
calculations of the moment equations whereas the B-spline representation is compactly supported and gives a partition of unity when used in a regular grid.
Other interesting choices may include a cosine or a polynomial basis of the image domain.

In principle, the methods below allow all parameters of the noise fields to be estimated given sufficient amount of data. 
However, for simplicity, we will fix the length-scale and the position of the kernels.
The unknown parameters for the noise can then be specified in a single vector variable $\theta=(\lambda_1,\ldots,\lambda_K)$. 
The aim of the next sections will be to estimate this vector, possibly in addition to the initial configuration $(\mathbf q(0),\mathbf p(0))$, from data using the method of
moments in \ref{moment-method} and EM in \ref{EM}, respectively.

\begin{remark}
  For the bridge simulation scheme, we required $\Sigma_{\mathbf q}(\mathbf q)$ to be surjective as a linear map $\mathbb R^J\rightarrow\mathbb R^{dN}$. This assumption can be satisfied when the number of landmarks is low relative to the number of noise fields having non-zero support in the area where the landmarks reside. On the other hand, if the number of landmarks is increased while the number of noise fields is fixed, the assumption eventually cannot be satisfied. Intuitively, in such cases, the extra drift added to the bridge SDE must guide through a nonlinear submanifold of the phase space to ensure the landmarks will hit the target point $\mathbf v$ exactly. This limitation can be handled in three ways: (1) The method of moments as described below avoids matching individual point configurations, and it can, therefore, be used in situations where the surjectivity condition is not satisfied. (2) As discussed in remark \ref{rem:dimension_noise}, the noise can be made infinite dimensional. This can be done while keeping correlation structure similarly to the case with finite $J$. See also \cite{arnaudon2016stochastic2} for a discussion of noise in the form of a Gaussian process. (3) The bridge matching can be made inexact mimicking the inexact matching pursued in deterministic LDDMM. This could potentially relax the requirements on the extra drift term to only ensure convergence towards a given distance of the target. Inexact observations of stochastic processes are for example treated in \cite{van_der_meulen_continuous-discrete_2017}.
\end{remark}

\subsection{Method of moments}\label{moment-method}

We describe here our first method for estimating the parameters $\theta$ by solving a shooting problem on the space of first and second order moments. 
Given an estimate of the endpoint distributions $\mathbb P(\mathbf q,\mathbf p, T)$, we will solve the inverse problem which consists in using the Fokker-Planck equation \eqref{FP} to find the values of $\theta$ such that we can reproduce the observed final distribution.
Solving the Fokker-Planck equation directly is infeasible due to its high dimensionality. Instead, we will derive a set of finite dimensional equations approximating the solution of the Fokker-Planck equation \eqref{FP} for the probability distribution $\mathbb P$ in terms of its first moments.
This approach has been developed in the field of plasma physics for the Liouville equation, which is similar to the Fokker-Planck equation \eqref{FP}. 

\begin{remark}[Geometric moment equation]
    As the Fokker-Planck \eqref{FP} is written in term of the canonical bracket, we could expect to be able to apply a geometrical version of the method of moments such as the one developed by \cite{holm2007geometric}. 
Although this method seems to fit the present geometric derivation of the stochastic equations, we will not use it as it is not in our case practically useful.
Indeed, it requires the expansions of the Hamiltonian functions in term of the moments, but we cannot obtain here a valid expansion with a finite number of terms. 
This is due to the fact that the LDDMM kernel and the noise kernels cannot generally be globally approximated by finite polynomials with bounded approximation error for large distances.
This would, in turn, produce spurious strong interactions between distant landmarks.
\end{remark}

The method for approximating the Fokker-Planck that we will use here is the following.
We first define the moments
\begin{align}
    \braket{q_i^\alpha} &:= \int q_i^\alpha \mathbb P_\theta(\mathbf q,\mathbf p,t)\, d\mathbf q d\mathbf p \\
    \braket{q_i^\alpha p_j^\beta}  &:= \int q_i^\alpha p_j^ \beta  \mathbb P_\theta(\mathbf q,\mathbf p,t)\, d\mathbf q d\mathbf p\,,
\end{align}
where we have written only two possible moments, although any combinations of $p$ and $q$ at any order are possible.
In this work we will only consider moments up to the second order, that is the moments $\braket{q_i^\alpha},\braket{p_i^\alpha},\braket{q_i^\alpha q_j^\beta},\braket{q_i^\alpha p_j^\beta}$ and $\braket{p_i^\alpha p_j^\beta}$.
Notice that the first moment are $(1,1)$-tensors, and the second moments are $(2,2)$-tensors, although we will only use index notation here.

We illustrate this method with the first moment $\braket{q_i^\alpha}$, which represents the mean position of the landmarks.
We compute its time derivative and use the property of the Kolmogorov operator $\mathscr L$ defined in \eqref{FP} to obtain
\begin{align}
    \frac{d}{dt} \braket{q_i^\alpha} = \int q_i^\alpha \mathscr L^*\mathbb P_\theta\,  d\mathbf q d\mathbf p  = \int \mathscr Lq_i^\alpha \mathbb P_\theta\,   d\mathbf q d\mathbf p=\Braket{\mathscr Lq_i^\alpha}\, .
\end{align}
We thus first need to apply the Kolmogorov operator $\mathscr L $ to $q_i^\alpha$ to obtain
\begin{align}
    \begin{split}
    \mathscr Lq_i^\alpha  &= -\{h,q_i^\alpha\}_\mathrm{can} + \frac12\sum_l  \{\phi_l,\{\phi_l,q_i^\alpha\}_\mathrm{can}\}_\mathrm{can}\\
        &=   \frac{\partial h}{\partial p_i^\alpha} +\frac12\frac{\partial\sigma_l^\alpha(\mathbf q_i) }{\partial q_i^\beta} \sigma_l^\beta(\mathbf q_i) \, ,
    \end{split}
    \label{Lq}
\end{align}
which corresponds to the $q$ part of the drift of the stochastic process with It\^o correction.
Similarly, for the momentum evolution, we obtain
\begin{align}
    \begin{split}
        \mathscr L p_i^\alpha &=      -\{h,p_i^\alpha\}_\mathrm{can} + \frac12\sum_l  \{\phi_l,\{\phi_l,p_i^\alpha\}_\mathrm{can}\}_\mathrm{can}\\
        &= - \frac{\partial h}{\partial q_i^\alpha}  +\frac12 p_i^\gamma \frac{\partial\sigma_l^\gamma(\mathbf q_i)}{\partial q_i^\beta}\frac{\partial\sigma_l^\beta(\mathbf q_i)}{\partial q_i^\alpha} - \frac12 p_i^\beta \frac{\partial^2\sigma_l^\beta(\mathbf q_i) }{\partial q_i^\alpha \partial q_i^\gamma} \sigma_l^\gamma(\mathbf q_i)\, .
    \end{split}
    \label{lqq}
\end{align}
Most of the terms on the right hand side of \eqref{Lq} and \eqref{lqq} are nonlinear; so their expected value cannot be written in terms of only the first moments.
This is the usual closure problem of moment equations, such as the BBGKY problem arising in many-body problems in quantum mechanics. 
The solution to this problem is to truncate the hierarchy of moments for a given order and consider the system of ODEs as an approximation of the complete Fokker-Planck solution. 
Here we will apply the so-called cluster expansion method described in \cite{kira2011semiconductor}.
We refer to the Appendix~\ref{cluster} for more details about this method.

Apart from the first approximation $\braket{q_i^\alpha q_j^\beta}\to  \braket{q_i^\alpha}\braket{q_j^\beta}$, the next order of approximation is to keep track of the correlations
\begin{align}
    \Delta_2 \braket{q_i^\alpha q_j^\beta} := \braket{q_i^\alpha q_j^\beta}-\braket{q_i^\alpha}\braket{ q_j^\beta}\,.
    \label{D2}
\end{align}
This quantity is also called a centred second moment as for $i=j$ it corresponds to the covariance of the probability distribution for the landmark $i$. 
In general, it corresponds to the correlation between the positions of two landmarks.
The dynamical equation for this correlation is found from the equation of the second moment, which gives
\begin{align*}
    \frac{\partial }{\partial t} \Delta_2\braket{ q_i^\alpha q_j^\beta} &= \frac{\partial}{\partial t} \braket{ q_i^\alpha q_j^\beta } - \braket{ q_i^\alpha}\frac{\partial}{\partial t} \braket{ q_j^\beta } + T \\
        &=  \sum_l \Braket{\sigma_l^\alpha(\mathbf q_i) \sigma_l^\beta(\mathbf q_j)} + \Braket{q_i^\alpha\frac{\partial h}{\partial p_j^\beta}}-  \Braket{q_i^\alpha} \Braket{\frac{\partial h}{\partial p_j^\beta}}\\
        & +\frac12 \sum_l\left (  \Braket{ q_i^\alpha \sigma_l^\gamma(\mathbf q_j) \frac{\partial \sigma_l^\beta(\mathbf q_j)}{\partial q_j^\gamma}}- \Braket{q_i^\alpha}\Braket{\sigma_l^\gamma(\mathbf q_j) \frac{\partial \sigma_l^\beta(\mathbf q_j)}{\partial q_j^\gamma}}\right )  + (i\leftrightarrow j) \, ,
\end{align*}
where $(i\leftrightarrow j) $ stands for the same term but with $i$ and $j$ exchanged. 
This equation is interesting to study in more detail, as it already gives us information about the nature of the dynamics for the spatial covariance of landmarks.
Indeed, we have three types of terms with the following effects.
\begin{enumerate}
    \item {\it The $\sigma_l$-dependent terms.} This first term is quadratic in the $\sigma$'s, not proportional to any linear or quadratic polynomial in $q$ or $p$. 
        This term is a direct contribution from the noise in the $q$ equation and will have the effect of almost linearly increasing the centred covariance, wherever a $\sigma_l>0$.
    \item {\it The $h$-dependent terms.} From the form of this term, we expect it to be proportional to a correlation. 
        It will thus have an exponential effect on the dynamics, triggered by the linear contribution of the first term.
        Notice that this term only depends on the Hamiltonian, and, thus, on the interaction between landmarks. 
        If two landmarks interact, we expect their covariance to be averaged. 
        This term will capture their averaged covariance.
    \item {\it The $\nabla_q\sigma_l$-dependent terms.} These terms are related to the noise in the $p$ equation and will account for the effect on the landmark position of the interaction of the momentum of the landmark with the gradients of the noise.
\end{enumerate}
Notice that the last two types of terms describe second order effects with respect to the spatial covariance of the landmarks, as they depend linearly on the correlations.
In the expansion of these nonlinear terms, the other correlations involving $p$ will appear. This means that all of the possible second order correlations must be computed.
This computation is done in Appendix~\ref{moment-landmark}, where we also approximate the expected value of the kernels as $\braket{K(\mathbf q)}\approx K(\braket{\mathbf q})$.
As we will see in the numerical examples in section \ref{landmark-example}, these approximations can give a reliable estimate of the landmark covariance, but this should be rigorously justified to obtain a precise estimate of the errors.
Such a study is beyond the scope of this work and is left open.

Given the equations for the moment evolution, we can estimate the parameters $\theta$ by minimising the cost function
\begin{align}
    C(\braket{\mathbf p}(0), \lambda_l) = \frac{1}{\gamma_1} \left \|\braket{\mathbf q}- \braket{\mathbf q}(T)\right \|^2 + \frac{1}{\gamma_2}\left  \|\Delta_2 \braket{\mathbf {qq}}- \Delta_2\braket{\mathbf {qq} }(T)\right \|^2\, ,
    \label{cost-moment}
\end{align}
where $\gamma_1$ and $\gamma_2$ are weights. 
We denote by $\braket{\mathbf q}$ and $\Delta_2\braket{\mathbf {qq}}$ the target first and second moments and by $\braket{\mathbf q}(T)$ and $\Delta_2\braket{\mathbf{qq}}(T)$ the estimated moments which implicitly depend on the parameters of the noise and the initial momentum. 
The choice of the norm is free here, and we chose a norm which only considers $i=j$ and normalises each term to $1$ so that all the covariance of the landmarks contribute equally to the cost.
Other choices could be made, depending on applications.
Also, the cost function may depend on other parameters, but this would make its minimisation more difficult.

To minimise the cost \eqref{cost-moment}, we can use gradient based methods such as the BFGS algorithm.
Such methods require the evaluation of the Jacobian of $C$ with respect to all of its arguments. 
Usually, for the estimation of the initial momentum, a linear adjoint equation is used. However, the derivative with respect to the parameters of the noise cannot be computed in this way.
We will evaluate the gradients symbolically by using the Theano library in Python \cite{theano}.
To improve the efficiency of the algorithm, we first match the mean final position, by only updating the initial momentum. 
Then, with this initial condition, we match for both first and second moments and update the initial momentum as well as the parameters $\lambda_l$. 
As we will see in the numerical experiments in section \ref{landmark-example}, gradient-based methods are not optimal, and genetic algorithms, such as the differential evolution algorithm of \cite{storn1997differential} designed for global minimizations, turn out to perform better.

\subsection{Maximum Likelihood and Expectation-Maximization}\label{EM}

We now describe how to estimate the unknown parameters collected in the variable $\theta$ by a maximum likelihood estimation based on the expectation-maximisation (EM) algorithm of \cite{dempster_maximum_1977}.
The likelihood of $n$ independent observations 
$(\mathbf q^1,\ldots,\mathbf q^{n})$ at time $T$ given parameters
$\theta$ takes the form
\begin{align}
  \mathcal{L}(\mathbf q^1,\ldots,\mathbf q^{n},\theta) = \prod_{i=1}^{n}
  \mathbb P_\theta(\mathbf q^i,T) = \prod_{i=1}^{n}
  \int_{\mathbb R^{dN}}
    \mathbb P_\theta(\mathbf q^i,\mathbf p,T)
  d\mathbf p \, .
  \label{eq:likelihood}
\end{align}
The parameters $\theta$ can be estimated by maximizing the likelihood, that is by letting
\begin{align*}
  \hat{\theta} \in \mathrm{argmax}_\theta\,
  \mathcal{L}(\theta; \mathbf q^1,\ldots,\mathbf q^{n}) \, .
\end{align*}
For this, the likelihood could be directly computed by numerical approximation of 
$\mathbb P_\theta (\mathbf q_i,T)$ using an approximation of the Fokker-Planck equation \eqref{FP}.
Alternatively, the fact that the stochastic process is only sampled at time $T$ suggests a missing data approach that marginalises out the unobserved trajectories up to time $T$.
Let $(\mathbf q,\mathbf p; \theta)$ denote the stochastic landmark process with parameters $\theta$, and let $P(\mathbf q,\mathbf p; \theta)$ denote its law. 
Let $\mathcal L(\mathbf q,\mathbf p; \theta)$ denote the likelihood of the entire stochastic path for a given realisation of the noise, and computed with respect to the parameter $\theta$.
Notice that this likelihood is only defined for finite time discretizations of the process and there is no notion of path density for the infinite dimensional process.
We thus proceed formally, while noting that the approach
can be justified rigorously, see e.g. \cite{donnet_parametric_2008}.
An alternative approach is to optimize the likelihood 
\eqref{eq:likelihood} directly using \eqref{eq:density_lim_E}. This is pursued in \cite{sommer_bridge_2017}.

The EM algorithm finds a sequence of parameter estimates $\{\theta_k\}$ converging to a $\hat{\theta}$ by iterating over the following two steps: 
\begin{enumerate}
  \item {\it Expectation:} Compute the expected value of the log-likelihood 
    given the previous parameter estimate $\theta_{k-1}$:
    \begin{align}
      \begin{split}
        Q(\theta| \theta_{k-1})
        &:=
        \mathbb E_{\theta_{k-1}}
        [ \log \mathcal L(\mathbf q,\mathbf p;\theta) \,|\, \mathbf q^1,\ldots,\mathbf q^{n} ] \\
        &= \sum_{i=1}^{n} \mathbb E_{\theta_{k-1}}
          [ \log \mathcal L (\mathbf q,\mathbf p; \theta|\mathbf q^i) ] \, .
      \end{split}
      \label{eq:Estep}
    \end{align}
        The expectation \eqref{eq:Estep} over the process conditioned on the
observations $\mathbf q_i$ integrates the likelihood over
all sample paths reaching $\mathbf q_i$. 
For this, we employ the bridge simulation approach developed in
section~\ref{sec:bridges}. 
For each $\mathbf q^i$, we thus exchange $(\mathbf q_t,\mathbf p_t; \theta)$ with a guided process $(\hat{\mathbf q},\hat{\mathbf p}; \theta, \mathbf q^i)$ and use the equality \eqref{Ebridges} from Theorem~\ref{thm:endpoint}. 
The expectation on the right-hand side of \eqref{Ebridges} can be approximated by drawing samples from the guided process. 
Note that the correction factor $\varphi(\mathbf q,\mathbf p|\theta_{k-1}, \mathbf q_i)$ makes the approach equal
to importance sampling of the conditioned process with the guided process as proposal distribution.

    \item {\it Maximisation:} Find the new parameter estimate 
        \begin{align}
            \theta_k= \mathrm{argmax}_\theta\, Q(\theta|\theta_{k-1})\, . 
        \end{align}
The maximisation step can be approximated by updating $\theta_k$ such that it increases $Q(\theta| \theta_{k-1})$ instead of maximising it.
This is the approach of the generalised EM algorithm \cite{neal_view_1998}.
The update of $\theta$ is thus computed by taking a gradient step
\begin{align}
    \theta_k=\theta_{k-1}+\epsilon\nabla_\theta Q(\theta| \theta_{k-1})\, ,
\end{align}
where $\epsilon>0$. 
The gradient which is evaluated for each of the sampled paths can be computed
symbolically using the Theano library \cite{theano}. Theano allows
the entire computational chain from the definition of the Hamiltonian and noise fields to the time-discrete stochastic integration to be specified symbolically. The framework can therefore automatically derive gradients symbolically before the expressions are compiled to efficient numerical code. See also \cite{kuhnel_differential_2017} for more details on the use of Theano for differential geometric and stochastic computations.
\end{enumerate}

The resulting estimation algorithm is listed in Algorithm~\ref{alg:em}.
For each $\mathbf q^i$, the expectation 
$\mathbb E_{\theta_{k-1}}[\log \mathbb P_\theta(\mathbf q,\mathbf p|\mathbf q_i)]$ is estimated by sampling $N_{\mathrm{bridges}}$ bridges. 
The algorithm can perform a fixed number $K$ of updates to the estimate $\theta_k$ or stop at
convergence. 

\begin{algorithm} \DontPrintSemicolon \SetAlgoLined
\tcp{Initialization}
  $\theta_0\leftarrow$ initialization value
  \\
  \tcp{Main loop}
  \For{$k=1$ to $K$}{ 
    \For{$i=1$ to $n$}{ 
      \For{$j=1$ to $N_{\mathrm{bridges}}$}{
        sample bridge $(\hat{\mathbf q}(\omega_{j}),\hat{\mathbf p}(\omega_{j}); \theta_{k-1}, \mathbf q^i)$\\
    compute $\log \mathbb P_{\theta_k}(\hat{\mathbf q}(\omega_{j}),\hat{\mathbf p}(\omega_{j}))$ and 
      $\varphi(\hat{\mathbf q}(\omega_{j}),\hat{\mathbf p}(\omega_{j}))$\\
      }
      set $C_{\mathbf q_i}=\mathrm{mean}_j\big(\varphi(\hat{\mathbf q}(\omega_{j}),\hat{\mathbf p}(\omega_{j}))\big)$\\
    set $\mathbb E_{(\mathbf q,\mathbf p)|\mathbf q^i}[\log \mathbb P_{ \theta_{k-1}}(\mathbf q,\mathbf p)]\approx
      C_{\mathbf q_i}^{-1}\mathrm{mean}_j\big(
    \log \mathbb P_\theta(\hat{\mathbf q}(\omega_{j}),\hat{\mathbf p}(\omega_{j}))
      \varphi(\hat{\mathbf q}(\omega_{j}),\hat{\mathbf p}(\omega_{j}))
      \big)$\\
    }
    set $Q(\theta|\theta_{k-1})=\mathrm{mean}_i\big(
    \mathbb E_{(\mathbf q,\mathbf p|\mathbf q_i)}[\log
    \mathbb P_{\theta_{k-1}}(\mathbf q,\mathbf p)]
      \big)
      $\\
      compute $\nabla_{\theta}Q(\theta|\theta_{k-1})$\\
      update $\theta$: $\theta_k=\theta_{k-1}+\epsilon\nabla_{\theta}Q(\theta|\theta_{k-1})$\\
  }
    \caption{\small Stochastic EM-estimation of parameters $\theta$.}
  \label{alg:em}
\end{algorithm}

\section{Numerical examples}\label{landmark-example}

We now present several numerical tests of the stochastic perturbation of the landmark dynamics. 
In particular, we want to illustrate aspects of the effect of the noise on the
landmarks and test the algorithms for estimation of the spatial correlation of the noise. 
We will focus here on synthetic examples and refer to
\cite{arnaudon2016stochastic2} for an application of the methods on a dataset of
Corpora Callosa shapes represented by 77 landmarks.
The numerical simulations of this work have been done in Python, using the
symbolic computation framework Theano \cite{theano}.
The code is available from the public repository
\url{https://bitbucket.org/stefansommer/stochlandyn}.
See also \cite{kuhnel_differential_2017} for additional details.

\subsection{Solution of the Fokker-Planck equation}

We first consider a simple experiment with a single landmark, subjected to a square
array of noise fields with Gaussian noise kernel. 
To a first order approximation, the mean trajectory of the landmark is a straight line with constant momenta as the Hamiltonian is a pure kinetic energy. 

\begin{figure}[htpb]
    \centering
    \subfigure[Moment dynamics]{\includegraphics[scale=0.45]{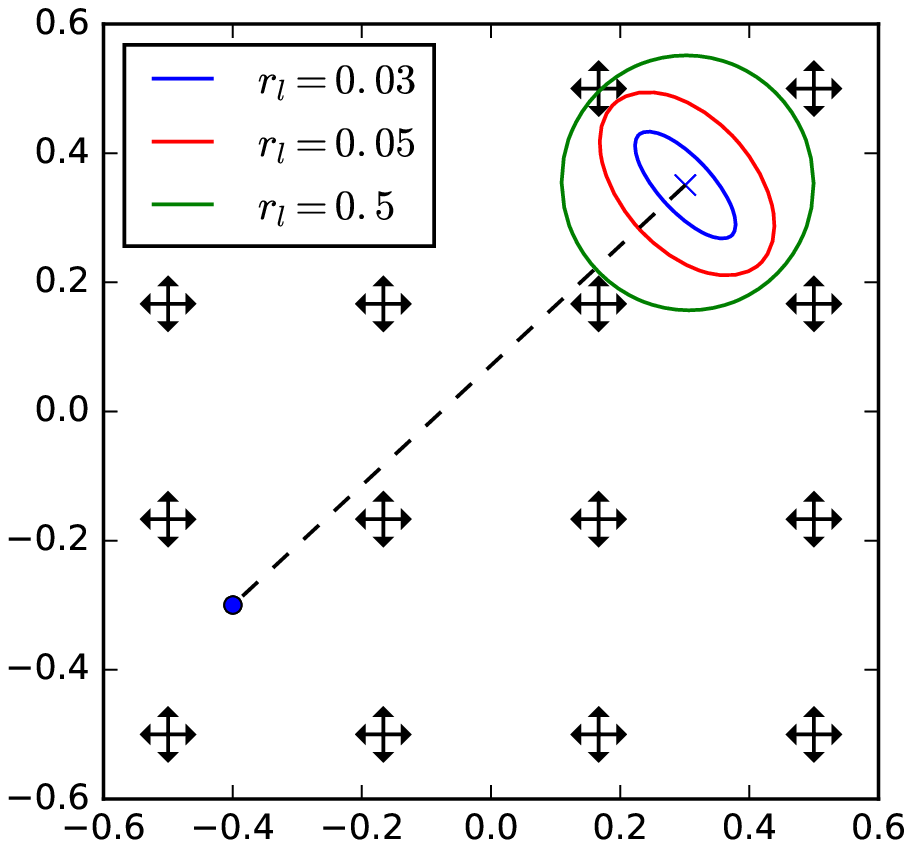} 
    \label{fig:moment-single} }
    \subfigure[$r_l=0.5$]{\includegraphics[scale=0.45]{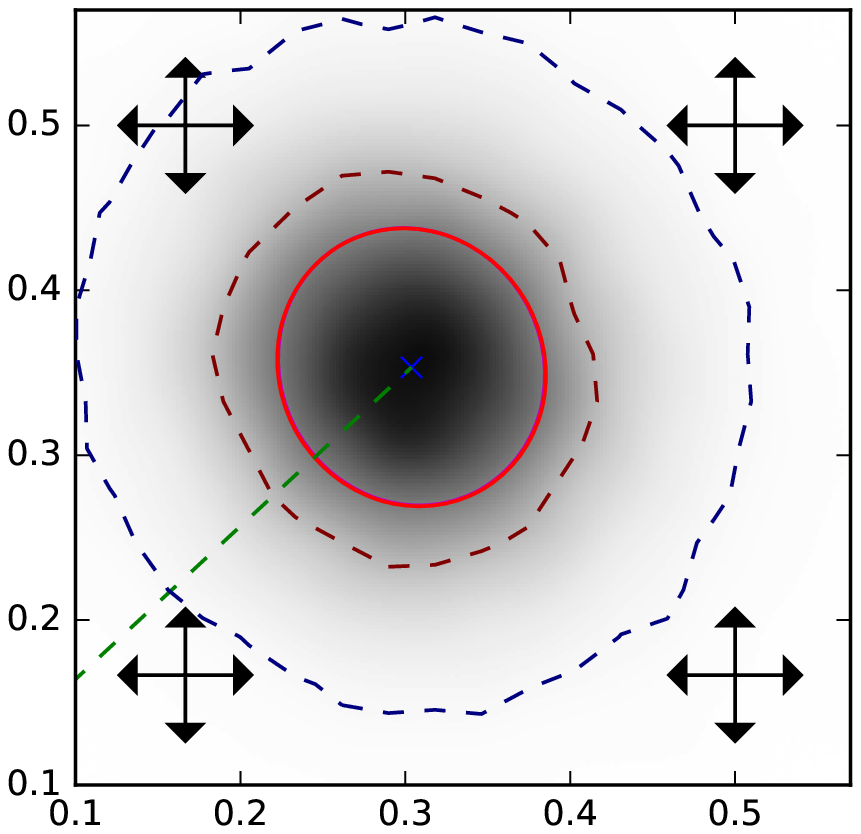}\label{fig:moment-single-samplea}}
    \subfigure[$r_l=0.03$]{\includegraphics[scale=0.45]{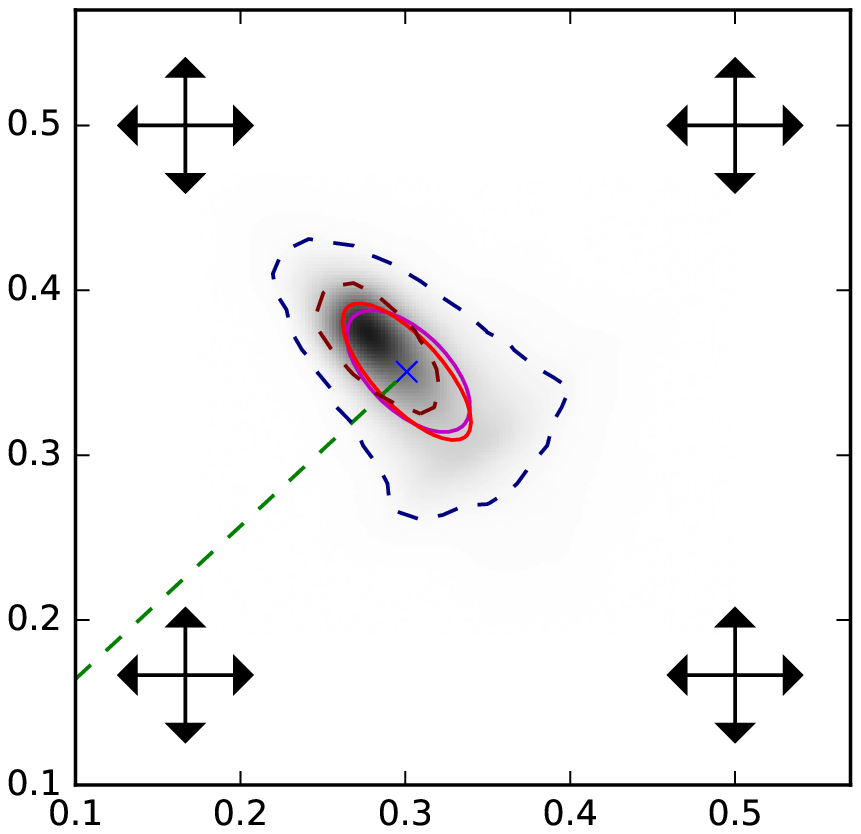}
    \label{fig:moment-single-sampleb}
    }
    \caption{\small 
    We plot on the left panel three simulations of a single landmark dynamics subject to an array of Gaussian noise fields. 
    Their parameters are either $\lambda_l= (0.08,0)$ or $\lambda_l= (0,0.08)$. 
    We used three different length-scales $r_l$ for the noise fields to analyse the effects of small or large Gaussian fields $\sigma_l$ on the mean path of the landmark (with Gaussian kernels) and final covariance (ellipses). 
    We used $2000$ timesteps to integrate the moment equation forwards from $t=0$ to $t=1$. 
    The initial momenta were found using a shooting method in the deterministic landmark equation.
    We display on the other two panels a zoom on two of the simulations of the left panel and compare the estimation of the final covariance from a Monte Carlo sampling of $10,000$ realisations (magenta) and from the solution of the moment equation (red) for two values of $r_l$. 
    The black density represents the probability distribution of the landmark
    estimated from samples, and the dashed lines two level sets.}
\end{figure}

This experiment is displayed in Figure~\ref{fig:moment-single} where we used two arrays of four by four noise fields with either $\lambda_l= (0.08,0)$ or $\lambda_l= (0,0.08)$ and three values of the noise radius $r_l=0.5,0.05,0.03$. 
For large values of $r_l$, the noise is mostly uniform and the gradients of the $\sigma_l$ are negligible.
 The only term contributing to the final covariance of the landmark is therefore $\Braket{\sigma_l^\alpha(\mathbf q_i) \sigma_l^\beta (\mathbf q_j)}$.  
Notice that because there is only one landmark, thus a linear drift, the deterministic part does not affect the covariance.
This term only has a linear effect on the covariance which is thus an ellipse proportional to the noise fields. 
Here the noise has equal strength in both the $x$ and $y$ coordinate thus we observe a circle. 
For smaller values of $r_l$, the gradient of $\sigma_l$ is large enough for the other term in the momentum equation which couples the momentum and the gradient of $\sigma$ to affect the moment dynamics. 
This effect is shown in Figure~\ref{fig:moment-single} where the covariance has a larger value in the direction of the gradient of $\sigma_l$ than in the other directions.
This is explained by the fact that this coupling is of the form $\frac{\partial}{\partial q_i}(\sigma_l(\mathbf q_i)\cdot \mathbf p_i)$, thus the ellipse is in the direction of the gradient, not the momenta. 
Notice that there should be some noise in the direction of the momenta for this term to have an effect. 

Using the same experiment, we compared the estimation of the covariance from the moment equation with a direct sampling obtained by solving the stochastic landmarks equations. 
We did this experiment for $r_l= 0.5$ and $r_l=0.03$ in Figure~\ref{fig:moment-single-samplea},\ref{fig:moment-single-sampleb}. 
The left panel with $r_l=0.5$ shows an excellent agreement between the two
methods but the right panel with $r_l=0.03$ shows differences.
This type of error in the estimation of the covariance is explained by the fact that the final distribution has a large skewness. 
This effect is not captured by the moment equations as we neglected the effects of order higher than $2$, and the skewness is a third order effect described by terms such as $\Delta_3\braket{q_i^\alpha q_j^\beta q_k^\gamma}$.
Nevertheless, the final covariance is close enough to the correct one to be able to use it in the estimation of the noise fields. 
This demonstrates that even in rather extreme cases, which are not realistic for applications, the second order approximation used to derive the moment equation still produces reliable results. 

\begin{figure}[htpb]
    \centering
    \subfigure[Landmark interaction]{\includegraphics[scale=0.6]{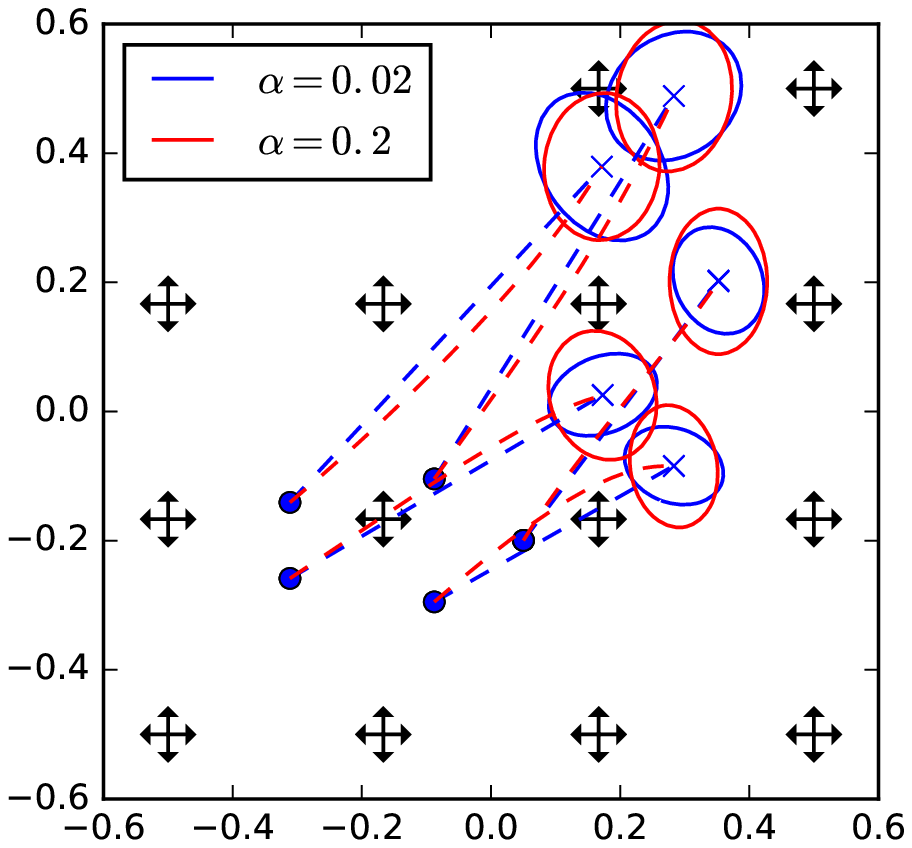}}
    \subfigure[Sampling comparison]{\includegraphics[scale=0.6]{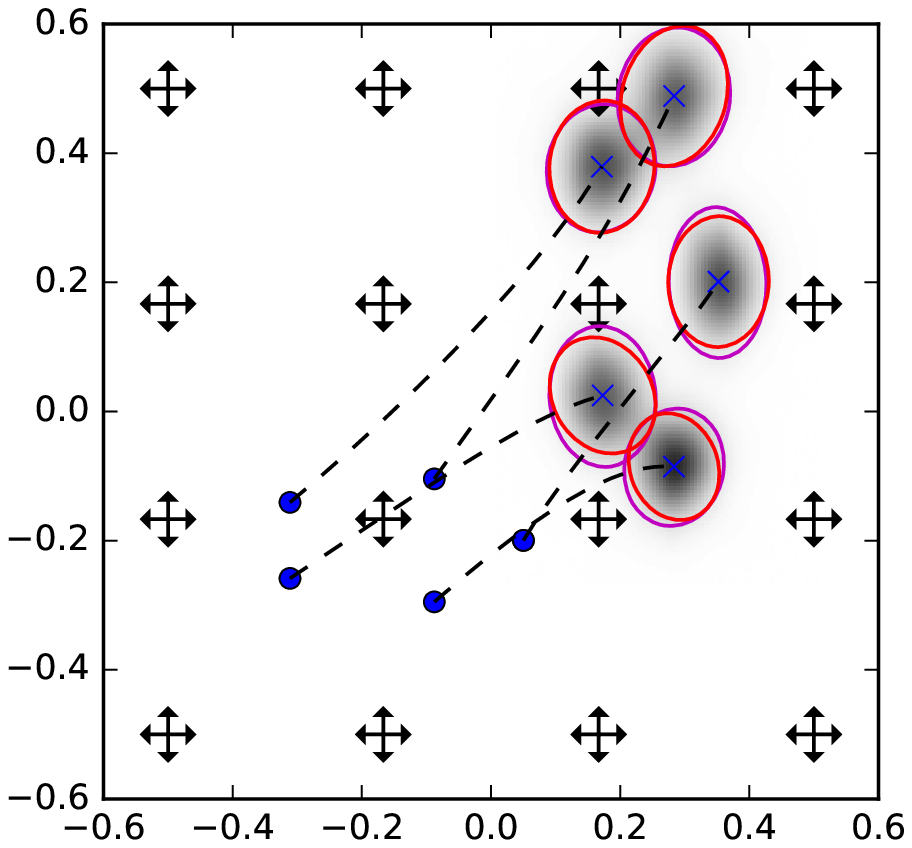}}
    \caption{\small In these two panels, we present a study similar to Figure~\ref{fig:moment-single} but with $5$ interacting landmarks. 
    On the left, we illustrate the effect of varying the landmark length scale
    $\alpha$, and, on the right, we compare the result of the moment equation and a Monte Carlo simulation in the case of $\alpha=0.1$, with also $10,000$ realisations. 
    As before, the black density plot shows the probability density of the landmarks, the magenta curve the covariance from sampling and the red curve the covariance from the moment equation. }
    \label{fig:moment-ellipse}
\end{figure}

We did a similar experiment but with $5$ interacting landmarks arranged in an
ellipse configuration and with initial conditions obtained from the
deterministic shooting method such that the endpoint of the deterministic landmark equations match another ellipse.
We display these experiments in Figure~\ref{fig:moment-ellipse} with the same noise as in the previous tests and with $r_l = 0.2$. 
We modified here the landmark interaction length scale $\alpha$ from $\alpha=0.02$ (no-interactions) to $\alpha=0.2$ (neighbours interactions) to see the effect of the noise with the landmark interactions.
Due to the different length scales, the trajectories to the target ellipse are slightly different so the landmarks will be subject to different noise. 
The larger length scale has the effect of reducing the differences between the covariances of interacting landmarks. 

\subsection{Bridge Sampling} \label{sec:bridgeex}

Here, we aim at visualising the effect of the constructed bridge sampling scheme.
In Figure~\ref{fig:stoch_bridge_vis}, the effect of the guiding term is
visualized on a sample path. At $t=T/2$, the predicted endpoint
$\phi_{t,T}(\hat{\mathbf q}(t),\hat{\mathbf p}(t))$ is calculated and the
difference $\phi_{t,T}(\hat{\mathbf q},\hat{\mathbf p})-\mathbf v$ is used to
guide the evolution of the path towards the target $\mathbf v$. The guiding term ensures
that $\hat{\mathbf q}$ will hit $\mathbf v$ almost surely at time $T$. Notice
that the difference $\phi_{t,T}(\hat{\mathbf q},\hat{\mathbf p})-\mathbf v$ is
generally much smaller than the difference $\hat{\mathbf q}-\mathbf v$. The introduction of
$\phi_{t,T}$ therefore implies that the process is modified less giving more
likely bridges. Without $\phi_{t,T}$, the process is generally attracted too
quickly towards the target as can be seen by the landmarks at $t=0.5$ being
almost at their final positions in Figure~\ref{fig:stoch_bridge_vis} (b). The
path thus overshoots the target. This effect is not present when using
$\phi_{t,T}$ in Figure~\ref{fig:stoch_bridge_vis} (a).
\begin{figure}[htpb]
    \centering
    \subfigure[guided process using $\phi_{t,T}$]{
    \includegraphics[width=.46\columnwidth,trim=130 90 130 120,clip=true]{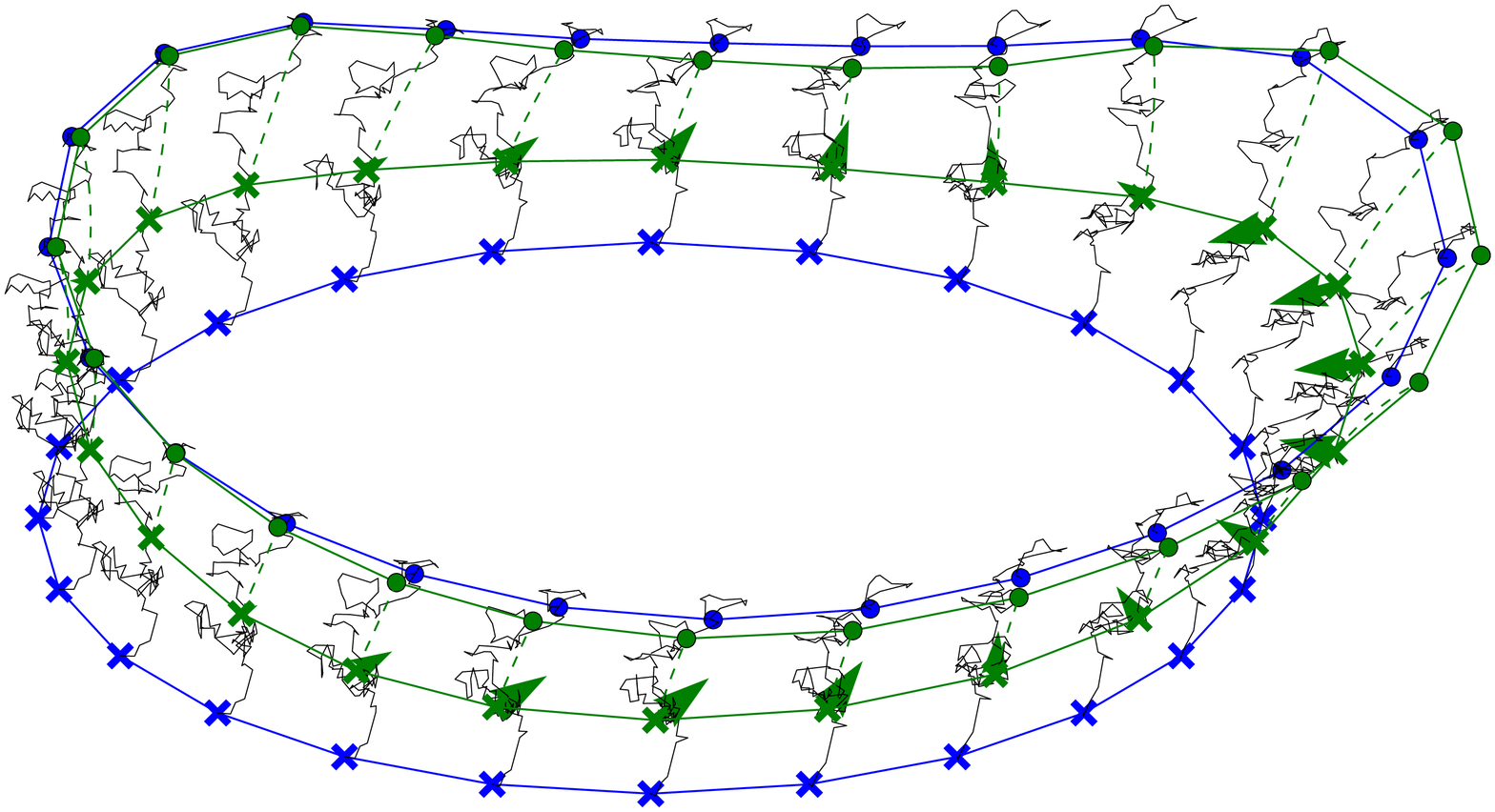}
  }
    \subfigure[guided process without $\phi_{t,T}$]{
    \includegraphics[width=.46\columnwidth,trim=130 90 130 120,clip=true]{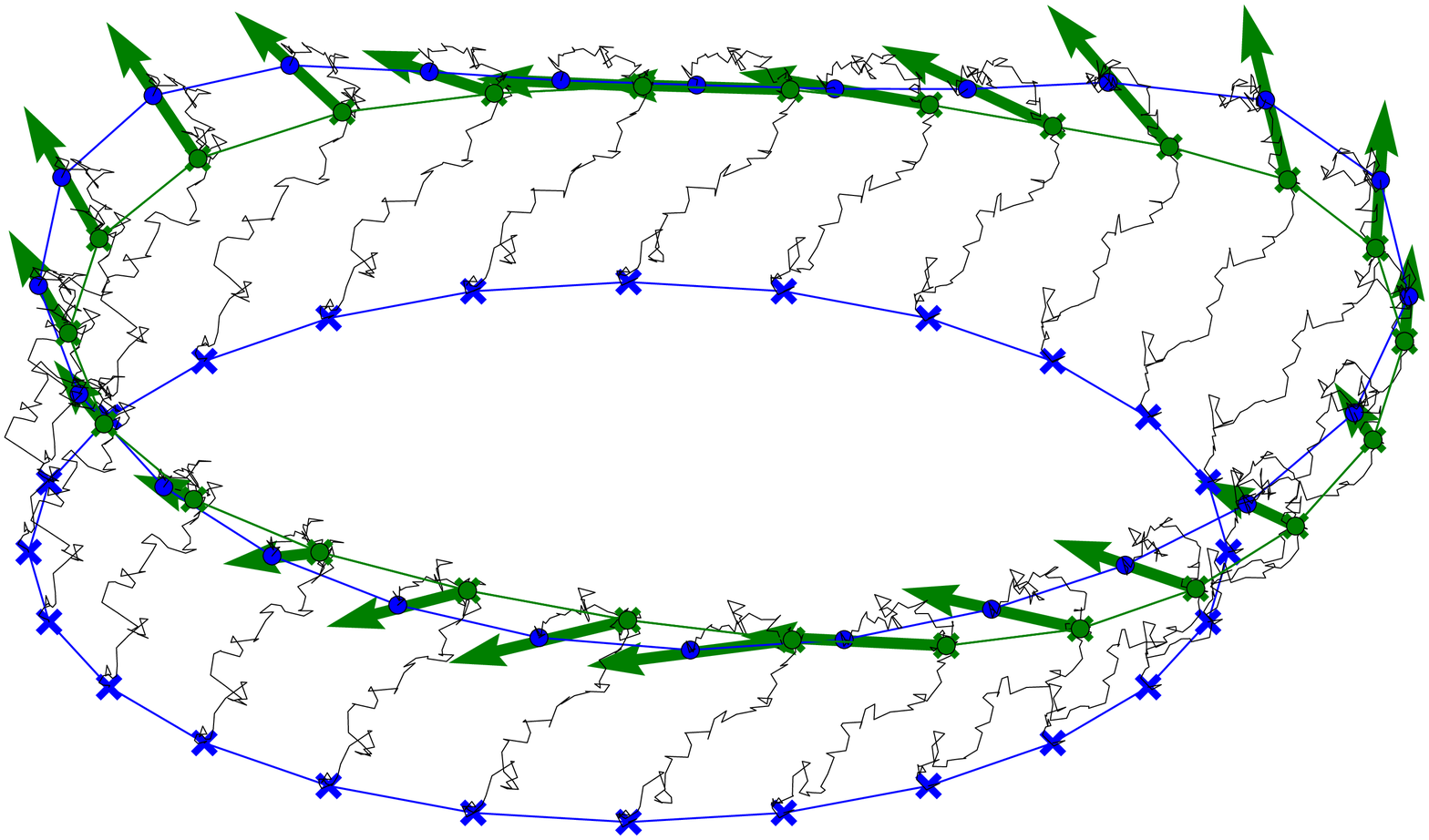}
  }
    \caption{\small
    (a) Visualization of the process \eqref{eq:guiddiff}. From the
    initial landmark configuration $\mathbf q(0)$ (blue crosses), the target
    $\mathbf v$ (blue dots) is hit using the modified process 
    $(\hat{\mathbf q},\hat{\mathbf p})$ (black lines: $\hat{\mathbf q})$. At time $t=T/2$, 
    $\phi_{t,T}(\hat{\mathbf q}(t),\hat{\mathbf p}(t))$ is calculated 
    (green dots) and the process is guided by
    $-(T-t)^{-1}\Sigma\Sigma_{\mathbf q}^{\dagger}(\phi_{t,T}(\hat{\mathbf q},\hat{\mathbf p})-\mathbf v)$ ($\mathbf q$ part: green arrows, length doubled for visualization).
    The use of $\phi_{t, T}$ implies small guiding and high probability sample bridges.
    (b) Similar setup but using the guiding term 
    $-(T-t)^{-1}\Sigma\Sigma_{\mathbf q}^\dagger(\hat{\mathbf q}-\mathbf v)$ 
    without $\phi_{t,T}$. The momentum couples with the guiding term, and, intuitively, the path travels too fast towards the target ($\mathbf q$ at $t=T/2$ much closer than halfway towards $\mathbf v$) and overshoots.
    This effect gives low probability sample bridges and the guiding term (green arrows) is much larger than in (a).
  }
    \label{fig:stoch_bridge_vis}
\end{figure}

\subsection{Estimating the noise amplitudes}

We here aim at estimating the noise amplitude from sampled data using both the
method of moments and maximum likelihood.

\begin{figure}[htpb]
    \centering
    \subfigure[Genetic algorithm]{\includegraphics[scale=0.6]{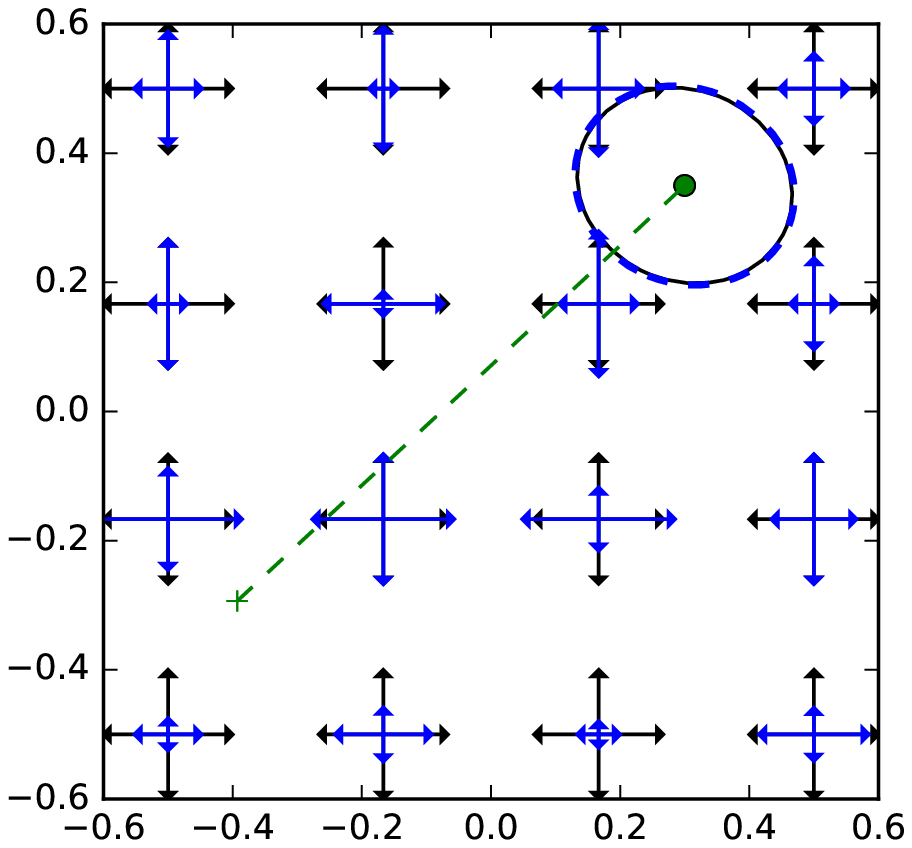}}
    \subfigure[Gradient descent]{\includegraphics[scale=0.6]{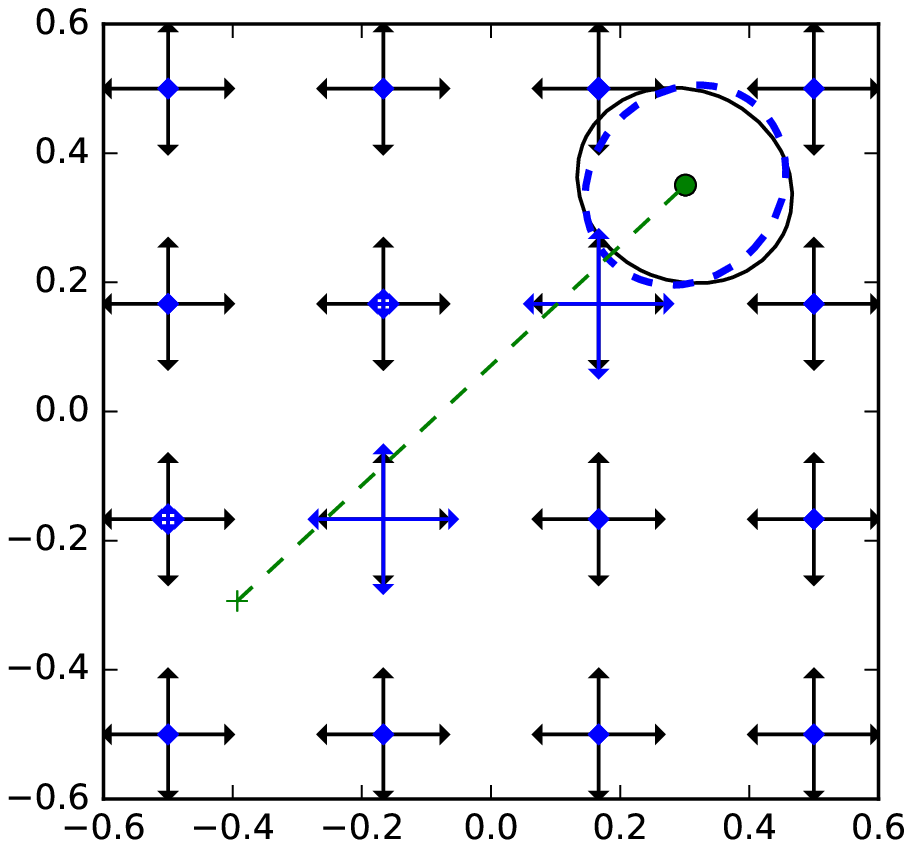}}
    \caption{\small 
        This figure shows the results of estimating parameters of the $\sigma_l$ fields with the moment equation.
    Black arrows: The original $\sigma_l$. Blue arrows: The estimated $\sigma_l$.
        The error in the final covariance for the differential equation genetic algorithm is of the order of $10^{-10}$ and for the BFGS algorithm, it is of the order of $5\cdot 10^{-2}$. 
  }
  \label{fig:sigma_single}
\end{figure}

We first use the genetic algorithm of \cite{storn1997differential} called differential evolution algorithm  to minimise the cost function $C$ in \eqref{cost-moment}. 
This algorithm has in experiments proven successful in avoiding local minima
during the optimisation.
We compared it with the standard BFGS gradient descent algorithm with a single landmark in
Figure~\ref{fig:sigma_single}.
This algorithm relies on the Jacobian of the cost functional computed symbolically using the Theano package of \cite{theano}. 
It is able to estimate the noise amplitude along the trajectory of the landmark where the signal from the gradient of $C$ is the strongest. 
For the other regions of the image, the algorithm cannot detect any signal to update the noise fields. 
The genetic algorithm can overcome this issue as it is based on evolving a population of solutions which uniformly cover the entire parameter space. 
In this way, the solution obtained is a better approximation of the global minimum of $C$. 
It is interesting that even if the final moment of figure \ref{fig:sigma_single} is well matched with the genetic algorithm, the noise amplitude is not perfectly recovered. 
This illustrates the expected degeneracy of this model for a low number of landmarks. 
When more landmarks are added, the noise amplitude estimation is closer to the expected one, see figure \ref{fig:sigma_ellipse} below.
In these experiments, we set the initial variance of the momentum, and the position/momentum correlation to $0$ for simplicity, and because we used these values to generate the final variance. 
In practice, one may expect to have other prior for the initial variance of the momenta or can try to find it as an unknown parameter of the problem. 
Having them as unknown may results in a large parameter space, thus simplifications such as all the landmarks have the same initial variance in the momentum could be used. 
We leave such investigation for later when applied to real data, with a possible meaningful prior.
\begin{figure}[htpb]
    \centering
    \subfigure[Landmark and estmated noise, low momentum]{\includegraphics[width=.4\columnwidth]{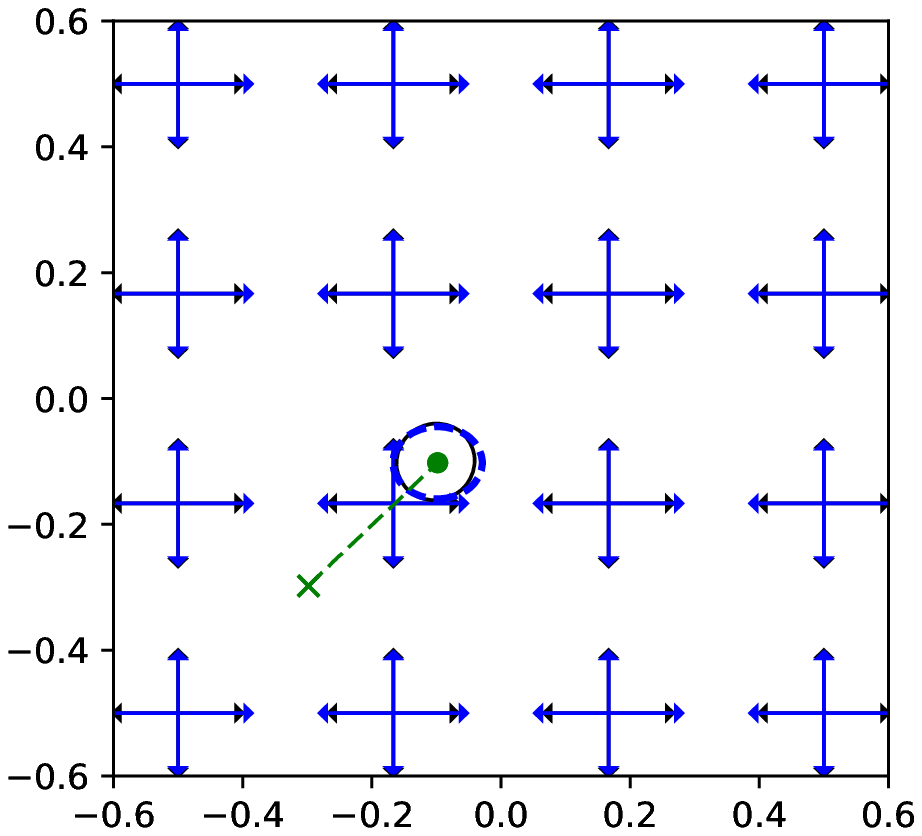}} 
    \subfigure[Landmark and estmated noise, high momentum]{\includegraphics[width=.4\columnwidth]{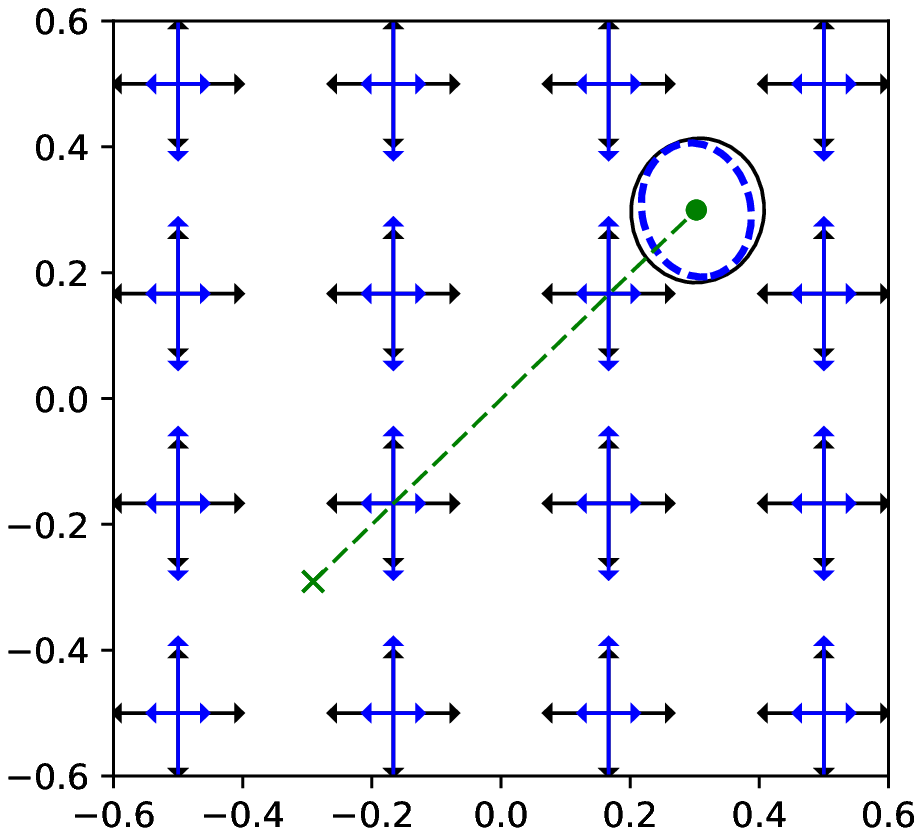}} 
    \caption{\small
The noise amplitudes are here estimated using maximum likelihood with the bridge sampling scheme. We assume $\lambda_l$ are equal for all $l$ resulting in two parameters for the noise. Thus by assumption, the estimated noise will be uniform over the domain. (a) The parameters are estimated correctly in the low momentum setting. (b) While the sample covariance matches the covariance of the original data in the high momentum case, the estimated parameters are different from the original.
  }
  \label{fig:sigma_single_mle}
\end{figure}

In Figure~\ref{fig:sigma_single_mle}, the
same experiment is performed with MLE and the bridge sampling scheme. 
The noise kernels are in this experiment cubic B-splines placed in a
grid providing a partition of unity. In the optimisation, $\lambda_l$ are fixed
to be equal for all $l=1,\ldots, J$ implying that the total noise variance will be uniform at each point of the domain. The figure shows the experiment performed with low momentum (Figure~\ref{fig:sigma_single_mle} (a)) and high momentum 
(Figure~\ref{fig:sigma_single_mle} (b)). 
In the low momentum case, the noise parameters are estimated correctly and the sample covariance with the estimated parameters matches the covariance of the original samples. 
The SDE \eqref{eq:guiddiff} is here used for the bridge sampling scheme. In contrast to the previous
method, the algorithm is now optimising for the maximum likelihood of the
samples and not directly for matching the final covariance. A higher difference in the endpoint
covariance is, therefore, to be expected.

With higher initial momentum, the coupling
between the guidance and noise makes the scheme \eqref{eq:guiddiff} overestimate
the variance. Instead, the guidance term \eqref{eq:differential} is used. Notice
that even though the sample covariance with the estimated parameters matches the
covariance of the original samples, the estimated $\lambda_l$ are different than the
original values. This indicates that the maximum likelihood estimate of the
parameters may not match the original setting in the highly nonlinear case
occurring when the coupling between noise and momentum is high. Because of the
nonlinearity, the noise is able
to generate horizontal variation in the position of the final the landmark even though 
the variation with the estimated
parameters are mainly vertical along the trajectory.

\begin{figure}[htpb]
    \centering
      \subfigure[Genetic algorithm]{\includegraphics[scale=0.6]{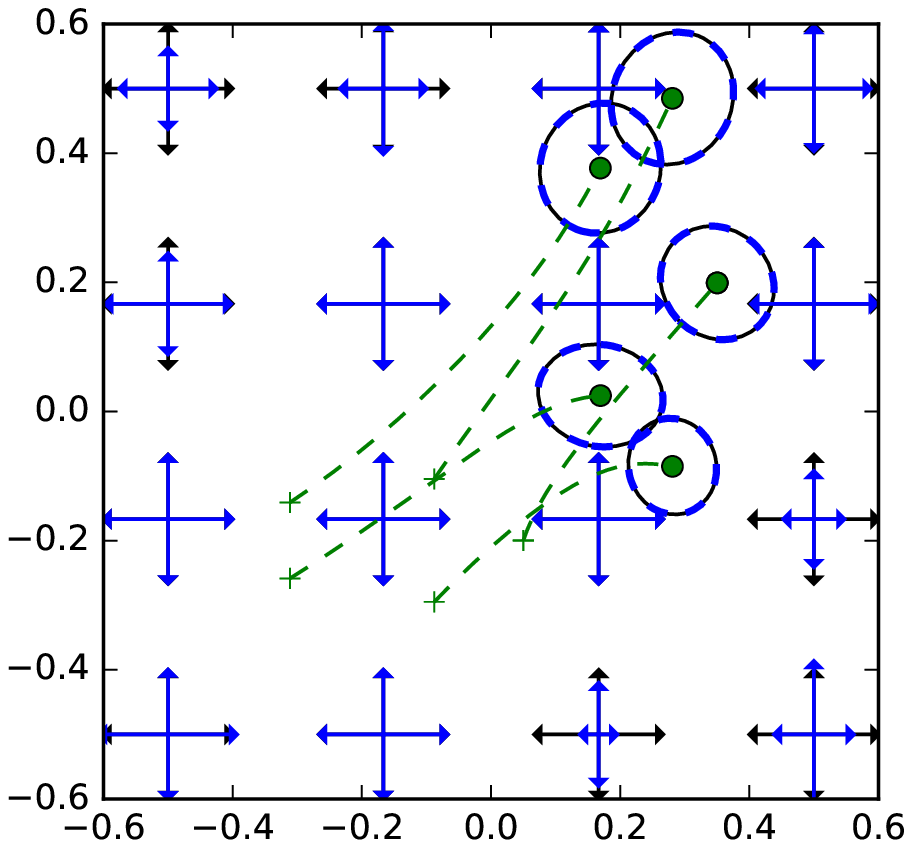}}
      \subfigure[Gradient descent]{\includegraphics[scale=0.6]{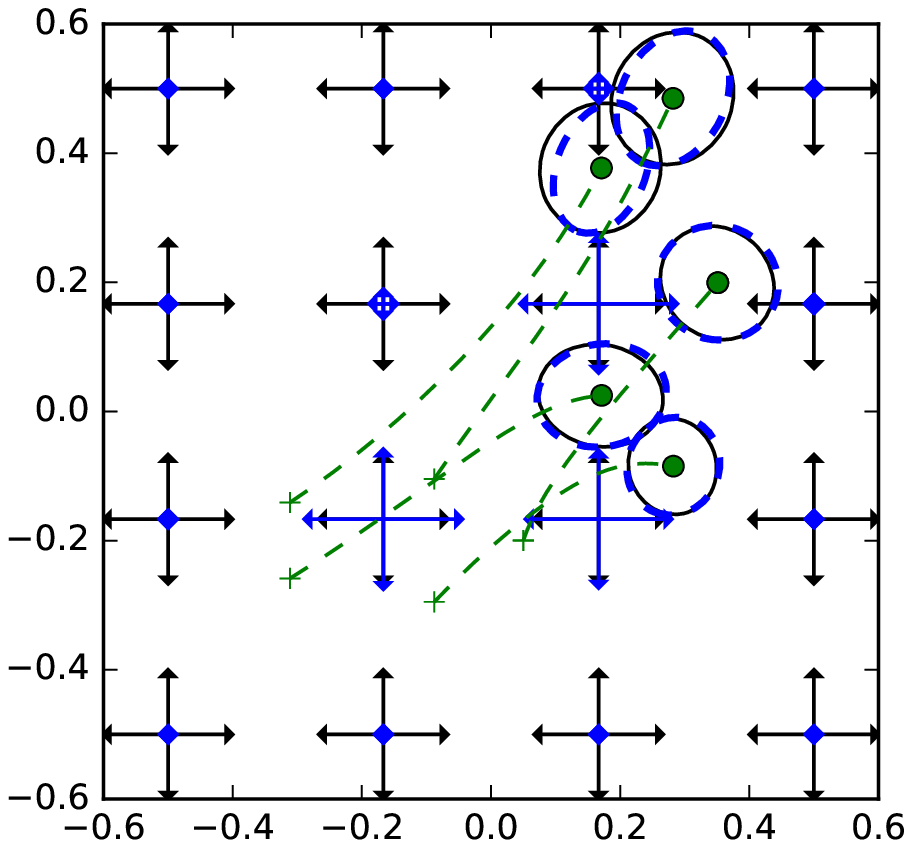}}
    \caption{\small This figure shows the result of noise estimation using the
      moment equation as in Figure~\ref{fig:sigma_single} but for the ellipse experiment. 
        The error in the final covariance for the differential equation genetic algorithm is of the order of $10^{-9}$ and for the BFGS algorithm it is of the order of $5\cdot 10^{-3}$. 
  }
  \label{fig:sigma_ellipse}
\end{figure}

Figure~\ref{fig:sigma_ellipse} and Figure~\ref{fig:sigma_ellipse_mle} show the result of noise estimation using different configurations of the ellipse and both the
method of moments and MLE. The noise parameters $\lambda_l$ are allowed to vary
with $l$ in both cases giving spatially non-uniform noise amplitude. The
algorithms find the correct noise parameters in the areas covered by the landmark
trajectories.

\begin{figure}[htpb]
    \centering
    \subfigure[Landmarks and estmated noise]{\includegraphics[width=.4\columnwidth]{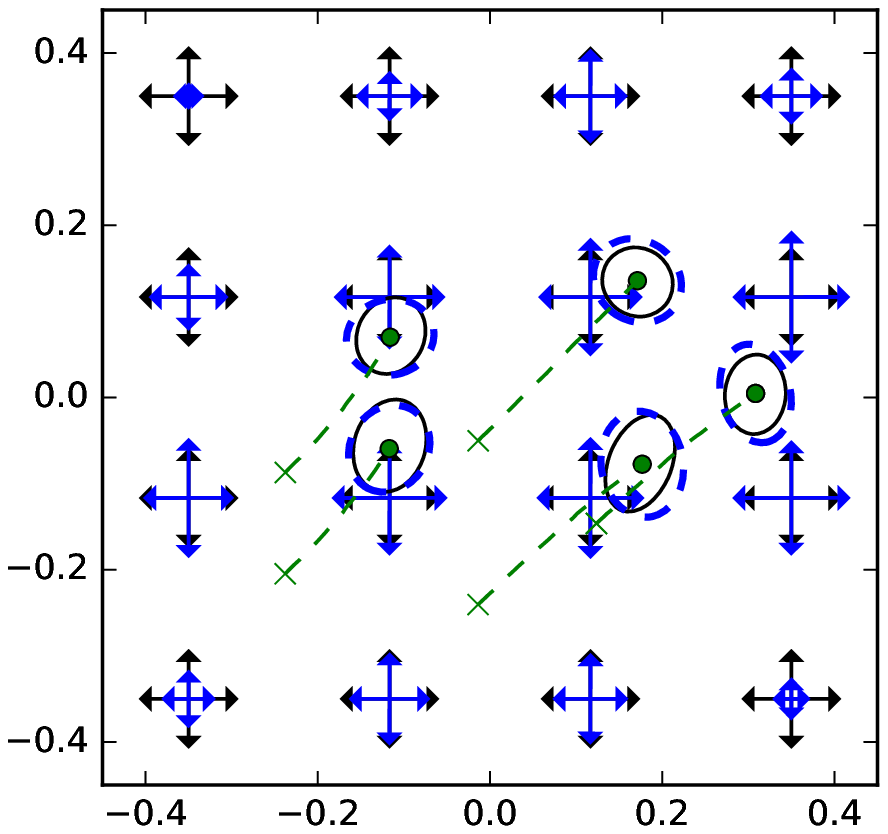}}
    \subfigure[EM iterations]{\includegraphics[width=.4\columnwidth]{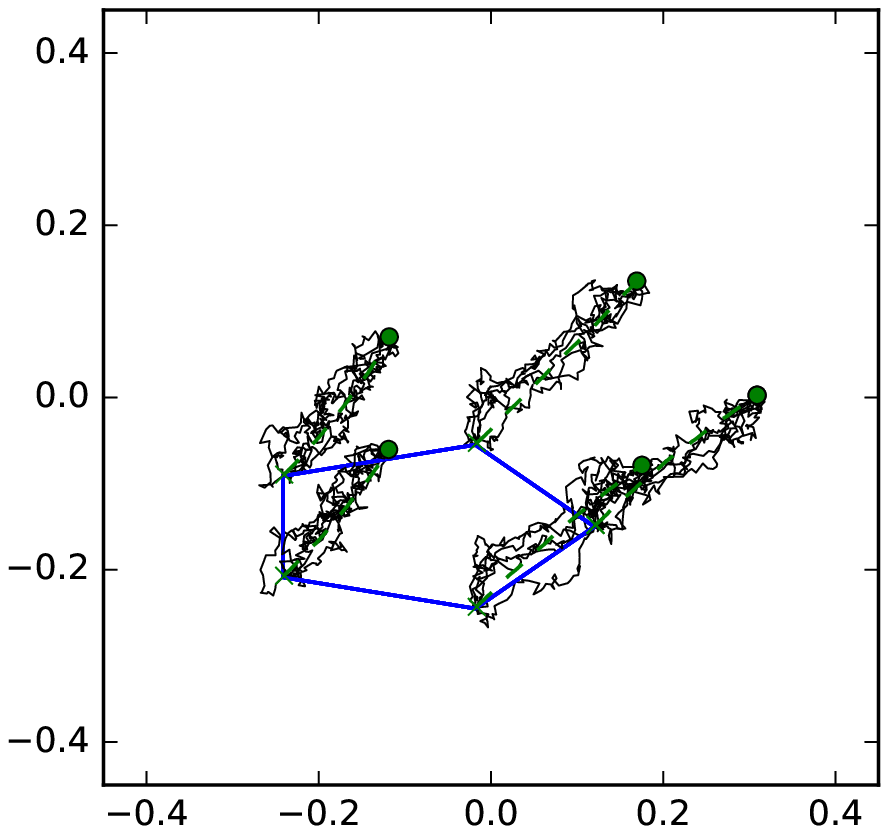}}
    \caption{\small (a) Setup as Figure~\ref{fig:sigma_single_mle} but with five
  landmarks in an ellipsis configuration. (b) Examples of simulated bridges as used in the approximation of the $Q$ function in the EM procedure.
  }
  \label{fig:sigma_ellipse_mle}
\end{figure}

\section{Discussion and Outlook}

As the first topic of this work, we raised the issue of how to
include stochasticity and uncertainty in the framework of large deformation
matching in a systematic and geometrically consistent way. 
In section \ref{theory}, we exposed a general theory of stochastic deformations
in the LDDMM framework, based on the momentum map representation of images in
\cite{bruveris2011momentum}, by
introducing spatially correlated time-dependent noise in the reconstruction relation that is used to compute the deformation map from its velocity field. 
By taking this approach, we have preserved most of the advantages of the theory of reduction by symmetry. In particular, we have preserved the capability of applying this stochastic model to general data structures. 
The dynamical equation is the stochastic EPDiff equation, in which the noise appears in a certain multiplicative form with spatial correlations encoded in a set of spatially dependent functions $\sigma_l$. 
The key feature of this noise is that the structure of the original
equation provided by the theory of reduction by symmetry still remains. 
In particular, the persistence of the momentum map allows for both exact and inexact matching in this stochastic context. 

The question of local in time existence and uniqueness of this equation is important, but it is not treated in this work. 
We refer to \cite{brzezniak2016existence} for such a study for the 2D Euler equation and to \cite{holm2017euler} for the 3D case.  
Another possible extension would be to consider an infinite number of $\sigma_l$
fields with an infinite dimensional Wiener process for the stochastic EPDiff
equation as investigated in \cite{vialard2013extension}, also in the context of stochastic shape analysis. 
We considered have time-independent $\sigma_l$ fields. 
However, there are several approaches for making these fields time dependent besides simply prescribing them as functions of time.
Some of these other approaches were derived by \cite{holm2017stochastic} in the context of stochastic
fluid dynamics. 
In particular, the idea of having the noise fields being carried by the deformation could be of interest in this context as well. 
Yet another possibility would be to introduce two different types of noise fields, one modelling small-scale noise correlations and the other one for larger scale noise correlations. 
In this case, it would make sense for the small scale variability to be advected
by the large-scale deformation, as in the multi-scale model of \cite{holm2012multiscale}. 

After defining the general model in section \ref{theory}, we applied it to exact landmark matching in section \ref{landmark-matching}, which is the simplest non-trivial application of the LDDMM framework. 
This approach allowed investigation of the effects of the noise on large deformation matching in a finite dimensional model.
Introducing the noise in both the momentum and the position equations of the
landmarks made the landmark trajectories rougher than they would have been, otherwise, had the noise been only in the momentum equation. 
The noise in the position equation also increased the flexibility for controlling the landmark trajectories.
This flexibility was used to derive a scheme for simulating diffusion bridges with
corresponding sampling correction factor that allowed evaluation of expectations with respect to the original conditioned landmark dynamics. 
In addition, we used the finite dimensionality of the system to derive the Fokker-Planck equation and apply it to the dynamics of moments of the probability distribution function. 

Some modifications to the standard theory of diffusion bridges were made to accommodate
the case of landmark dynamics and to improve the speed and accuracy of the
estimation of expectations over conditioned landmarks trajectories. 
The landmarks represent the simplest cases for numerical shape analysis, especially in the context of stochastic systems. 
We used a simple Heun method to solve the stochastic landmark equations. 
Higher order integration schemes could have been used, such as the stochastic variational integrators of \cite{holm2016stochastic}. 
The next step in extending the landmark example is to allow for inexact matching
and to study the trade-off between the effect of noise and the tolerance of the matching. 

Several issues regarding ergodicity and other properties of the Kolmogorov operator were left open in this paper, whose future treatments could add to the theoretical understanding of the model.
Finally, the stochastic LDDMM framework can be applied to other types of data structures, in particular to images with inexact matching as originally done in \cite{beg2005computing}. 
Studying the effects of the stochastic model on other nonlinear data structures such as curves or surfaces would also be of great interest for future works. 

As a second topic, we raised the issue of determining the noise correlation
from data sets which would allow the theory of stochastic
deformations to be used with observed data. 
We developed two independent methods which we implemented and applied to several test examples. 
First, the moment equation allows matching of the sample
moments. It is deterministic, making optimisation of the noise parameters stable
and efficient, and it does not require special conditions on the noise fields.
Its accuracy depends on the approximation order in the moment equation. 
Scaling the moment equation to a large number of landmarks or continuous shapes
such as curves may be challenging as well as optimising for a high number of unknown parameters. 
In the landmark experiments we presented above, this approach allowed us to reliably estimate the underlying noise, but an extension of this method to infinite dimensional representations of shapes is not possible unless a discretized version of the equations is used.
For this method, we also made two approximations that could possibly be improved elsewhere. 
One of them is the truncation to retain only second order terms in the moment equation, and the other is to approximate the expectation of a kernel function as the kernel of the expected values. 
Both approximations were shown to work well for cases with small enough noise, which would be the case in most applications. 
Finally, it is also important to notice that we did not use the freedom of the initial value of the variance of the momentum and the position/momentum correlation. 
These parameters could either be inferred using this scheme (with a larger parameter space) or be obtained by using other information about the data. 

The second method is the MLE optimisation, a Monte Carlo method which evaluates expectations over conditioned stochastic trajectories.
The bridge sampling scheme we used requires the noise fields to span the entire $\mathbf q$-space to allow guiding the landmarks towards their target. 
With high nonlinearity as may happen with large initial momentum and high gradients of the noise fields,
guiding the trajectories towards their target with high-probability bridges can be challenging. 
In general, the stochastic nature of the algorithm makes it harder to control than the matching provided by the moment equation.
The bridge sampling scheme can be interpreted as a gradient flow, as discussed in \cite{arnaudon2016stochastic2} when applied to images. 
It allows the likelihood of observed images to be evaluated without a prior image registration step.
The method may thus be applicable to image analysis problems, and more generally for inexact matching of shapes in which case the requirement of the noise to span the $\mathbf q$-space may be relaxed.

The inference of noise parameters treated here can be extended to more general statistical inference problems on shape spaces. 
Inferring the initial $\mathbf q_0$ positions can be regarded as estimating a most-likely mean, thereby drawing similarities to the Frech\'et mean \cite{frechet_les_1948} and to means defined by the maximum likelihood of probability distributions in nonlinear spaces \cite{sommer_anisotropic_2015}. 
When generalised to images, the approach can be used for simultaneous estimation of template images \cite{joshi_unbiased_2004}, possible time-dependent transformations in the momentum as caused for example by 
disease processes \cite{muralidharan_sasaki_2012}, and population variation in the spatial noise correlation.

It is possible to generalise the stochastic equations we have introduced here to allow for time-dependent noise amplitude as done in \cite{holm2017stochastic} for fluid dynamics. 
In this case, the noise fields could be advected by the diffeomorphism and only the initial condition of the noise field would have to be inferred. This requires the choice of a meaningful advection scheme. By construction of its metric LDDMM is right-invariant, and the flow energy is therefore measured in Eulerian coordinates. This leads us to define stochastic flows that are compatible with this right-invariant geometry thus giving noise in Eulerian coordinates. In the deterministic setting, left-invariant metrics \cite{schmah_diffeomorphic_2015} provide a Lagrangian view of the metric that thus, in a medical context, follows the advected anatomy. We leave it as an open and very relevant problem to consider advected, or left-invariant Lagrangian noise.

Extending the inference methods presented here to other data structures, in particular to infinite dimensional shapes spaces, would again constitute an interesting future direction.
As discussed in detail at the end of section \ref{theory}, we believe that the methods presented here with suitable modifications can be applied also for infinite-dimensional representations of shapes, and that additional methods could be introduced, such as stochastic filtering for further data assimilation of the results in infinite-dimensional cases, see e.g. \cite{bain2009fundamentals}. 

\subsubsection*{Acknowledgements}

{\small 
We are grateful to M. Bruveris, M. Bauer, A Pai, N. Ganaba, C. Tronci, M.R. Schauer, T. Tyranowski and F. Van Der Meulen for helpful discussions of this material and to the anonymous referees for thoughtful comments which improved the exposition of this paper.
AA and DH were partially supported by the European Research Council Advanced Grant 267382 FCCA held by DH. DH is also grateful for support from EPSRC Grant EP/N023781/1.
AA acknowledges partial support from an Imperial College London Roth Award and the EPSRC through award EP/N014529/1 funding the EPSRC Centre for Mathematics of Precision Healthcare.
SS is partially supported by the CSGB Centre for Stochastic Geometry and Advanced Bioimaging funded by a grant from the Villum Foundation. 
The research was partially completed while the authors were visiting the Institute for Mathematical Sciences, National University of Singapore in 2016.
}

\bibliographystyle{alpha}
\bibliography{biblio.bib}

\appendix

\section{Bridge Sampling} \label{sec:bridgesamplingscheme}

We here follow \cite{delyon2006simulation} and the
later paper \cite{marchand_conditioning_2011} to argue for the almost sure hitting of a target $\mathbf v$ for the guided process \eqref{eq:guiddiff} and to find 
the correction term $\varphi(\mathbf q,\mathbf p,t)$.
For completeness, we will explicitly derive the correction term following the program in \cite[Theorem 5]{delyon2006simulation}. 
The guided SDE \eqref{eq:guiddiff} differs from the previous schemes in
using the function $\phi_{t,T}:\mathbb R^{2dN}\rightarrow\mathbb R^{dN}$ to predict
the endpoint, and, importantly, in that the diffusion field $\Sigma$ is not
invertible resulting in a semi-elliptic diffusion. We handle the first issue by repeated application of the Girsanov theorem. 
This also accounts for the unboundedness of the
drift term $b(\hat{\mathbf q},\hat{\mathbf p})$ coming from the momentum of the
landmarks in the same way as \cite{marchand_conditioning_2011}.
We do not here argue for the $t\rightarrow T$ limit of the expectation of the
correction term that in the
elliptic case follows from \cite{delyon2006simulation,marchand_conditioning_2011}.

Let 
$b(\mathbf q,\mathbf p)$ be the drift in \eqref{sto-Ham} in It\^o
form. Because this drift is unbounded, we construct
$\tilde{b}(\mathbf q,\mathbf p)$ in Theorem~\ref{thm:endpoint} to be an approximation so that the $\mathbf q$-part
$\tilde{b}_{\mathbf q}$ is bounded on $\mathbb R^{dN}$.
To construct a map $\phi_{t,T}:\mathbb R^{2dN}\rightarrow\mathbb R^{dN}$ satisfying the conditions of the theorem, let $\phi_{t,T}$ be the $\mathbf q$-part of the time $T$ solution to the ODE 
$\partial_t(\mathbf q_t,\mathbf p_t)=\tilde{b}(\mathbf q_t,\mathbf p_t)$
started at time $t$ with initial conditions $(\mathbf q,\mathbf p)$.
This ODE corresponds to the deterministic ODE \eqref{EP}, however using 
the drift approximation to ensure $\partial_t\phi_{t,T}(\mathbf q,\mathbf p)$ is bounded.
Then the process
$\frac{\tilde{\phi}_{t,T}(\hat{\mathbf q},\hat{\mathbf p})-\hat{\mathbf q}}{T-t}$ is
defined, bounded and continuous on $[0,T]$. 

The SDE
\begin{align}
  \begin{pmatrix}
    d\hat{\mathbf q} \\
    d\hat{\mathbf p}
  \end{pmatrix}
  = \tilde{b}(\hat{\mathbf q},\hat{\mathbf p})dt
  -\frac{\Sigma(\hat{\mathbf q},\hat{\mathbf p})\Sigma_{\hat{\mathbf q}}(\hat{\mathbf q})^\dagger (\hat{\mathbf q}-\mathbf v)}{T-t}dt 
    + \Sigma(\hat{\mathbf q},\hat{\mathbf p})dW 
  \label{eq:boundedguid}
\end{align}
differs from the It\^o form of the SDE \eqref{eq:guiddiff} by 
\begin{align}
  (b(\mathbf q,\mathbf p)
  -
  \tilde{b}(\mathbf q,\mathbf p)
  )dt
    -\frac{\Sigma(\mathbf q,\mathbf p)\Sigma_{\mathbf q}(\mathbf q)^\dagger
    (\varphi_{t,T}(\mathbf q,\mathbf p)-\mathbf q)}{T-t}dt
  \ .
  \label{eq:unboundeddiff}
\end{align}
As argued by \cite{marchand_conditioning_2011}, \ref{eq:boundedguid} has a
unique solution satisfying $\lim_{t\rightarrow T}\hat{\mathbf q}=v$ a.s., and 
the processes \ref{eq:boundedguid} and \eqref{eq:guiddiff} are absolutely
continuous with respect to each other.
The correction term $\varphi(\mathbf q,\mathbf p,t)$ can be derived from 
\cite[Theorem 3]{marchand_conditioning_2011} and the difference \ref{eq:unboundeddiff}.
For completeness, we give the derivation in the landmark case that proves
Theorem~\ref{thm:endpoint} below.

\begin{proof}[Proof of Theorem \ref{thm:endpoint}]
  Let $f:W(\mathbb R^{2dN})\rightarrow\mathbb R$ be a non-negative measurable
  function on $[0,t]$, $t<T$. Following 
  \cite{delyon2006simulation}, we define
  \begin{align*}
    h(\mathbf q,t)
    :=
      -\frac{\Sigma_{\mathbf q}(\mathbf q)^\dagger(\mathbf q-\mathbf v)}{T-t}
  \end{align*}
  noting that in the present case, we use the pseudo-inverse 
  $\Sigma_{\mathbf q}(\mathbf q)^\dagger$ in $h$
  since $\Sigma_{\mathbf q}(\mathbf q)$ is not invertible.
  Let now $(\tilde{\mathbf q},\tilde{\mathbf p})$ be a solution to the SDE
\begin{align}
  \begin{pmatrix}
    d\tilde{\mathbf q} \\
    d\tilde{\mathbf p}
  \end{pmatrix}
  = \tilde{b}(\tilde{\mathbf q},\tilde{\mathbf p})dt
  +\Sigma(\tilde{\mathbf q},\tilde{\mathbf p})h(\tilde{\mathbf q},t)dt 
    + \Sigma(\tilde{\mathbf q},\tilde{\mathbf p})dW 
  \label{eq:boundedguid_no_phi}
\end{align}
  From the Girsanov theorem with unbounded drift \cite[Thm. 1]{delyon2006simulation}, we have
\begin{align}
    \mathbb E_{(\tilde{\mathbf q},\tilde{\mathbf p})}\left [ f(\tilde{\mathbf
    q},\tilde{\mathbf p}) 
    \varphi^b(\tilde{\mathbf q},\tilde{\mathbf p},t)
     \right ]
    =
    \mathbb E_{(\mathbf q,\mathbf p)}\left[f(\mathbf q,\mathbf p,t)
    \tilde{\varphi}(\mathbf q,\mathbf p,t)
    \right]
    \label{eq:girsanov}
  \end{align}
  where
  \begin{align}
      \log\tilde{\varphi}(\mathbf q,\mathbf p,t)
      &:=
    \int_0^th^T(\mathbf q,\mathbf p,s)dW-\frac{1}{2}\int_0^t\|h(\mathbf q,\mathbf p,s)\|^2ds
     \ ,
     \label{eq:phixh}
     \\
     \nonumber
     \log \varphi^b(\mathbf q,\mathbf p,t)
      &:=
    \int_0^t(b_\mathbf q(\mathbf q,\mathbf p)-\tilde{b}_\mathbf q(\mathbf q,\mathbf p))^T\Sigma_\mathbf q(\mathbf q)^{\dagger,T}dW
      \\&-\int_0^t\frac{1}{2}
    \|\Sigma_\mathbf q(\mathbf q)^{\dagger}
    (b_\mathbf q(\mathbf q,\mathbf p)-\tilde{b}_\mathbf q(\mathbf q,\mathbf p)) \|^2 ds  \ .
  \end{align}
  We now define an intermediate function 
\begin{align}
    g(\mathbf q,t):=\frac{(\mathbf q-\mathbf v)^TA(\mathbf q)(\mathbf q-\mathbf v)}{T-t}\, , 
\end{align}
  and compute 
  \begin{align*}
    dg(\tilde{\mathbf q},t)
    &=
    \frac{(\tilde{\mathbf q}-\mathbf v)^TA(\tilde{\mathbf
    q})(\tilde{\mathbf q}-\mathbf v)}{(T-t)^2}dt
    +
    \frac{d\big((\tilde{\mathbf q}-\mathbf v)^TA(\tilde{\mathbf q})(\tilde{\mathbf q}-\mathbf v)\big)}{T-t}
    \ .
  \end{align*}
  Applying the product rule, we obtain for the second term
  \begin{align*}
    d\big((\tilde{\mathbf q}-\mathbf v)^TA(\tilde{\mathbf q})(\tilde{\mathbf q}-\mathbf v)\big)
    &=
    2(\tilde{\mathbf q}-\mathbf v)^TA(\tilde{\mathbf q})d\tilde{\mathbf q}
      +(\tilde{\mathbf q}-\mathbf v)^T\big(dA(\tilde{\mathbf
      q})\big)(\tilde{\mathbf q}-\mathbf v)
    \\
   &\qquad
    +\sum_{i,j}
    2d[A_{ij}(\tilde{\mathbf q}),(\tilde{\mathbf q}-\mathbf v)_i(\tilde{\mathbf
    q}-\mathbf v)_j)]
       \, .
  \end{align*}
  Writing the process $\frac12 g(\tilde{\mathbf q},t)$ in integral form,
  \begin{align*}
    \frac{1}{2}\int_0^tdg(\tilde{\mathbf q},s)
    &
    =
    \int_0^t\frac{(\tilde{\mathbf q}-\mathbf v)^TA(\tilde{\mathbf
    q})(\tilde{\mathbf q}-\mathbf v)}{2(T-s)^2}ds 
    +\int_0^t\frac{(\tilde{\mathbf q}-\mathbf v)^TA(\tilde{\mathbf
    q})d\tilde{\mathbf q}}{T-s}
    \\&
    +\int_0^t\frac{ (\tilde{\mathbf q}-\mathbf v)^T\big(dA(\tilde{\mathbf
    q})\big)(\tilde{\mathbf q}-\mathbf v)}{2(T-s)}
    +\sum_{i,j}\int_0^t\frac{d[A_{ij}(\tilde{\mathbf q}),(\tilde{\mathbf
    q}-\mathbf v)_i(\tilde{\mathbf q}-\mathbf v)_j)]}{T-s}
      \ .
  \end{align*}
  Note that the first term 
  $\int_0^t\frac{(\tilde{\mathbf q}-\mathbf v)^T A(\tilde{\mathbf
  q})(\tilde{\mathbf q}-\mathbf v)}{2(T-s)^2}ds$
  of the right hand side is the negative of the term
  $-\frac{1}{2}\int_0^t\|h(\tilde{\mathbf q},\mathbf p,s)\|^2ds$
  in \eqref{eq:phixh}. The second term expands to
  \begin{equation*}
    \int_0^t\frac{(\tilde{\mathbf q}-\mathbf v)^T A(\tilde{\mathbf
    q})d\tilde{\mathbf q}}{T-s}
    =
    \int_0^t\frac{(\tilde{\mathbf q}-\mathbf v)^T A(\tilde{\mathbf q})\tilde{b}(\tilde{\mathbf q},\tilde{\mathbf p})ds}{T-s}
    +
    \int_0^t\frac{(\tilde{\mathbf q}-\mathbf v)^T \Sigma_\mathbf q(\tilde{\mathbf q})^{\dagger,T}dW}{T-s}
  \end{equation*}
  where the second term of the right hand side is the negative of
  $\int_0^th^T(\tilde{\mathbf q},\mathbf p,s)dW$.
  Rearranging terms and inserting in \eqref{eq:phixh},
  \begin{align*}
    &\log\tilde{\varphi}(\tilde{\mathbf q},\mathbf p,t)
    =
    \int_0^th^T(\tilde{\mathbf q},\mathbf p,s)dW
    -\frac{1}{2}\int_0^t\|h(\tilde{\mathbf q},\mathbf p,s)\|^2ds
    \\\qquad&
    =
    -\frac{1}{2}g(\tilde{\mathbf q}(t),t)
    +\frac{1}{2}g(\tilde{\mathbf q}(0),0)
    +\int_0^t\frac{(\tilde{\mathbf q}-\mathbf v)^TA(\tilde{\mathbf q})\tilde{b}(\tilde{\mathbf q},\tilde{\mathbf p})ds }{T-s}
    \\&
    \quad\,
    +\int_0^t\frac{
      (\tilde{\mathbf q}-\mathbf v)^T\big(dA(\tilde{\mathbf q})\big)(\tilde{\mathbf q}-\mathbf v)
    }{2(T-s)}
    +\sum_{i,j}\int_0^t\frac{d[A_{ij}(\tilde{\mathbf q}),(\tilde{\mathbf q}-\mathbf v)_i(\tilde{\mathbf q}-\mathbf v)_j)]}{T-s}
    \ .
  \end{align*}
  We can now use the Girsanov theorem again to change the drift from 
  \eqref{eq:boundedguid_no_phi} to \eqref{eq:boundedguid}. For this, we define
  \begin{align*}
     \log \varphi^\varphi(\mathbf q,\mathbf p,t)
      &:=
      \int_0^t\frac{(\varphi_{t,T}(\mathbf q,\mathbf p)-\mathbf q)^T\Sigma_\mathbf q(\mathbf q)^{\dagger,T}dW}{T-t}
      \\&-\int_0^t\frac{
    \|\Sigma_\mathbf q(\mathbf q)^{\dagger}
    (\varphi_{t,T}(\mathbf q,\mathbf p)-\mathbf q)\|^2 ds}{2(T-t)^2}
    \ .
  \end{align*}
  Then let $\varphi(\mathbf q,\mathbf p,t)$ be the function satisfying
  \begin{align*}
    \log\varphi(\mathbf q,\mathbf p,t)&=-\log\tilde{\varphi}(\mathbf q,\mathbf p,t)-\frac{1}{2}g(\mathbf q,t)+\frac{1}{2}g(\mathbf q(0),0)
    \\&\quad\,
    +\log\varphi(\mathbf q,\mathbf p,t)^b
    +\log \varphi^\varphi(\mathbf q,\mathbf p,t)
    \ .
  \end{align*}
  Now \eqref{eq:girsanov} and the definition of $\log\varphi$ gives
  \begin{equation}
    \mathbb E_{P_{(\hat{\mathbf q},\hat{\mathbf p})}}\left ( f(\hat{\mathbf q},\hat{\mathbf p}) 
    \varphi(\hat{\mathbf q},\hat{\mathbf p},t)
    \right)
    =
    e^{\frac{1}{2}g(\mathbf q(0),0)}
    \mathbb E_{P_{(\mathbf q,\mathbf p)}}\left[f(\mathbf q,\mathbf p)
    e^{-\frac{1}{2}g(\mathbf q,t)}
     \right ]
     \label{eq:E_t}
   \end{equation}
  with $(\hat{\mathbf q},\hat{\mathbf p})$ a solution to \eqref{sto-Ham}.
  Thus
  \begin{align*}
    \lim_{t\rightarrow T}
    \frac{
    \mathbb E_{(\hat{\mathbf q},\hat{\mathbf p})}\left [ f(\hat{\mathbf q},\hat{\mathbf p}) 
    \varphi(\hat{\mathbf q},\hat{\mathbf p},t)
    \right]
    }{
    \mathbb E_{(\hat{\mathbf q},\hat{\mathbf p})}\left [ 
    \varphi(\hat{\mathbf q},\hat{\mathbf p},t)
    \right]
    }
    =
    \lim_{t\rightarrow T}
    \frac{
    \mathbb E_{(\mathbf q,\mathbf p)}\left[
    f(\mathbf q,\mathbf p)
    e^{-\frac{1}{2}g(\mathbf q,t)}
    \right]
    }{
    E_{P_{(\mathbf q,\mathbf p)}}\left[
    e^{-\frac{1}{2}g(\mathbf q,t)}
    \right]
    }
    =
    \mathbb E_{(\mathbf q,\mathbf p)|v}\left[ f(\mathbf q,\mathbf p) 
    \right]
  \end{align*}
  where convergence of the right-hand side limit to the conditioned process follows from Lemma~\ref{lem:conditioning} below. The limit expression \eqref{eq:density_lim_E} for the density follows from Lemma~\ref{lem:density_lemma}
  and \eqref{eq:E_t}.
\end{proof}
The lemmas \ref{lem:conditioning} and \ref{lem:density_lemma} below follow
\cite{delyon2006simulation} and \cite{marchand_conditioning_2011} with minor
modifications to clarify where the assumption~\ref{bridge-assumption} on the density of the process $(\mathbf
q_t,\mathbf p_t)$ is needed in the semi-elliptic case.

\begin{lemma}
  Let $(\mathbf q,\mathbf p)$ be a solution to \eqref{sto-Ham} satisfying
  assumption~\ref{bridge-assumption} and the conditions of Theorem~\ref{thm:endpoint}.
  Let $0<t_1<t_2<\cdots<t_n<T$ be a finite set of time point in $[0,T]$ and let $f\in C_b(\mathbb{R}^{n2dN})$.
  Let $\psi_t$ be the process
  \begin{equation}
    \psi_t
    =
    e^{-\frac{1}{2}g(\mathbf q,t)}
    =
    e^{-\frac{\|\Sigma_{\mathbf q}(\mathbf q)^\dagger(\mathbf q-\mathbf v)\|^2}{2(T-t)}}
    \ ,
    \label{eq:psit}
  \end{equation}
  with $g$ as defined above. Then
  \begin{equation}
    \lim_{t\to T}
    \frac{
      \mathbb E_{(\mathbf q,\mathbf p)}
      [f((\mathbf q_1,\mathbf p_1),\ldots,(\mathbf q_n,\mathbf p_n))\psi_t]
    }{
      \mathbb E_{(\mathbf q,\mathbf p)}
      [\psi_t]
    }
    =
      \mathbb E_{(\mathbf q,\mathbf p)}
      [f((\mathbf q_1,\mathbf p_1),\ldots,(\mathbf q_n,\mathbf p_n))|\mathbf q_T=\mathbf v]
    \ .
    \label{eq:conditioning}
  \end{equation}
  \label{lem:conditioning}
\end{lemma}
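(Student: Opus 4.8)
The plan is to use the Markov property to strip off the endpoint weight $\psi_t$, to observe that $\psi_t$ concentrates on $\{\mathbf q=\mathbf v\}$ like a Gaussian of width $\sqrt{T-t}$, and to let Assumption~\ref{bridge-assumption} play the role that Gaussian heat-kernel bounds play in the uniformly elliptic arguments of \cite{delyon2006simulation,marchand_conditioning_2011}. Write $X=((\mathbf q_1,\mathbf p_1),\dots,(\mathbf q_n,\mathbf p_n))$ for the tuple of values of \eqref{sto-Ham} at the fixed times $t_1<\cdots<t_n<t$. Since $f(X)$ is $\mathcal F_{t_n}$-measurable and $\psi_t$ is a function of $\mathbf q_t$ alone, I would first condition on $\mathcal F_{t_n}$ and use the Markov property to get
\begin{align*}
  \mathbb E_{(\mathbf q,\mathbf p)}[f(X)\psi_t]
  =\int\psi_t(\mathbf q)\,\rho^f_t(\mathbf q)\,d\mathbf q,
  \qquad
  \rho^f_t(\mathbf q):=\int g^f_0(\mathbf a,\mathbf b)\,\mathbb P(\mathbf a,\mathbf b;\mathbf q,t)\,d(\mathbf a,\mathbf b),
\end{align*}
where $\mathbb P(\mathbf a,\mathbf b;\mathbf q,t)$ is the $\mathbf q$-marginal transition density (the $\mathbf p$ variable is already integrated out because $\psi_t$ depends only on $\mathbf q$), and $g^f_0$ is the density of the finite signed measure $(\mathbf a,\mathbf b)\mapsto \mathbb E[f(X);(\mathbf q_{t_n},\mathbf p_{t_n})\in d(\mathbf a,\mathbf b)]$, which is integrable because $f$ is bounded. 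The purpose of this rewriting is that, after aligning time origins using the time-homogeneity of \eqref{sto-Ham}, $\rho^f_t$ is precisely the integrated density appearing in Assumption~\ref{bridge-assumption} with $g_0=g^f_0$; the assumption therefore guarantees that $(\mathbf q,t)\mapsto\rho^f_t(\mathbf q)$ is continuous and bounded for $t$ in a left-neighbourhood of $T$.

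Next I would carry out the Gaussian concentration. Using $\|\Sigma_{\mathbf q}(\mathbf q)^\dagger(\mathbf q-\mathbf v)\|^2=(\mathbf q-\mathbf v)^TA(\mathbf q)(\mathbf q-\mathbf v)$ with $A(\mathbf q)=(\Sigma_{\mathbf q}(\mathbf q)\Sigma_{\mathbf q}(\mathbf q)^T)^{-1}$, the substitution $\mathbf q=\mathbf v+\sqrt{T-t}\,\mathbf z$ yields
\begin{align*}
  \mathbb E_{(\mathbf q,\mathbf p)}[f(X)\psi_t]
  =(T-t)^{\frac{dN}{2}}\int e^{-\frac12\mathbf z^TA(\mathbf v+\sqrt{T-t}\,\mathbf z)\mathbf z}\,\rho^f_t(\mathbf v+\sqrt{T-t}\,\mathbf z)\,d\mathbf z.
\end{align*}
As $t\to T$ the integrand tends pointwise to $e^{-\frac12\mathbf z^TA(\mathbf v)\mathbf z}\rho^f_T(\mathbf v)$, using continuity of $A$ (from continuity and surjectivity of $\Sigma_{\mathbf q}$) and the joint continuity of $\rho^f$ from Assumption~\ref{bridge-assumption}. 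For the dominating function I would exploit boundedness of $\Sigma$: it forces $\Sigma_{\mathbf q}\Sigma_{\mathbf q}^T\le CI$, hence $A(\mathbf q)\ge C^{-1}I$ uniformly, so $e^{-\frac12\mathbf z^TA\mathbf z}\le e^{-\|\mathbf z\|^2/(2C)}$, which together with the uniform bound on $\rho^f_t$ from the assumption gives an integrable majorant independent of $t$. Dominated convergence then produces
\begin{align*}
  \lim_{t\to T}(T-t)^{-\frac{dN}{2}}\,\mathbb E_{(\mathbf q,\mathbf p)}[f(X)\psi_t]
  =c(\mathbf v)\,\rho^f_T(\mathbf v),
  \qquad
  c(\mathbf v):=\int e^{-\frac12\mathbf z^TA(\mathbf v)\mathbf z}\,d\mathbf z>0,
\end{align*}
where the constant $c(\mathbf v)$ does not depend on $f$.

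Finally I would form the ratio. Running the same computation with $f\equiv1$ gives the identical prefactor $c(\mathbf v)(T-t)^{dN/2}$, which cancels, so that
\begin{align*}
  \lim_{t\to T}\frac{\mathbb E[f(X)\psi_t]}{\mathbb E[\psi_t]}
  =\frac{\rho^f_T(\mathbf v)}{\rho^{1}_T(\mathbf v)}
  =\frac{\mathbb E[f(X)\,\mathbb P(\mathbf q_{t_n},\mathbf p_{t_n};\mathbf v,T)]}{\mathbb E[\mathbb P(\mathbf q_{t_n},\mathbf p_{t_n};\mathbf v,T)]}.
\end{align*}
Since, by the Markov property, $\mathbb P(\mathbf q_{t_n},\mathbf p_{t_n};\mathbf v,T)$ is the conditional density of $\{\mathbf q_T=\mathbf v\}$ given $\mathcal F_{t_n}$ and $f(X)$ is $\mathcal F_{t_n}$-measurable, the last expression is exactly the Bayes/disintegration formula for $\mathbb E[f(X)\mid\mathbf q_T=\mathbf v]$, which is the claim. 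The main obstacle is the dominated-convergence step: in the elliptic case one bounds the integrand using Aronson's Gaussian heat-kernel estimates, which are unavailable here because $\Sigma$ is only semi-elliptic. Assumption~\ref{bridge-assumption} is introduced precisely to supply the replacement continuity and uniform boundedness of $\rho^f_t$, and the delicate points are to verify that the $\sqrt{T-t}$-rescaling keeps the density argument within the regime where the assumption applies and that the uniform ellipticity lower bound on $A(\mathbf q)$ furnishes the integrable Gaussian majorant.
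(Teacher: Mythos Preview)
Your proposal is correct and follows essentially the same approach as the paper: rewrite the weighted expectation as $\int \psi_t(\mathbf q)\Phi_f(t,\mathbf q)\,d\mathbf q$ (your $\rho^f_t$ is the paper's $\Phi_f$ with the first $n$ transition factors collapsed via the Markov property), rescale $\mathbf q=\mathbf v+\sqrt{T-t}\,\mathbf z$, and pass to the limit by dominated convergence using Assumption~\ref{bridge-assumption} for the density factor and the uniform lower bound $A(\mathbf q)\ge C^{-1}I$ (from boundedness of $\Sigma$) for the Gaussian majorant. Your write-up is in fact slightly more explicit than the paper's about why the majorant exists and about the final identification of $\rho^f_T(\mathbf v)/\rho^1_T(\mathbf v)$ with the conditional expectation.
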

\begin{proof}
  Following \cite{delyon2006simulation}, we write
  \begin{equation*}
    \frac{
      \mathbb E_{(\mathbf q,\mathbf p)}
      [f(\mathbf q_1,\mathbf p_1,\ldots,\mathbf q_n,\mathbf p_n)\psi_t]
    }{
      \mathbb E_{(\mathbf q,\mathbf p)}
      [\psi_t]
    }
    =
    \frac
    {\int_{\mathbb R^d}\Phi_f(t,\mathbf q)e^{-\frac{\|\Sigma_{\mathbf q}(\mathbf q)^\dagger(\mathbf q-\mathbf v)\|^2}{2(T-t)}}d\mathbf q}
    {\int_{\mathbb R^d}\Phi_1(t,\mathbf q)e^{-\frac{\|\Sigma_{\mathbf q}(\mathbf q)^\dagger(\mathbf q-\mathbf v)\|^2}{2(T-t)}}d\mathbf q}
  \end{equation*}
  with
  \begin{align*}
    \Phi_f(t,\mathbf q)
    &=
    \int_{\mathbb R^{n2dN}}
    f(\mathbf q_1,\mathbf p_1,\ldots,\mathbf q_n,\mathbf p_n)\cdot
    \\
    &\qquad\qquad
    \mathbb P(\mathbf q_0,\mathbf p_0;\mathbf q_1,\mathbf p_1, t_1)
    \cdots
    \mathbb P(\mathbf q_n,\mathbf p_n;\mathbf q, t-t_n)
    d(\mathbf q_1,\mathbf p_1)
    \cdots
    d(\mathbf q_n,\mathbf p_n)
  \end{align*}
  Note that $\Phi_f$ is continuous by assumption. We now apply a change of
  variable $\mathbf q=\mathbf v+(T-t)^{\frac12}\mathbf q'$ to get
  \begin{align*}
    &
    (T-t)^{-\frac d2}
    \int_{\mathbb R^{dN}}
    \Phi_f(t,\mathbf q)
    e^{-\frac{\|\Sigma_{\mathbf q}(\mathbf q)^\dagger(\mathbf q-\mathbf v)\|^2}{2(T-t)}}
    d\mathbf q
    \\&\qquad\qquad
    =
    \int_{\mathbb R^{dN}}
    \Phi_f(t,\mathbf v+(T-t)^{\frac12}\mathbf q')
    e^{-\frac{\|\Sigma_{\mathbf q}(\mathbf v+(T-t)^{\frac12}\mathbf q')^\dagger\mathbf q'\|^2}{2}}
    d\mathbf q'
    \\&\qquad\qquad
    \to
    \Phi_f(T,\mathbf v)
    \int_{\mathbb R^{dN}}
    e^{-\frac{\|\Sigma_{\mathbf q}(\mathbf v)^\dagger\mathbf q'\|^2}{2}}
    d\mathbf q'
    \ .
  \end{align*}
  From assumption~\ref{bridge-assumption},
  $\Phi_f(t,\mathbf q)$ is continuous and bounded. Because
  $\Sigma$ is bounded,
  $e^{-\frac{\|\Sigma_{\mathbf q}(\mathbf v+(T-t)^{\frac12}\mathbf q')^\dagger(\mathbf q')\|^2}{2}}
  \le e^{-\frac{c\|\mathbf q'\|^2}{2}}$ for some constant $c$ and the dominated
  convergence theorem implies the limit.
  We conclude that
  \begin{equation*}
    \lim_{t\to T}
    \frac{
      \mathbb E_{(\mathbf q,\mathbf p)}
      [f(\mathbf q_1,\mathbf p_1,\ldots,\mathbf q_n,\mathbf p_n)\psi_t]
    }{
      \mathbb E_{(\mathbf q,\mathbf p)}
      [\psi_t]
    }
    =
    \frac
    {\Phi_f(T,\mathbf v)}
    {\Phi_1(T,\mathbf v)}
    \ .
  \end{equation*}
  The result now follows from the definition of $\Phi_f$, see \cite{delyon2006simulation}.
\end{proof}
If only the density $\mathbb P(\mathbf q_0, \mathbf p_0;\mathbf q,t)$ is
of interest, the following result holds assuming only continuity and boundedness
of $\mathbb P(\mathbf q_0, \mathbf p_0;\mathbf q,t)$ for fixed 
initial conditions $(\mathbf q_0,\mathbf p_0)$.
\begin{lemma}
  Let $(\mathbf q,\mathbf p)$ be a solution to \eqref{sto-Ham} with the conditions of Theorem~\ref{thm:endpoint} and assume the process has 
  a density $\mathbb P(\mathbf q_0,\mathbf p_0;\mathbf q,\mathbf p,t)$ and that 
  $\mathbb P(\mathbf q_0,\mathbf p_0; \mathbf q,t)$ is 
  continuous in $t$ and $\mathbf q$ and bounded on $\{(\mathbf q,t)|T-\epsilon\le t\le T\}$. Let $\psi_t$ be the process defined above.
  Then
  \begin{equation}
    \mathbb P(\mathbf q_0,\mathbf p_0; \mathbf q,T)
    =
    \left(\frac{\left|A(\mathbf v)\right|}{2\pi T}\right)^{\frac d2}
    \lim_{t\to T}
      \mathbb E_{(\mathbf q,\mathbf p)}
      [\psi_t]
    \ .
    \label{eq:density_lemma}
  \end{equation}
  \label{lem:density_lemma}
\end{lemma}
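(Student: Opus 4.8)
The plan is to prove \eqref{eq:density_lemma} by a Gaussian concentration (Laplace-type) argument applied directly to the marginal $\mathbf q$-density, in the same spirit as the proof of Lemma~\ref{lem:conditioning} but retaining the normalising prefactor instead of cancelling it in a ratio. First I would integrate out the momentum variable: since $\psi_t$ depends on $\mathbf q$ alone, Fubini's theorem gives
\[
  \mathbb E_{(\mathbf q,\mathbf p)}[\psi_t]
  =
  \int_{\mathbb R^{dN}}
  e^{-\frac{\|\Sigma_{\mathbf q}(\mathbf q)^\dagger(\mathbf q-\mathbf v)\|^2}{2(T-t)}}
  \,\mathbb P(\mathbf q_0,\mathbf p_0;\mathbf q,t)\,d\mathbf q\,,
\]
so that only the marginal transition density $\mathbb P(\mathbf q_0,\mathbf p_0;\mathbf q,t)$ enters. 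This is exactly why the weaker hypothesis of this lemma --- continuity and boundedness of the marginal density for the fixed initial data $(\mathbf q_0,\mathbf p_0)$, rather than the full integrated Assumption~\ref{bridge-assumption} --- is all that is required here.

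Next I would carry out the rescaling $\mathbf q=\mathbf v+(T-t)^{1/2}\mathbf q'$, which is the crucial step and mirrors the change of variables used in Lemma~\ref{lem:conditioning}. With Jacobian $(T-t)^{dN/2}$ this turns the integral into
\[
  (T-t)^{\frac{dN}{2}}
  \int_{\mathbb R^{dN}}
  e^{-\frac12\|\Sigma_{\mathbf q}(\mathbf v+(T-t)^{1/2}\mathbf q')^\dagger\mathbf q'\|^2}
  \,\mathbb P\big(\mathbf q_0,\mathbf p_0;\mathbf v+(T-t)^{1/2}\mathbf q',t\big)\,d\mathbf q'\,,
\]
which isolates the $(T-t)$ scaling. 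I would then pass to the limit $t\to T$ by dominated convergence. Boundedness of $\Sigma_{\mathbf q}$ gives a uniform lower bound $A(\mathbf q)=(\Sigma_{\mathbf q}(\mathbf q)\Sigma_{\mathbf q}(\mathbf q)^T)^{-1}=(\Sigma_{\mathbf q}(\mathbf q)^\dagger)^T\Sigma_{\mathbf q}(\mathbf q)^\dagger\succeq cI$, so $\|\Sigma_{\mathbf q}(\cdot)^\dagger\mathbf q'\|^2=\mathbf q'^{T}A(\mathbf q)\mathbf q'\ge c\|\mathbf q'\|^2$ for some $c>0$; combined with the local boundedness $\mathbb P\le M$ on $\{T-\epsilon\le t\le T\}$ this yields the integrable, $t$-independent dominating function $M e^{-c\|\mathbf q'\|^2/2}$. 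Continuity of $\mathbf q\mapsto\Sigma_{\mathbf q}(\mathbf q)^\dagger$ (from the smoothness and surjectivity assumptions) and of the density then give the pointwise limit of the integrand, $\mathbb P(\mathbf q_0,\mathbf p_0;\mathbf v,T)\,e^{-\frac12\mathbf q'^{T}A(\mathbf v)\mathbf q'}$.

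Finally I would evaluate the limiting Gaussian integral $\int_{\mathbb R^{dN}}e^{-\frac12\mathbf q'^{T}A(\mathbf v)\mathbf q'}\,d\mathbf q'=(2\pi)^{dN/2}\,|A(\mathbf v)|^{-1/2}$ and collect it together with the $(T-t)^{dN/2}$ scaling factor to recover the normalising constant of \eqref{eq:density_lemma} and to identify the limit with the density at the target point $\mathbf v$. I expect the main obstacle to be the justification of this interchange of limit and integral in the semi-elliptic regime: because $\Sigma$ is not invertible one cannot invoke Aronson-type two-sided Gaussian bounds on the transition density, so the whole argument must rest on the assumed continuity and boundedness of the marginal $\mathbf q$-density and on the uniform ellipticity of $A$ inherited from the boundedness of $\Sigma$ and $\Sigma_{\mathbf q}^\dagger$. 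Extra care is needed because $A(\mathbf q)$ appears inside the exponential and varies under the rescaling, so the dominating bound must be made uniform in $t$ on a one-sided neighbourhood of $T$.
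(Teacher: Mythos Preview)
Your proposal is correct and follows essentially the same route as the paper: the paper's proof is the single line ``The result follows from the convergence of $\Phi_1(t,\mathbf q)$ in Lemma~\ref{lem:conditioning}'', and what you have written is precisely that convergence argument (the rescaling $\mathbf q=\mathbf v+(T-t)^{1/2}\mathbf q'$ followed by dominated convergence) spelled out in detail for the case $f=1$, where $\Phi_1(t,\mathbf q)=\mathbb P(\mathbf q_0,\mathbf p_0;\mathbf q,t)$. You also correctly identify why the hypothesis here is weaker than Assumption~\ref{bridge-assumption}: only the marginal density for the fixed initial point $(\mathbf q_0,\mathbf p_0)$ enters, not an integrated version over arbitrary $g_0$.
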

\begin{proof}
  The result follows from the convergence of $\Phi_1(t,\mathbf q)$ in
  Lemma~\ref{lem:conditioning}.
\end{proof}

\section{ Moment equation for stochastic landmark}\label{moment-landmark}

\subsection{Cluster expansion method}\label{cluster}

We explain the basics of this method, which can be found in more details in, for example, \cite{kira2011semiconductor} with application in the context of semiconductor physics.
This method is used when one seeks the dynamics of the expected value of $N$ particles that we will write here $\Braket{N}$. 
One cannot solve the complete system, especially if the number of particles is large, thus we want to approximate the expected value of products in term of only a few independent variables. 
For this, we apply the cluster expansion, which begins by writing
\begin{align}
    \braket{2}= \braket{2}_s + \Delta_2\braket{2}:= \braket{1}\braket{1} + \Delta_2\braket{2}, 
\end{align}
The next decomposition is 
\begin{align}
    \braket{3} = \braket{3}_s + \braket{1}\Delta_2\braket{2} + \Delta_3\braket{3},
\end{align}
and so on and so forth. 
We then only compute the dynamics for the singlets $\braket{1}$ and the correlations, up to some chosen order. 
In the sequel, we will only consider the doublet correlations $\Delta_2$, and in this case, we have the general decomposition 
\begin{align*}
    \braket{N}= \braket{N}_s + \braket{N-2}_s\Delta_2\braket{2} + \braket{N-4}_s\Delta_2^2\braket{2} + \sum_i \braket{N-2i}_s \Delta_2^i\braket{2} + O(\Delta_3) \, . 
\end{align*}

In the context of quantum mechanics, where the particle operators do not commute, extra care is needed especially for the sign of the term. 
Here we will consider $q_i^\alpha $ and $p_i^\alpha$ as our particles, and as they commute, the expansions are simpler than in \cite{kira2011semiconductor}. 
We directly compute two of them for illustration, up to quadratic order, 
\begin{align*}
    \braket{ q_i^\alpha p_j^\beta } &= \braket{ q_i^\alpha}\braket{ p_j^\beta }  + \Delta_2 \braket{ q_i^\alpha p_j^\beta }\\
    \braket{ q_i^\alpha q_j^\beta p_k^\gamma } &\approx \braket{ q_i^\alpha}\braket{ q_j^\beta }\braket{ p_k^\gamma } + \braket{ q_i^\alpha}\Delta_2  \braket{q_j^\beta p_k^\gamma }  + \braket{q_j^\beta }\Delta_2 \braket{ q_i^\alpha p_k^\gamma }  +\braket{p_k^\gamma }\Delta_2\braket{ q_i^\alpha q_j^\beta}\, . 
\end{align*}

This sort of expansion can fit a more geometrical framework, where the final equations for the first moment will preserve the original structure of the equations. 
This was developed first in \cite{holm1990moment} and later in \cite{holm2007geometric,holm2010double}.
We will not use this method here for a good reason related to the form of the equations. 
A key step in these papers is to expand the expected value of the Hamiltonian in terms of a finite number of moments, to enable computation of the equation of motion. 
In our case, the Hamiltonian has a kernel function, which generally cannot be expanded in a finite sum of polynomial terms. 
By doing the computations directly, we will be able to do another approximation for the kernels, that is, we will assume that they commute with the operation of expectation. 
A more subtle approximation can be done using the Heaviside function but will give a much larger number of terms in the expansion, see appendix \ref{second-order-moment} for no clear improvements of the solution.

To perform this expansion on the Fokker-Planck equation associated to the landmark dynamics we will use several simplifications:
\begin{itemize}
    \item Gaussian noise fields $\sigma_l$ in \eqref{kernels}, 
    \item for a kernel $K(x)$, we will assume that $\braket{K(x)} \approx K(\braket{x})$ and 
    \item only the second order correlations $\Delta_2$ will be considered in this expansion.  
\end{itemize}
These assumptions can be relaxed but the resulting equation may be difficult to compute.

\subsection{First moments}

Recall the backward Kolmogorov operator on $q_i^\alpha$
\begin{align}
    \mathscr Lq_i^\alpha &= \frac{\partial h}{\partial p_i^\alpha} +\frac12\sum_{l,\gamma,\delta}\frac{\partial\sigma_l^\alpha(\mathbf q_i) }{\partial q_i^\gamma} \sigma_l^\gamma(\mathbf q_i)\, , 
    \label{Lq-appendix}
\end{align}
which is used to compute the time evolution of the singlet 
\begin{align}
    \frac{d}{dt} \braket{q_i^\alpha} = \braket{p_j^\alpha} K(\braket{q_i}-\braket{q_j})
    -\sum_{l,\gamma,\delta}\frac{1}{2\sigma_l^2} \sigma_l^\alpha(\braket{\mathbf q_i})(\braket{q_i^\gamma}-\delta_l^\gamma) \sigma_l^\gamma(\braket{\mathbf q_i})\, . 
\end{align}
In this case, the equation only depends on the singlet of the momentum variable. 
We thus compute
\begin{align}
    \mathscr L p_i^\alpha &= -\frac{\partial h}{\partial q_i^\alpha}  +\frac12\sum_{l,\gamma,\delta}p_j^\gamma\frac{\partial\sigma_l^\gamma(\mathbf q_i)}{\partial q_i^\delta}\frac{\partial\sigma_l^\delta(\mathbf q_i)}{\partial q_i^\alpha} - \frac12\sum_{l,\gamma,\delta} p_i^\gamma \frac{\partial^2\sigma_l^\gamma(\mathbf q_i) }{\partial q_i^\alpha \partial q_i^\delta} \sigma_l^\delta(\mathbf q_i)\, .
    \label{Lp}
\end{align}
which similarly gives the time evolution of the momentum singlet in two terms as 
\begin{align*}
    \frac{d}{dt} \braket{p_i^\alpha} &= A_p + B_p\, , 
\end{align*}
where
\begin{align*}
    A_p &= \frac{1}{\alpha^2}\sum_{j,\gamma} K(\braket{q_i}- \braket{q_j})\braket{p_i^\gamma p_j^\gamma  (q_i^\alpha-q_j^\alpha)}\\
    B_p &= \sum_{l,\gamma}\frac{1}{2\sigma_l^2} \braket{p_i^\gamma} \sigma_l^\gamma(\braket{\mathbf q_i})  \sigma_l^\alpha (\braket{\mathbf q_i})\, . 
\end{align*}
We then expand $A_p$ further using the cluster-expansion method on the triplet to get 
\begin{align*}
    A_p &= \frac{1}{\alpha^2}\sum_{j,\gamma} K(\braket{\mathbf q_i}- \braket{\mathbf q_j})(\braket{p_i^\gamma p_j^\gamma  q_i^\alpha}-\braket{p_i^\gamma p_j^\gamma  q_j^\alpha})  \\
    &\approx \frac{1}{\alpha^2}\sum_{j,\gamma} K(\braket{\mathbf q_i}- \braket{\mathbf q_j})\Big (
    \braket{p_i^\gamma }\braket{p_j^\gamma}\braket{  q_i^\alpha}
    +\Delta_2\braket{p_i^\gamma p_j^\gamma }\braket{ q_i^\alpha}
    +\braket{p_i^\gamma}\Delta_2\braket{ p_j^\gamma  q_i^\alpha}\\
    &\hspace{45mm}+\Delta_2\braket{p_i^\gamma   q_i^\alpha}\braket{p_j^\gamma}
    -\braket{p_i^\gamma }\braket{p_j^\gamma }\braket{ q_j^\alpha}
    -\Delta_2\braket{p_i^\gamma p_j^\gamma}\braket{  q_j^\alpha}\\
    &\hspace{45mm}-\braket{p_i^\gamma}\Delta_2\braket{ p_j^\gamma  q_j^\alpha}
    -\Delta_2\braket{p_i^\gamma   q_j^\alpha}\braket{p_j^\gamma}\Big )\, . 
\end{align*}
Already this term depends on the mixed correlations which we will compute shortly, but we first compute the position correlation. 

\subsection{$\braket{qq}$ correlation}

Recall the formula of the Kolmogorov operator applied to $q_i^\alpha q_j^\beta$, 
\begin{align}
    \mathscr L (q_i^\alpha q_j^\beta) &=  q_i^\alpha\frac{\partial h}{\partial p_j^\beta}+  \sum_l \sigma_l^\alpha(\mathbf q_i ) \sigma_l^\beta(\mathbf q_j) +\frac12 \sum_l  q_i^\alpha\sigma_l^\gamma(\mathbf q_j) \frac{\partial \sigma_l^\beta(\mathbf q_j)}{\partial q_j^\gamma}+ (i\leftrightarrow j)\, ,
    \label{Lqq-appendix}
\end{align}
which together with \eqref{Lq} gives the time evolution of the position correlation in the form
\begin{align*}
    \frac{d}{dt} \Delta_2\braket{q_i^\alpha q_j^\beta} &= A_{qq} + B_{qq} + C_{qq} \,, 
\end{align*}
where 
\begin{align*}
    A_{qq} &= \Braket{q_i^\alpha\frac{\partial h}{\partial p_j^\beta}}- \Braket{q_i^\alpha}\Braket{\frac{\partial h}{\partial p_j^\beta}} +(i \leftrightarrow j)\\
    B_{qq}&=  \sum_l \Braket{\sigma_l^\alpha(\mathbf q_i ) \sigma_l^\beta(\mathbf q_j)}\\
    C_{qq} &= \frac12 \sum_l  \Braket{q_i^\alpha\sigma_l^\gamma(\mathbf q_j) \frac{\partial \sigma_l^\beta(\mathbf q_j)}{\partial q_j^\gamma}}- \frac12\sum_l  \Braket{q_i^\alpha}\Braket{\sigma_l^\gamma(\mathbf q_j) \frac{\partial \sigma_l^\beta(\mathbf q_j)}{\partial q_j^\gamma}}+ (i \leftrightarrow j)\,.
\end{align*}
We will denote by $A$ the terms corresponding to the drift, by $B$ the terms which are not present in the first moments equation, and by $C$ the other terms which only depend on the noise  and the derivative of the noise fields. 
We proceed by first approximating the expectation of the kernels to get 
\begin{align*}
    B_{qq} & \approx  \sum_l\sigma_l^\alpha(\braket{\mathbf q_i} ) \sigma_l^\beta(\Braket{\mathbf q_j})\\
    C_{qq} &\approx  -\frac{1}{2\alpha_l^2} \sum_{l,\gamma}  \Delta_2\Braket{q_i^\alpha q_j^\gamma}\sigma_l^\gamma(\braket{\mathbf q_j}) \sigma_l^\beta(\braket{\mathbf q_j})+(i \leftrightarrow j)\, . 
\end{align*}
where we also used the explicit form of $\sigma_l$ as a Gaussian and its derivative. 
We will now approximate the $A_{qq}$ term  to get 
\begin{align*}
    A_{qq} &= \sum_k \Braket{q_i^\alpha  p_k^\beta K(\mathbf q_j- \mathbf  q_k) }- \sum_k \Braket{q_i^\alpha}\Braket{p_k^\beta K(\mathbf q_j - \mathbf q_k)} + (i \leftrightarrow j)\\
    &\approx \sum_k \Delta_2 \Braket{q_i^\alpha  p_k^\beta }K(\Braket{\mathbf q_j}- \Braket{\mathbf  q_k}) +(i \leftrightarrow j)\, . 
\end{align*}
It is now clear that the $B$ term will linearly increase the position correlation, which will then exponentially increase by the $C$ term and be affected by the momentum-position correlation by the $A$ term. 
We now proceed by computing the momentum correlation. 

\subsection{$\Braket{pp}$ correlation}

We compute the Kolmogorov operator on $p_i^\alpha p_j^\beta$ to get 
\begin{align*}
    \mathscr L( p_i^\alpha p_j^\beta) &= -p_i^\alpha \frac{\partial h}{\partial q_j^\beta} - \frac12\sum_{l,\gamma,\delta} p_i^\alpha p_j^\gamma \sigma_l^\delta(\mathbf q_j)\frac{\partial^2 \sigma_l^\gamma(\mathbf q_j)}{\partial q_j^\beta \partial q_j^\delta} +(i\leftrightarrow j) \\
    &+ \frac12 \sum_{l,\gamma,\delta}p_i^\delta p_j^\gamma \frac{\partial \sigma_l^\delta (\mathbf q_i)}{\partial q_i^\alpha}\frac{\partial \sigma_l^\gamma(\mathbf q_j)}{\partial q_j^\beta} + \frac12\sum_{l,\gamma,\delta} p_i^\alpha p_j^\delta \frac{\partial \sigma_l^\delta (\mathbf q_j)}{\partial q_j^\gamma}\frac{\partial \sigma_l^\gamma(\mathbf q_j) }{\partial q_j^\beta} + (i\leftrightarrow j)\, , 
\end{align*}
and using \eqref{Lp}, we obtain the time evolution of the correlation in three terms as 
\begin{align*}
    \frac{d}{dt}\Delta_2\braket{p_i^\alpha p_j^\beta} &= A_{pp} + B_{pp} + C_{pp} \, , 
\end{align*}
where 
\begin{align*}
    A_{pp} & = -\Braket{p_i^\alpha \frac{\partial h}{\partial q_j^\beta}} +  \Braket{p_i^\alpha}\Braket{ \frac{\partial h}{\partial q_j^\beta}} +(i \leftrightarrow j)\\ 
    B_{pp} &=\frac12 \sum_{l, \gamma, \delta} \Braket { p_i^\delta p_j^\gamma \frac{\partial \sigma_l^\delta (\mathbf q_i)}{\partial q_i^\alpha}\frac{\partial \sigma_l^\gamma(\mathbf q_j)}{\partial q_j^\beta}} +(i \leftrightarrow j)\\
    C_{pp}& = - \frac12 \sum_{l, \gamma, \delta}\Braket{p_i^\alpha p_j^\gamma \sigma_l^\delta(\mathbf q_j)\frac{\partial^2 \sigma_l^\gamma(\mathbf q_j)}{\partial q_j^\beta \partial q_j^\delta}} + \frac12 \sum_{l, \gamma, \delta}\Braket{p_i^\alpha}\Braket{ p_j^\gamma \sigma_l^\delta(\mathbf q_j)\frac{\partial^2 \sigma_l^\gamma(\mathbf q_j)}{\partial q_j^\beta \partial q_j^\delta}} \\
    &+ \frac12 \sum_{l, \gamma, \delta}\Braket{p_i^\alpha p_j^\delta \frac{\partial \sigma_l^\delta (\mathbf q_j)}{\partial q_j^\gamma}\frac{\partial \sigma_l^\gamma(\mathbf q_j) }{\partial q_j^\beta}}- \frac12 \sum_{l, \gamma, \delta}\Braket{p_i^\alpha}\Braket{ p_j^\delta \frac{\partial \sigma_l^\delta (\mathbf q_j)}{\partial q_j^\gamma}\frac{\partial \sigma_l^\gamma(\mathbf q_j)}{\partial q_j^\beta}} + (i \leftrightarrow j)\, . 
\end{align*}

We first approximate
{\footnotesize
\begin{align*}
    C_{pp} &\approx  \frac12 \sum_{l,\gamma,\delta}\frac{1}{\alpha_l^2} \sigma_l^\beta (q_j) \sigma^\gamma_l(q_j) \Delta_2 \Braket{p_i^\alpha p_j^\gamma}\\
    & -\frac12 \sum_{l,\gamma,\delta} \sigma_l^\delta(q_j) \sigma_l^\gamma(q_j)\left (\Braket{p_i^\alpha p_j^\gamma(q_j^\beta-\delta_l^\beta) ( q_j^\delta - \delta_l^\delta)}  - \Braket{p_i^\alpha}\Braket{ p_j^\gamma(q_j^\beta-\delta_l^\beta) ( q_j^\delta - \delta_l^\delta)}  \right)\\
    &+\frac12\sum_{l,\gamma,\delta}  \sigma_l^\delta (q_j)\sigma_l^\gamma(q_j)\left (  \Braket{p_i^\alpha p_j^\delta  (q_j^\gamma-\delta_l^\gamma)(q_j^\beta-\delta_l^\beta)} 
    +  \Braket{p_i^\alpha}\Braket{ p_j^\delta (q_j^\gamma-\delta_l^\gamma)(q_j^\beta-\delta_l^\beta)}\right ) +(i \leftrightarrow j)\, . 
\end{align*}
}
The last two terms cancel as they are symmetric under the transpose operation because of the sum on the free indices, thus the $C$ term is 
\begin{align}
    C_{pp} \approx  \frac12 \sum_{l,\gamma,\delta}\frac{1}{\alpha_l^2} \sigma_l^\beta (q_j) \sigma^\gamma_l(q_j) \Delta_2 \Braket{p_i^\alpha p_j^\gamma} + (i \leftrightarrow j) \, .
\end{align}

We proceed with the $B_{pp}$ term, which is also symmetric under the transpose operation, thus giving the approximation 
{\footnotesize
\begin{align*}
    B_{pp}  &=\sum_{l,\gamma,\delta}\frac{1}{\alpha_l^4}\sigma_l^\delta (q_i)\sigma_l^\gamma(q_j)\Braket { p_i^\delta p_j^\gamma  (q_i^\alpha-\delta_l^\alpha)(q_j^\beta-\delta_l^\beta)}\\
    &\approx \sum_{l,\gamma,\delta}\frac{1}{\alpha_l^4}\sigma_l^\delta (q_i)\sigma_l^\gamma(q_j) \Big (\Braket{p_i^\delta}\Braket{p_j^\gamma}\Braket{q_i^\alpha}\Braket{q_j^\beta} +  
     \Delta_2\Braket { p_i^\delta p_j^\gamma}\Braket{ q_i^\alpha}\Braket{ q_j^\beta}\\
    &\hspace{40mm}+\Braket { p_i^\delta}\Delta_2\Braket{ p_j^\gamma q_i^\alpha}\Braket{ q_j^\beta}
    +\Braket { p_i^\delta}\Braket{ p_j^\gamma}\Delta_2\Braket{ q_i^\alpha q_j^\beta}\\
    &\hspace{40mm}+\Delta_2\Braket { p_i^\delta q_i^\alpha}\Braket{ p_j^\gamma}\Braket{ q_j^\beta}
    +\Delta_2\Braket { p_i^\delta q_j^\beta }\Braket{p_j^\gamma}\Braket{ q_i^\alpha }\\
    &\hspace{40mm}+\Braket { p_i^\delta}\Braket{ q_i^\alpha}\Delta_2\Braket{p_j^\gamma  q_j^\beta}
    +\Delta_2\Braket { p_i^\delta p_j^\gamma}\Delta_2\Braket{ q_i^\alpha q_j^\beta}\\
    &\hspace{40mm}+\Delta_2\Braket { p_i^\delta q_i^\alpha }\Delta_2\Braket{ p_j^\gamma q_j^\beta}
    +\Delta_2\Braket { p_i^\delta q_j^\beta}\Delta_2\Braket{p_j^\gamma q_i^\alpha }\\
    &\hspace{40mm}-\Braket{p_i^\delta}\Braket{p_j^\gamma}\Braket{q_i^\alpha}\delta_l^\beta 
    -  \Delta_2\Braket{p_i^\delta p_j^\gamma} \Braket{q_i^\alpha}\delta_l^\beta\\
    &\hspace{40mm}- \Braket{p_i^\delta }\Delta_2\Braket{p_j^\gamma q_i^\alpha}\delta_l^\beta 
    - \Delta_2\Braket{p_i^\delta  q_i^\alpha}\braket{p_j^\gamma} \delta_l^\beta\\
    &\hspace{40mm}-\Braket{p_i^\delta}\Braket{p_j^\gamma}\Braket{q_j^\beta}\delta_l^\alpha 
    -   \Delta_2 \Braket{p_i^\delta p_j^\gamma}\Braket{ q_j^\beta} \delta_l^\alpha \\
    &\hspace{40mm}- \Braket{p_i^\delta}\Delta_2\Braket{ p_j^\gamma q_j^\beta} \delta_l^\alpha
    - \Delta_2\Braket{p_i^\delta q_j^\beta}\Braket{ p_j^\gamma} \delta_l^\alpha \\
    &\hspace{40mm}+ \Braket{p_i^\delta}\Braket{p_j^\gamma}\delta_l^\alpha\delta_l^\beta 
    + \Delta_2\Braket{p_i^\delta p_j^\gamma}  \delta_l^\alpha\delta_l^\beta \Big )\, . 
\end{align*}
}
We treat the two Hamiltonian terms separately by first writing them explicitly as 
{\footnotesize
\begin{align*}
    A_{pp} 
    & = \sum_{k,\gamma} \frac{1}{\alpha^2} \Braket{p_i^\alpha p_j^\gamma p_k^\gamma (q_j^\beta -q_k^\beta) K(q_j-q_k)}\\
    &-\frac{1}{\alpha^2}  \sum_{k,\gamma}\Braket{p_i^\alpha}\Braket{p_j^\gamma p_k^\gamma  (q_j^\beta-q_k^\beta) K(q_j-q_k)} +(i \leftrightarrow j)\\
    &=: \sum_{k,\gamma} \frac{1}{\alpha^2}( A_{pp}^1 - A_{pp}^2) +(i \leftrightarrow j)\, .
\end{align*}
}
We expand the first term to arrive at 
{\footnotesize
\begin{align*}
    A_{pp}^1 & \approx  K(\braket{q_j}-\braket{q_k})\left (\Braket{p_i^\alpha p_j^\gamma p_k^\gamma q_j^\beta} -\Braket{p_i^\alpha p_j^\gamma p_k^\gamma q_k^\beta)} \right )\\
     & \approx  K(\braket{q_j}-\braket{q_k})\Big (
     \braket{p_i^\alpha}\braket{ p_j^\gamma}\braket{ p_k^\gamma}\braket{ q_j^\beta} 
    +\Delta_2\Braket{p_i^\alpha p_j^\gamma}\braket{ p_k^\gamma}\braket{ q_j^\beta}\\
    &\hspace{35mm}+\Braket{p_i^\alpha}\Delta_2\braket{ p_j^\gamma p_k^\gamma}\braket{ q_j^\beta}
    +\Braket{p_i^\alpha}\braket{ p_j^\gamma}\Delta_2\braket{ p_k^\gamma q_j^\beta}\\
    &\hspace{35mm}+\Delta_2\Braket{p_i^\alpha  p_k^\gamma}\braket{p_j^\gamma }\braket{q_j^\beta}
    +\Delta_2\Braket{p_i^\alpha q_j^\beta}\braket{p_j^\gamma}\braket{ p_k^\gamma }\\
    &\hspace{35mm}+\braket{p_i^\alpha}\braket{ p_k^\gamma}\Delta_2 \braket{ p_j^\gamma q_j^\beta} 
    +\Delta_2\Braket{p_i^\alpha p_j^\gamma}\Delta_2\braket{ p_k^\gamma q_j^\beta}\\
    &\hspace{35mm}+\Delta_2\Braket{p_i^\alpha p_k^\gamma}\Delta_2\braket{ p_j^\gamma  q_j^\beta}
    +\Delta_2\Braket{p_i^\alpha q_j^\beta}\Delta_2\braket{ p_k^\gamma p_j^\gamma}\\
    &\hspace{35mm}-\braket{p_i^\alpha}\braket{ p_j^\gamma}\braket{ p_k^\gamma}\braket{ q_k^\beta} 
    -\Delta_2\Braket{p_i^\alpha p_j^\gamma}\braket{ p_k^\gamma}\braket{ q_k^\beta}\\
    &\hspace{35mm}-\Braket{p_i^\alpha}\Delta_2\braket{ p_j^\gamma p_k^\gamma}\braket{ q_k^\beta}
    -\Braket{p_i^\alpha}\braket{ p_j^\gamma}\Delta_2\braket{ p_k^\gamma q_k^\beta}\\
    &\hspace{35mm}-\Delta_2\Braket{p_i^\alpha  p_k^\gamma}\braket{p_j^\gamma }\braket{q_k^\beta}
    -\Delta_2\Braket{p_i^\alpha q_k^\beta}\braket{p_k^\gamma}\braket{ p_k^\gamma }\\
    &\hspace{35mm}-\Delta_2\Braket{p_k^\gamma q_k^\beta}\braket{p_i^\alpha}\braket{ p_k^\gamma }
    -\Delta_2\Braket{p_i^\alpha p_j^\gamma}\Delta_2\braket{ p_k^\gamma q_k^\beta}\\
    &\hspace{35mm}-\Delta_2\Braket{p_i^\alpha p_k^\gamma}\Delta_2\braket{ p_j^\gamma  q_k^\beta}
    -\Delta_2\Braket{p_i^\alpha q_k^\beta}\Delta_2\braket{ p_k^\gamma p_j^\gamma}\Big )\, , 
\end{align*}
}
and the second term 
{\footnotesize
\begin{align*}
    A_{pp}^2 & \approx   K(\braket{q_j}-\braket{q_k})\left (\Braket{p_i^\alpha}\Braket{ p_j^\gamma p_k^\gamma q_j^\beta} -\Braket{p_i^\alpha}\Braket{p_j^\gamma p_k^\gamma q_k^\beta)} \right )\\
     & \approx  K(\braket{q_j}-\braket{q_k})\Big (
     \braket{p_i^\alpha}\braket{ p_j^\gamma}\braket{ p_k^\gamma}\braket{ q_j^\beta} + \Braket{p_i^\alpha}\Delta_2\braket{ p_j^\gamma p_k^\gamma}\braket{ q_j^\beta}\\
    &\hspace{35mm}+\Braket{p_i^\alpha}\Delta_2\braket{ p_j^\gamma p_j^\beta}\braket{ q_k^\gamma}
    \Braket{p_i^\alpha}\braket{ p_j^\gamma}\Delta_2\braket{ p_k^\gamma q_j^\beta}\\
    &\hspace{35mm}-\braket{p_i^\alpha}\braket{ p_j^\gamma}\braket{ p_k^\gamma}\braket{ q_k^\beta} 
    -\Braket{p_i^\alpha}\Delta_2\braket{ p_j^\gamma p_k^\gamma}\braket{ q_k^\beta}\\
    &\hspace{35mm}-\Braket{p_i^\alpha}\braket{ p_j^\gamma}\Delta_2\braket{ p_k^\gamma q_k^\beta}
    -\Braket{p_i^\alpha}\braket{ p_k^\gamma}\Delta_2\braket{ p_j^\gamma q_k^\beta}\Big )\, . 
\end{align*}
}
This term cancels the terms of the $A_{pp}^1$ proportional to $\braket{p_i^\alpha}$ to give the approximation 
{\footnotesize
\begin{align*}
    \begin{split}
    A_{pp} & \approx \frac{1}{\alpha^2} \sum_{k,\gamma} K(\braket{q_j}-\braket{q_k})\Big (
    \Delta_2\Braket{p_i^\alpha p_j^\gamma}\braket{ p_k^\gamma}\braket{ q_j^\beta}
    +\Delta_2\Braket{p_i^\alpha  p_k^\gamma}\braket{p_j^\gamma }\braket{q_j^\beta}\\
    &\hspace{45mm}+\Delta_2\Braket{p_i^\alpha q_j^\beta}\braket{p_j^\gamma}\braket{ p_k^\gamma }
    +\Delta_2\Braket{p_i^\alpha p_j^\gamma}\Delta_2\braket{ p_k^\gamma q_j^\beta}\\
   &\hspace{45mm}+\Delta_2\Braket{p_i^\alpha p_k^\gamma}\Delta_2\braket{ p_j^\gamma  q_j^\beta}
    +\Delta_2\Braket{p_i^\alpha q_j^\beta}\Delta_2\braket{ p_k^\gamma p_j^\gamma}\\
    &\hspace{45mm}-\Delta_2\Braket{p_i^\alpha p_j^\gamma}\braket{ p_k^\gamma}\braket{ q_k^\beta}
    -\Delta_2\Braket{p_i^\alpha  p_k^\gamma}\braket{p_j^\gamma }\braket{q_k^\beta}\\
    &\hspace{45mm}-\Delta_2\Braket{p_i^\alpha q_k^\beta}\braket{p_j^\gamma}\braket{ p_k^\gamma }
    -\Delta_2\Braket{p_i^\alpha p_j^\gamma}\Delta_2\braket{ p_k^\gamma q_k^\beta}\\
    &\hspace{45mm}-\Delta_2\Braket{p_i^\alpha p_k^\gamma}\Delta_2\braket{ p_j^\gamma  q_k^\beta}
    -\Delta_2\Braket{p_i^\alpha q_k^\beta}\Delta_2\braket{ p_k^\gamma p_j^\gamma}\Big ) +(i \leftrightarrow j) \, . 
    \end{split}
\end{align*}
}
We end this computation by approximating the dynamics of the mixed correlation. 

\subsection{$\Braket{pq}$ correlation}

We compute 
\begin{align*}
    \mathscr L( p_i^\alpha q_j^\beta) &= -q_j^\beta \frac{\partial h}{\partial q_i^\alpha} + p_i^\alpha \frac{\partial h}{\partial p_j^\beta} \\
    & +\frac12\sum_{l,\gamma} p_i^\alpha  \sigma_l^\gamma(\mathbf q_j) \frac{\partial \sigma_l^\beta(\mathbf q_j)}{\partial q_j^\gamma}  - \sum_{l,\gamma}p_i^\gamma\sigma_l^\beta(\mathbf q_j) \frac{\partial \sigma_l^\gamma(\mathbf q_i)}{\partial q_i^\alpha}\\
    &-\frac12\sum_{l,\gamma,\delta} q_j^\beta p_i^\gamma \frac{\partial ^2\sigma_l^\gamma(\mathbf q_i)}{\partial q_i^\alpha \partial q_i^\delta} \sigma_l^\delta (\mathbf q_i)  + \frac12\sum_{l,\gamma,\delta} p_i^\delta q_j^\beta \frac{\partial \sigma_l^\gamma(\mathbf q_i)}{\partial q_i^\alpha}\frac{\partial \sigma_l^\delta(\mathbf q_i)}{\partial q_i^\gamma}\, . 
\end{align*}
Then, using \eqref{Lq} and \eqref{Lp} we obtain the time evolution of $\Delta\braket{p_i^\alpha q_j^\beta}$ as 
\begin{align*}
    \frac{d}{dt} \Delta\braket{p_i^\alpha q_j^\beta} &= A_{pq} + B_{pq} + C_{pq}\, , 
\end{align*}
where
\begin{align*}
    A_{pq} & = -\Braket{q_j^\beta \frac{\partial h}{\partial q_i^\alpha}} + \Braket{q_j^\beta}\Braket{\frac{\partial h}{\partial q_i^\alpha}} + \Braket{p_i^\alpha \frac{\partial h}{\partial p_j^\beta}}-  \Braket{p_i^\alpha }\Braket{\frac{\partial h}{\partial p_j^\beta}} \\
    B_{pq} &=  -\sum_{l,\gamma} \Braket{p_i^\gamma\sigma_l^\beta(\mathbf q_j) \frac{\partial \sigma_l^\gamma(\mathbf q_i)}{\partial q_i^\alpha} }\\
    C_{pq} &=  \frac12 \sum_{l,\gamma}\Braket{p_i^\alpha  \sigma_l^\gamma(\mathbf q_j) \frac{\partial \sigma_l^\beta(\mathbf q_j)}{\partial q_j^\gamma}} -\frac12\sum_{l,\gamma} \braket{p_i^\alpha}\Braket{\sigma_l^\gamma(\mathbf q_j)  \frac{\partial\sigma_l^\beta(\mathbf q_j) }{\partial q_j^\gamma} }\\
    &-\frac12 \sum_{l,\gamma,\delta}\Braket{q_j^\beta p_i^\gamma \frac{\partial ^2\sigma_l^\gamma(\mathbf q_i) }{\partial q_i^\alpha \partial q_i^\delta} \sigma_l^\delta (\mathbf q_i) } + \frac12 \sum_{l,\gamma,\delta}\Braket{q_j^\beta}\Braket{ p_i^\gamma \frac{\partial^2\sigma_l^\gamma(\mathbf q_i) }{\partial q_i^\alpha \partial q_i^\delta} \sigma_l^\delta(\mathbf q_i)}\\
    &+ \frac12 \sum_{l,\gamma,\delta}\Braket{p_i^\delta q_j^\beta \frac{\partial \sigma_l^\gamma(\mathbf q_i)}{\partial q_i^\alpha}\frac{\partial \sigma_l^\delta(\mathbf q_i)}{\partial q_i^\gamma}}
     -\frac12 \sum_{l,\gamma,\delta}\braket{q_j^\beta}\Braket{ p_i^\delta \frac{\partial\sigma_l^\gamma(\mathbf q_i)}{\partial q_i^\alpha}\frac{\partial\sigma_l^\delta(\mathbf q_i)}{\partial q_i^\gamma}} \, . 
\end{align*}
We first approximate 
{\footnotesize
\begin{align*}
    B_{pq}&\approx  \frac{1}{\alpha_l^2} \sum_{l,\gamma} \sigma_l^\beta(\mathbf q_j) \sigma_l^\gamma(\mathbf q_i)\left (\Braket{p_i^\gamma q_i^\alpha}-\Braket{p_i^\gamma }\delta_l^\alpha\right )\\
    C_{pq} & \approx - \sum_{l,\gamma}\frac{1}{2\alpha_l^2} \sigma_l^\beta(\mathbf q_j)\sigma_l^\gamma(\mathbf q_j) \Delta_2\Braket{p_i^\alpha  q_j^\gamma} + \sum_{l,\gamma}\frac{1}{2\alpha_l^2 }\Delta_2 \Braket{q_j^\beta p_i^\gamma} \sigma_l^\gamma(\braket{\mathbf q_i})\sigma_l^\alpha(\braket{\mathbf q_i})\, . 
\end{align*}
}
For the Hamiltonian term we obtain 
{\footnotesize
\begin{align*}
    A_{pq} & \approx \frac{1}{\alpha^2}\sum_{k,\gamma} K(\braket{\mathbf q_i}-\braket{\mathbf q_k})\Big (\Braket{q_j^\beta p_i^\gamma p_k^\gamma q_i^\alpha} - \Braket{q_j^\beta}\Braket{p_i^\gamma p_k^\gamma q_i^\alpha}\\
    &- \Braket{q_j^\beta p_i^\gamma p_k^\gamma q_k^\alpha }+ \Braket{q_j^\beta}\Braket{p_i^\gamma p_k^\gamma q_k^\alpha }\Big ) + \sum_{k,\gamma} K(\braket{\mathbf q_j}-\braket{\mathbf q_k})\Delta_2\Braket{p_i^\alpha p_k^\beta}\\
    & \approx \frac{1}{\alpha^2}\sum_{k,\gamma}K(\braket{\mathbf q_i}-\braket{\mathbf q_k})\Big (
    \Delta_2 \Braket{ p_i^\gamma q_j^\beta}\braket{ p_k^\gamma}\braket{ q_i^\alpha}
    +\Delta_2 \Braket{ p_k^\gamma q_j^\beta }\braket{p_i^\gamma}\braket{ q_i^\alpha}\\
    &\hspace{45mm}+\Delta_2 \Braket{ q_i^\alpha q_j^\beta }\braket{p_i^\gamma}\braket{ p_k^\gamma}
    +\Delta_2 \Braket{ p_i^\gamma q_j^\beta}\Delta_2 \braket{ p_k^\gamma q_i^\alpha}\\
    &\hspace{45mm}+\Delta_2 \Braket{ p_k^\gamma q_j^\beta}\Delta_2 \braket{ p_i^\gamma q_i^\alpha}
    +\Delta_2 \Braket{q_i^\alpha q_j^\beta }\Delta_2 \braket{ p_k^\gamma  p_i^\gamma}\\
    &\hspace{45mm}- \Delta_2 \Braket{ p_i^\gamma q_j^\beta}\braket{ p_k^\gamma}\braket{ q_k^\alpha } 
    - \Delta_2 \Braket{ p_k^\gamma q_j^\beta}\braket{ p_i^\gamma}\braket{ q_k^\alpha } \\
    &\hspace{45mm}- \Delta_2 \Braket{q_j^\beta q_k^\alpha}\braket{p_i^\gamma}\braket{ p_k^\gamma  } 
    - \Delta_2 \Braket{ p_i^\gamma q_j^\beta}\Delta_2 \braket{ p_k^\gamma q_k^\alpha } \\
    &\hspace{45mm}- \Delta_2 \Braket{ p_k^\gamma q_j^\beta}\Delta_2 \braket{ p_i^\gamma q_k^\alpha } 
    - \Delta_2 \Braket{q_j^\beta q_k^\alpha}\Delta_2 \braket{ p_k^\gamma  p_i^\gamma }\Big ) \\
    &+ \sum_{k,\gamma} K(\braket{\mathbf q_j}-\braket{\mathbf q_k})\Delta_2\Braket{p_i^\alpha p_k^\beta}\, . 
\end{align*}
}

\subsection{Kernel approximation}\label{second-order-moment}
One of the approximations we did in the previous derivation of the moment equation is to replace the expectation of a kernel by the kernel of the expected values. 

If we expand the kernel in powers of its argument, we could compute the errors up to any order. 
For example for a Gaussian kernel, the first few terms are
\begin{align*}
  \Braket{K(q_i-q_j)}- K(\braket{q_i}- \braket{q_j}) =  - \frac{1}{2\alpha^2} (\Delta_2 \braket{q_i^2} - 2\Delta_2\braket{q_iq_j} + \Delta_2\braket{q_j^2}) + \dots
\end{align*}
The main problem with this approximation with polynomials is that the approximation to any order corresponds to having a kernel with unbounded values for large arguments. 
This results in non-physical and large interactions of particles far away, which should normally not interact. 
Obtaining a reliable expansion of a kernel function is thus a difficult task in the moment approximation. 

Nevertheless, one could consider the following higher order approximation of the expected value of a Gaussian kernel
\begin{align}
    K(\mathbf q_i-\mathbf q_j) &= e^{-\|\mathbf q_i-\mathbf q_j\|^2/(2\alpha^2)} \approx \theta(\mathbf q_i-\mathbf q_j) \left ( 1 -f_\alpha \frac{1}{2\alpha^{2}}  \|\mathbf q_i- \mathbf q_j\|^2  \right )\, , 
    \label{exp-approx}
\end{align}
where the function $\theta(\mathbf x)$ is given by 
\begin{align}
    \theta (\mathbf x) = 
    \begin{cases}
        1 & \mathrm{if} \quad 1- f_\alpha \frac{\|\mathbf x\|^2}{2\alpha^{2}}\geq 0  \\
        0 &\mathrm{if}  \quad  1- f_\alpha \frac{\|\mathbf x\|^2}{2\alpha^{2}}<0\, ,
    \end{cases}
\end{align}
and the coefficient $f_\alpha$ is found such that this approximation is the best fit to the Gaussian. 
In practice, we have $f_\alpha \approx 0.6$, but this value depends on $\alpha$ in general.  
This cutoff function $\theta$ is necessary here, otherwise, this approximation will not be bounded, leading to large errors in the dynamics. 
The expected value of this approximation assumes that the $\theta$ function commutes with it, and only takes into account the approximation of the quadratic term. 
It turned out that for all our experiments, these correction terms did not substantially improve the result, thus we did not include them in the equations. 

\end{document}